\newcommand{\Var}{\mathrm{Var}}
\newcommand{\dd}{\mathrm{d}}
\newcommand{\tr}{\mathrm{tr}}
\pgfplotsset{compat=1.18}
\numberwithin{equation}{section}
\theoremstyle{plain}
\newtheorem{theorem}{Theorem}[section]
\newtheorem{proposition}[theorem]{Proposition}
\newtheorem{lemma}[theorem]{Lemma}
\theoremstyle{definition}
\theoremstyle{definition}
\newtheorem{assumption}[theorem]{Assumption}
\newcommand{\vect}[1]{\bm{#1}}
\newcommand{\E}{\mathbb{E}}
\newcommand{\R}{\mathbb{R}}
\newcommand{\Normal}{\mathcal{N}}
\DeclareMathOperator{\softmax}{softmax}
\title{Out-of-Distribution Detection in Molecular Complexes via Diffusion Models for Irregular Graphs}
\date{}
\begin{document}
\maketitle

\medskip
David Graber\textsuperscript{1,2}*, 
Victor Armegioiu\textsuperscript{1}*, 
Rebecca Buller\textsuperscript{3}, 
Siddhartha Mishra\textsuperscript{1}

\smallskip \scriptsize
$*$ These authors contributed equally to this work and share first authorship. Author order was determined by coin toss.

\bigskip 

\scriptsize
1 Seminar for Applied Mathematics, Department of Mathematics and ETH AI Center, ETH Zurich, 8092 Zurich, Switzerland\\ 
2 Institute for Computational Life Sciences, Zurich University of Applied Sciences, 8820 Waedenswil, Switzerland\\
3 Department of Chemistry, Biochemistry and Pharmaceutical Sciences, University of Bern, 3012 Bern, Switzerland

\bigskip
\textbf{Correspondence to:}\\ 
David Graber (\texttt{david.graber@sam.math.ethz.ch})\\
Victor Armegioiu (\texttt{victor.armegioiu@sam.math.ethz.ch})\\
Prof. Dr. Rebecca Buller (\texttt{rebecca.buller@unibe.ch})\\
Prof. Dr. Siddhartha Mishra (\texttt{siddhartha.mishra@ethz.ch})\\\\

\normalsize
\bigskip
\begin{abstract}
\end{abstract}

\noindent Predictive machine learning models generally excel on in-distribution data, but their performance degrades on out-of-distribution (OOD) inputs. Reliable deployment of machine learning models therefore requires robust OOD detection, yet identifying OOD data in irregular 3D graphs remains a fundamental challenge. We present the first unsupervised OOD detection framework specifically designed for 3D geometric graphs, using a generative diffusion model to approximate the joint distribution of geometry and discrete features. We demonstrate that probability-flow ODE trajectories provide a quantitative, per-complex notion of typicality with respect to the training distribution. By augmenting likelihoods with eighteen trajectory-level geometric features we effectively bypass the complexity bias inherent in standard density estimation. Our framework accurately identifies OOD protein-ligand complexes and anticipates the error profiles of independent binding-affinity predictors without requiring task-specific labels. In this sense, this OOD quantification can act as a certificate for benchmark datasets and the connected generalization claims.

\newpage
\tableofcontents

\begin{figure}[p]
    \centering
    \includegraphics[width=\textwidth]{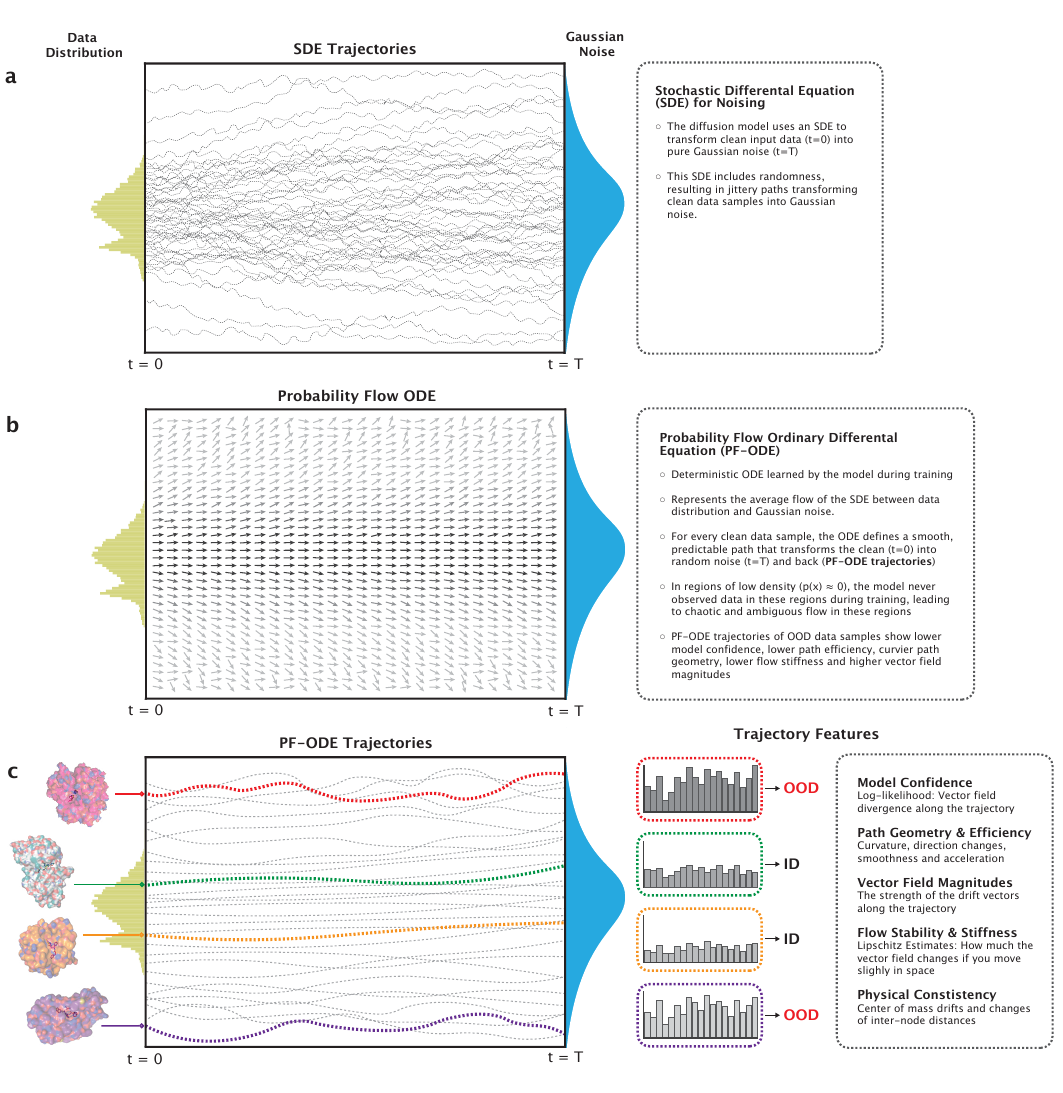}
    \caption{
\textbf{Schematic of out-of-distribution (OOD) detection via PF-ODE trajectory analysis:} \textbf{a)} Stochastic Differential Equation (SDE): The diffusion framework defines a forward process that transforms the input data distribution (protein-ligand complexes) into a Gaussian noise distribution via stochastic, noisy paths. \textbf{b)} Probability Flow ODE (PF-ODE): The trained model learns a deterministic vector field representing the average marginal flow of the SDE. This defines a continuous, reversible mapping between the noise and data spaces, transporting every sample along a unique, deterministic path (trajectory). \textbf{c)} Comparison of trajectories for in-distribution (ID) versus OOD samples: ID samples follow efficient, direct paths through regions where the vector field is well-constrained by training data. In contrast, OOD samples traverse low-density regions where the learned flow is ambiguous and chaotic due to a lack of training samples, resulting in erratic, high-tortuosity paths. Geometric features extracted from these trajectories, such as path efficiency, flow energy, and Lipschitz stability, serve as robust signals to detect OOD inputs.
    }
    \label{fig:concept}
\end{figure}

\newpage
\section{Introduction}

Out-of-distribution (OOD) detection is essential in machine learning, because real-world data often does not match the data distribution used for model training. In these cases, models often make overly confident predictions on unfamiliar or shifted inputs, which can lead to serious errors, especially in high-stakes domains like diagnostics or drug discovery. Effective OOD detection provides a safeguard by identifying when a model is operating outside its reliable domain, allowing users to reject predictions or handle uncertain cases more appropriately. This not only improves safety but also builds trust in machine learning models deployed in dynamic environments. 

\smallskip Traditional OOD detection approaches can be broadly categorized into supervised discriminative methods and unsupervised generative methods (see Section \ref{sec:related_work}). Discriminative approaches typically leverage scores derived from pretrained classifier outputs \cite{Hendrycks2017, Liang2017, Liu2020, Sun2021}. However, their dependence on in-distribution labels limits their applicability in unsupervised settings where such labels are unavailable. Consequently, unsupervised approaches typically employ generative models and fall into two primary sub-categories: reconstruction-based and likelihood-based methods. 

Reconstruction-based approaches \cite{AnCho2015, Yang2021, Ataeiasad2024} operate on the assumption that a generative model trained on in-distribution data will fail to accurately reconstruct OOD samples. However, these methods frequently suffer from reconstruction shortcuts, where powerful decoders successfully reconstruct simple OOD inputs despite their semantic novelty, rendering the reconstruction error metric unreliable \cite{Denouden2018, Zong2018}.

Alternatively, likelihood-based methods explicitly model the probability density of the training data, flagging test samples in low-density regions as OOD. These approaches frequently employ denoising diffusion probabilistic models (DDPMs) to capture this density. By learning the probability density function of the training distribution, the DDPM can output a scalar likelihood score indicating the degree to which a new sample matches the learned distribution \cite{Song2020, Goodier2023}. While theoretically sound, these models often struggle with the complexity bias \cite{Nalisnick2018, Serra2019, Kirichenko2020}, where low-complexity OOD data is assigned high likelihoods, leading to false negatives.
To mitigate these limitations, Heng et al. (2024) \cite{Heng2024} demonstrated that the rate-of-change and curvature of diffusion trajectories can serve as effective OOD signals for image benchmarks. Their trajectory-focused approach proved resistant to the complexity bias that often confounds likelihood-based OOD classification. 

Crucially, while OOD detection has been adapted for graph data \cite{Wu2023, Ma2021, Li2022, Shen2024}, existing methods predominantly focus on 2D topological graphs \cite{Ma2021, Li2022, Shen2024}. These approaches neglect the precise 3D geometry that is essential for modeling 3-dimensional data such as molecular complexes, leaving a critical gap in the growing field of geometric deep learning.

\smallskip Here, we present a novel OOD detection framework built upon a diffusion model tailored for 3D irregular graphs. By modeling likelihood scores jointly with an expanded set of 18 distinct diffusion trajectory features, we derive complementary, robust OOD detection signals. This strategy effectively bypasses the complexity bias inherent in isolated log-likelihood evaluation, enabling robust OOD detection within the 3D graph domain. We showcase this OOD detection approach using a database of protein-ligand interaction complexes called PDBbind \cite{Wang2004, Liu2015}, which are typically modeled as 3-dimensional geometric graphs combining discrete attributes (atom and residue types) with continuous spatial coordinates. Despite the complexity of this domain, our framework accurately identifies OOD complexes characterized by protein pockets unseen during training.

\bigskip To train a diffusion model on this complex data that combines 3D geometry and discrete graph features, we introduce a unified continuous diffusion framework that unifies both 3D geometry and discrete features within a single Euclidean state. By mapping the categorical data onto a continuous, spherical embedding space and concatenating these chemical embeddings with the 3D spatial coordinates, we can subject the entire molecular state to a single diffusion process. A single SE(3)-equivariant graph neural network (EGNN) consumes this joint state of coordinates and noisy categorical embeddings and outputs per-node clean coordinates and probability (logits) of each chemical type. We then calculate a posterior mean (essentially a weighted average of the learned chemical prototypes based on the predicted probabilities) and compute the score analytically from this "clean" signal. This method of posterior-mean interpolation allows us to train the model using standard cross-entropy loss while maintaining the smooth, continuous dynamics required for the diffusion process. 

\medskip To detect OOD samples using this diffusion model trained on in-distribution protein-ligand complexes, we make use of the specific properties of DDPMs: These generative models define a continuous forward process that gradually transforms a clean input structure into pure Gaussian noise using a stochastic differential equation (SDE, Figure \ref{fig:concept}a). Then, they learn a corresponding reverse process to travel back from noise to the original clean structure by solving the backward SDE.
This admits a mathematically equivalent formulation of the diffusion process known as the probability-flow ordinary differential equation (PF-ODE) \cite{Song2020}. This ODE defines a deterministic vector field representing the average marginal flow of the SDE, establishing a continuous, reversible mapping between the noise and data spaces (Figure \ref{fig:concept}b). This mapping transports every sample along a unique, deterministic path, which we refer to as its PF-ODE trajectory (Figure \ref{fig:concept}c). 

In this setting, we can compute the exact log-likelihood of any sample by integrating the divergence of the drift term along its PF-ODE trajectory. While this score theoretically estimates how OOD a sample is, it has been repeatedly shown to suffer from complexity bias, where very simple inputs are often assigned in-distribution likelihoods despite being OOD \cite{Nalisnick2018, Serra2019, Kirichenko2020}. To mitigate this, we augment the likelihood scores with additional features extracted from the sample's PF-ODE trajectories (Figure \ref{fig:concept}c). The underlying idea is that in-distribution (ID) and OOD samples are distinguishable by the geometric characteristics of their PF-ODE trajectories: ID samples follow efficient, direct paths through regions where the vector field is well-constrained by training data. In contrast, OOD samples traverse low-density regions where the learned flow is ill-defined due to a lack of similar training samples, resulting in erratic and chaotic trajectories. Features extracted from these trajectories, such as path efficiency, flow energy, and Lipschitz stability, serve as robust signals to detect OOD inputs.

\medskip To demonstrate the efficacy of this trajectory-focused approach in the 3D graph domain, we constructed a custom train-test split of the PDBbind database. We formed seven strict OOD test sets by systematically excluding entire protein families from the training data. This rigorous exclusion ensures that these proteins are never encountered during training and appear novel and atypical to the model at inference. Additionally, we incorporated the CASF2016 benchmark set and a standard validation set to serve as intermediate level OOD test sets. 

\smallskip After training the diffusion model on this specific split, we quantified the OOD signal inherent in PF-ODE trajectories and tested its ability to flag the OOD test sets as anomalies. Our analysis demonstrates that the derived log-likelihoods contain considerable OOD signal and are chemically meaningful. Across eight out of nine datasets, they correctly rank held out protein families as increasingly OOD, aligning with bioinformatically computed similarities. These likelihoods even correlate with the prediction errors of an independent binding affinity prediction model that was trained on the same data. However, despite this alignment, the log-likelihood distributions retained significant overlap with the training data, making robust separation of OOD and ID samples difficult. Crucially, for one dataset comprising low complexity interactions, the likelihoods wrongly suggested ID status, which indicates a severe complexity bias. Concluding that log-likelihoods alone are insufficient for reliable OOD detection, we incorporated additional PF-ODE trajectory statistics. This augmented feature space yields a substantially stronger OOD detector that consistently distinguishes ID from OOD complexes, successfully classifying the low complexity dataset that was previously misidentified.

\bigskip Overall, our results indicate that diffusion models can serve as practical OOD detectors. They can be integrated with deployed predictive machine learning models to provide a supporting confidence score and flag unreliable predictions. Crucially, we show that augmenting log-likelihoods with geometric features of the diffusion trajectories significantly increases detection accuracy and provides immunity against the complexity bias. While validated here within the challenging 3D graph domain, this OOD detection framework is inherently generalizable to any data modality amenable to continuous diffusion and requires no labels.

\clearpage
\section{Results}

To rigorously evaluate the diffusion model's OOD detection capability, we used a custom data split of the PDBbind (v.2020) dataset \cite{kopko2025generalization} with nine test datasets designed to simulate a spectrum of distribution shifts relative to the training data.

\begin{itemize}[itemsep=0.8pt]
    \item \textbf{Minimal shift (validation set):} A standard validation set drawn from the training distribution.
    
    \item \textbf{Intermediate shift (CASF2016):} We applied CleanSplit filtering \cite{GEMS} to the training data, removing structural overlaps with the CASF-2016 benchmark.
    
    \item \textbf{Strong shift (isolated protein families):} We constructed seven strict OOD sets by isolating entire protein families from the training data, including serine/threonine-protein kinases, estrogen receptors, HIV proteases, $\alpha$-carbonic anhydrases, urokinase-type plasminogen activators, HSP82 chaperones, and a cluster of transporter proteins (cluster centers 1nvq, 2p15, 3o9i, 3dd0, 1sqa, 2vw5, 3f3e; see Section \ref{met:Datasets}).
\end{itemize}

In the following sections, we first validate the OOD level of our test sets through an extensive bioinformatic similarity analysis, and then analyze the power of the diffusion model to detect them. We evaluate OOD detection performance based on PF-ODE log-likelihoods alone, as well as in combination with 18 additional trajectory statistics.

\subsection{Bioinformatic Validation of OOD Levels} 
\label{subsec:bioinformatics_validation}
To confirm the OOD levels of our test datasets, we analyzed the extent to which the validation, CASF2016, and the isolated OOD datasets are represented within the training data. We approximated this degree of representation by quantifying the number and the magnitude of similarities shared between the training complexes and the respective test dataset (Figure \ref{fig:tm_tanimoto_rmsd}). We computed pairwise similarities across all $19'443$ complexes in PDBbind using a combined assessment of protein similarity, ligand similarity and binding conformation similarity. TM-align \cite{Zhang2005} compares protein structures by finding the optimal alignment of their three-dimensional shapes, producing a TM-score ranging from 0 (very dissimilar fold) to 1 (identical fold). Tanimoto similarity \cite{Bajusz2015} is commonly used to measure the similarity between small molecules. Based on comparing chemical fingerprints, this score ranges between 0 (no similarity) and 1 (identical) and identifies compounds with similar structural and chemical properties. The pocket-aligned ligand root mean squared deviation (RMSD) completes the similarity assessment by comparing ligand positioning within aligned protein pockets.  

The distributions of TM-scores (Figure \ref{fig:tm_tanimoto_rmsd}b) reveal that proteins in the OOD datasets are severely underrepresented in the training data, confirming that the pocket-level splitting mechanism effectively isolates proteins with distinct folds. As anticipated, the validation set proteins are well represented withing the training data, while CASF-2016 proteins exhibit moderate similarity. 
Regarding ligand similarities, Tanimoto distributions (Figure \ref{fig:tm_tanimoto_rmsd}a) indicate that the OOD dataset ligands share lower similarity with the training data ligands than the validation set. This confirms that isolating specific protein types during our data splitting automatically induces a moderate shift in the ligand distribution.

\medskip While the Tanimoto and TM scores alone are informative, they are insufficient proxies for protein-ligand interaction similarity. This is because similar protein folds may be associated with entirely different ligands, and structurally similar ligands might bind to completely disparate proteins, thereby resulting in the absence of similarity at the protein–ligand interaction level. Only the combination of these metrics at the complex level, combined with a supplementary assessment of binding conformation similarity, can accurately quantify the structural similarity of protein-ligand complexes.
This information is captured by the aggregated similarity scores (Figure \ref{fig:tm_tanimoto_rmsd}c), calculated as the sum of Tanimoto, TM-align, and inverse ligand RMSD for each pair. This combined metric demonstrates that all OOD datasets share fewer and weaker complex-level similarities with the training data than the validation set. For many training complexes, no representation of the same binding pattern is found in these datasets whatsoever (aggregated similarity scores at zero). This confirms that these datasets are OOD with respect to the training data. The CASF2016 scores confirm that this dataset is moderately OOD and does not reach the same strict level of independence achieved by the OOD datasets.

\begin{figure}[t]
    \centering
    \includegraphics[width=\textwidth]{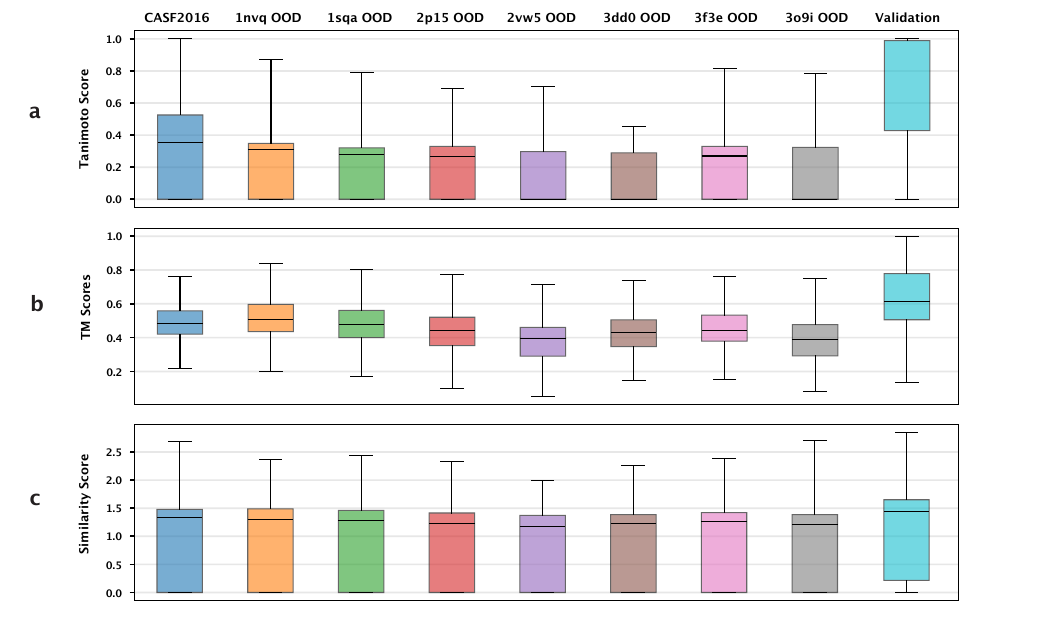}
    \caption{\textbf{Bioinformatic similarity analysis confirms OOD levels:}
    Comparison of the distributions of train-test similarity scores. 
    \textbf{a)} Ligand similarity: Calculated using Tanimoto scores between count-based molecular fingerprints. High scores indicate high similarity (1.0 = identical). 
    \textbf{b)} Protein similarity: Determined by TM-align based on optimal 3D protein structure alignment. High scores indicate high structural similarity (1.0 = identical) 
    \textbf{c)} Aggregated similarity: A composite score $S=max(Tanimoto+TMScore+(1-RMSD), 0)$ was calculated as the sum of Tanimoto similarity, TM-scores, and inverted pocket-aligned ligand root mean squared deviation (RMSD). High scores signify highly similar protein-ligand complexes (3.0 = identical).
    Each box represents the distribution of $N=10,510$ similarity scores. Specifically, these are the scores between each training dataset complex and its most similar counterpart in the respective test dataset, as measured by that specific metric. Boxplots show the median (centre line), 25th–75th percentiles (box), whiskers extend to data points within 1.5 × IQR, outliers are not shown.
    }
    \label{fig:tm_tanimoto_rmsd}
\end{figure}

\subsection{Dataset Log-Likelihood Distributions}
The log-likelihoods assigned to the training, validation, and OOD datasets generally aligned with expectations (Figure \ref{fig:lkhd_histograms}). As anticipated, the training dataset received high log-likelihoods, showing that this dataset fits the distribution learned by the diffusion model. The validation dataset received log-likelihoods slightly lower than the training set, which is expected for unseen samples drawn from the training distribution. The OOD datasets composed of extracted protein family clusters mostly yielded low log-likelihoods, indicating they were correctly identified as OOD. Finally, the CASF2016 dataset produced log-likelihoods intermediate to the training and OOD datasets. This aligns with the nature of the CleanSplit filtering used to ensure the independence of CASF2016: While it removes high-similarity complexes, it does not enforce the strict independence achieved by holding out entire protein families. Among the OOD datasets, the HIV protease family showed the most pronounced distribution shift in log-likelihoods relative to the training data. This finding aligns with biological expectations, as HIV protease complexes possess unique characteristics. These viral proteases contain an active site located between two short identical subunits, flanked by flexible molecular flaps that close upon substrate binding. Inhibitors targeting these enzymes are typically large molecules designed to mimic the proteolytic transition state without being cleaved, effectively blocking the active site. 

\begin{figure}[p]
    \centering
    \includegraphics[width=\textwidth]{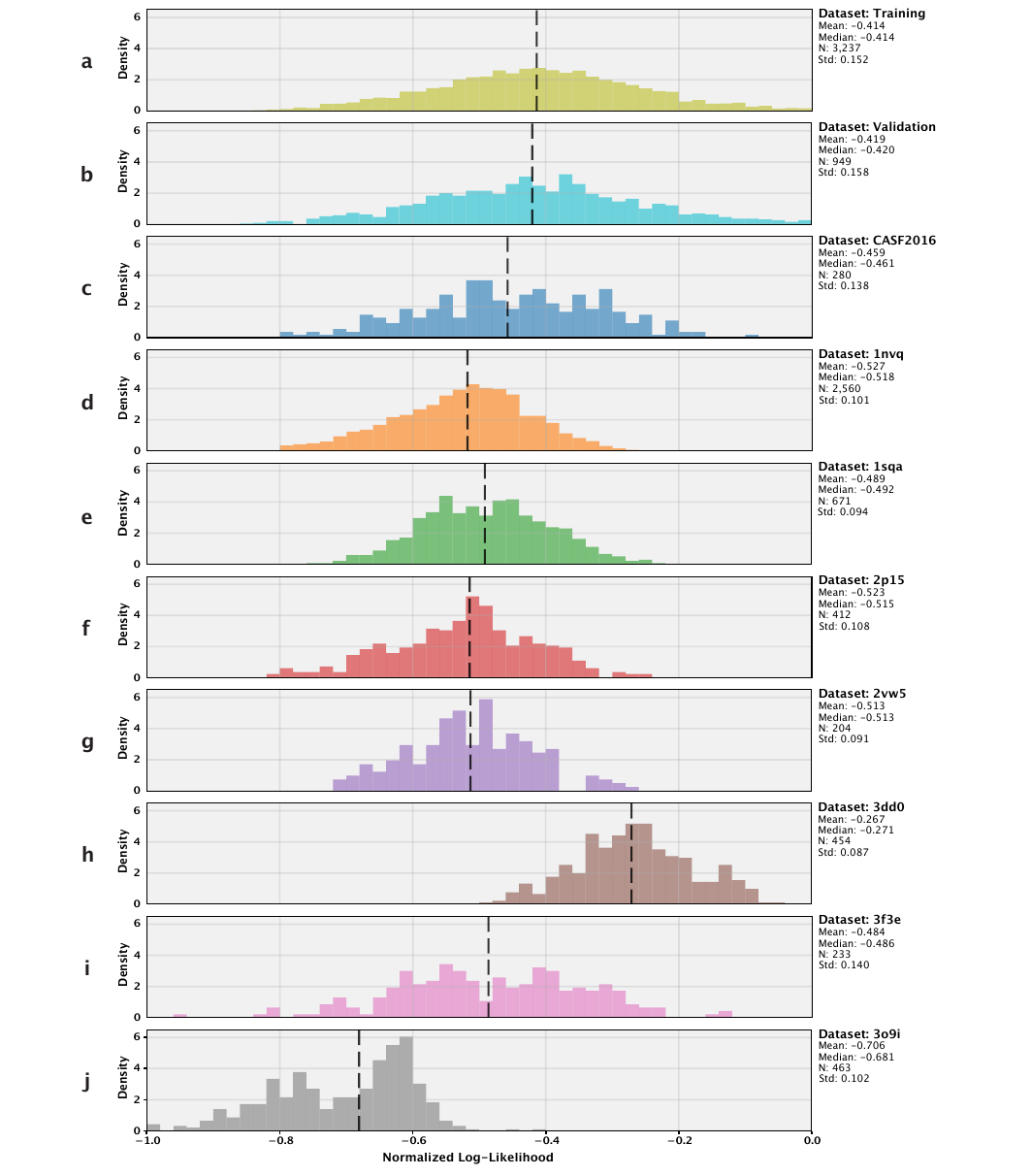}
    \caption{
    \textbf{Out-of-distribution datasets yield lower log-likelihoods:} Distributions of log-likelihoods assigned to protein-ligand complexes belonging to the \textbf{a)} training, \textbf{b)} validation, \textbf{c)} CASF2016 and the out-of-distribution (OOD) datasets \textbf{d)} 1nvq, \textbf{e)} 1sqa, \textbf{f)} 2p15, \textbf{g)} 2vw5, \textbf{h)} 3dd0, \textbf{i)} 3f3e and \textbf{j)} 3o9i. Lower values signify increased deviation from the learned distribution (OOD) and higher values indicate high in-distribution (ID) probability. Means, medians, standard deviations (Std) and number of samples (N) are depicted on the right side of the histograms. Vertical dashed lines indicate the median of the respective log-likelihood distributions. Distributions were subjected to individual outlier removal using the IQR method (1.5×IQR rule) followed by min-max normalization to a [-1, 0] range using global minimum and maximum values across all distributions. The training distribution was randomly subsampled, showing only a representative fraction of all complexes.
    }
    \label{fig:lkhd_histograms}
\end{figure}

\smallskip The spread of the log-likelihoods distributions was notably large across all datasets. This is expected for the inherently diverse training and validation datasets, but also for CASF2016, which is diverse by design. While the OOD datasets of extracted protein families generally showed lower log-likelihood variance, a considerable spread remained, suggesting that these datasets still contain typical as well as atypical complexes. This is likely attributable to our splitting strategy: While we sequestered specific protein families, the ligand space remained uncontrolled. Consequently, OOD pockets may bind a diverse mix of ligands, ranging from familiar to exotic structures, which naturally drives variation in the likelihood scores. 
The impact of ligand diversity on log-likelihood spread becomes evident by analyzing the ligand size distribution and chemical diversity within the datasets (Figures \ref{fig:pocket_ligand_sizes} and \ref{fig:intra_ligand_diversity}). For every dataset, we identified a medoid ligand, defined as the molecule with the smallest sum of distances ($1 - \text{Tanimoto similarity}$) to all other ligands, and computed the distance of all other ligands to this central reference. We found a strong correlation between cluster tightness (median distance to the medoid) and the standard deviation of the log-likelihood distributions (Pearson’s $r = 0.832$). This shows that datasets with diverse sets of ligands exhibited the broadest log-likelihood distributions. The most extreme case is the cluster of transporter protein binding pockets 3f3e, which exhibits a ligand size distribution much broader than all other datasets (Figure \ref{fig:pocket_ligand_sizes}).

\medskip A significant deviation from biological expectation was observed in the log-likelihood distribution of the $\alpha$-carbonic anhydrase dataset (3dd0, Figure \ref{fig:lkhd_histograms}h). This dataset yielded unexpectedly high log-likelihoods, suggesting that it is more in-distribution than the training data itself. We attribute this phenomenon to the complexity bias inherent in deep generative models, where high probability densities are often assigned to simple inputs regardless of their semantic novelty \cite{Nalisnick2018}. The 3dd0 complexes exhibit markedly lower structural complexity (e.g., fewer ligand atoms) compared to the training dataset and all other OOD datasets (Figure \ref{fig:pocket_ligand_sizes}). Consequently, we focus the following evaluation of PF-ODE log-likelihoods on the remaining eight datasets and reserve the discussion of 3dd0 and its successful detection via trajectory features for Section \ref{sec:OOD_classifier_results}.

\subsection{Log-Likelihoods Align with Bioinformatic OOD Levels} We observed a strong alignment between the diffusion model's log-likelihood distributions and the bioinformatic OOD levels (Figure \ref{fig:lkhd_tm_tanimoto_rmsd}). These bioinformatic scores are distributions of similarity scores between dataset complexes, calculated as the sum of Tanimoto, TM-align, and inverted RMSD scores to account for protein, ligand, and binding pose simultaneously (Section \ref{subsec:bioinformatics_validation}). These results demonstrate that datasets sharing fewer and weaker complex-level similarities with the training data generally yield lower log-likelihoods. This direct correspondence confirms that the model's likelihood estimates are chemically grounded, capturing the structural novelty of the underlying molecular interactions.

\begin{figure}[h!]
    \centering
    \includegraphics[width=\textwidth]{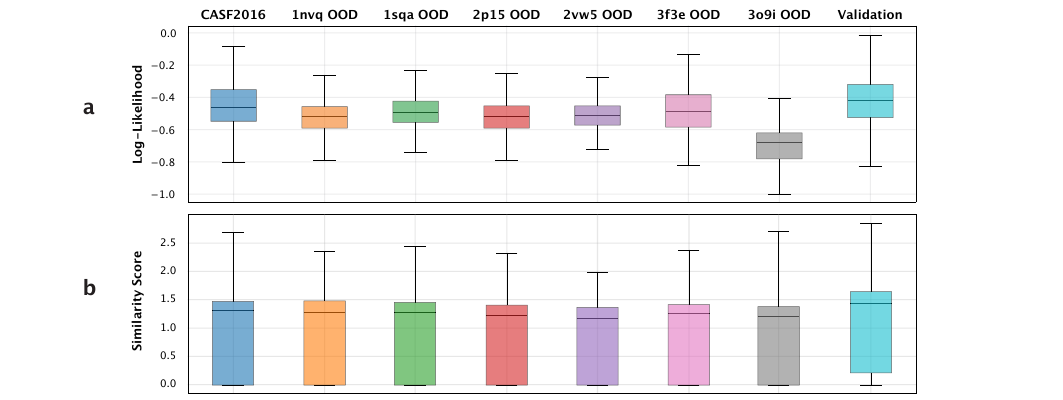}
    \caption{\textbf{Log-likelihood distributions align with bioinformatic similarity metrics:}
    \textbf{a)} Distributions of log-likelihoods assigned by the diffusion model to the validation set (N=949) and to the test datasets  1nvq (N=2560), 1sqa (N=671), 2p15 (N=412), 2vw5 (N=204), 3dd0 (N=454), 3f3e (N=233), 3o9i (N=463) and CASF2016 (N=280). \textbf{b)} The distributions of similarity scores $S=max(Tanimoto+TMScore+(1-RMSD), 0)$ between the training dataset and the test datasets. Scores are calculated as the complex-wise sum of Tanimoto similarity, TM-scores, and inverted pocket-aligned ligand root mean squared deviation (RMSD). High scores signify highly similar protein-ligand complexes (3.0 = identical). Each box represents the distribution of $N=10,510$ similarity scores. Specifically, these are the scores between each training dataset complex and its most similar counterpart in the respective test dataset, as measured by that specific metric. Boxplots show the median (centre line), 25th–75th percentiles (box), whiskers extend to data points within 1.5 × IQR, outliers are not shown.
    }
    \label{fig:lkhd_tm_tanimoto_rmsd}
\end{figure}

\subsection{Log-Likelihoods are Predictive of GEMS Errors} \label{subsec:gems_correlations}
A critical application of OOD detection is to preemptively flag samples where predictive models are likely to fail. If the diffusion model's PF-ODE log-likelihoods accurately reflect the degree of data unfamiliarity, they should predict the performance degradation of independent models trained on the same dataset. This would allow the diffusion model to act as a quality filter, warning users when a new sample is too novel for reliable prediction. To test this capability, we trained the GEMS binding affinity model on the same dataset. This graph neural network processes graph representations of protein-ligand interactions and predicts a scalar binding affinity value. Using the test datasets, we analyzed whether low diffusion likelihoods consistently correspond to high prediction errors. 

\smallskip Across the eight evaluated datasets, we observed a strong correlation between the diffusion model's median log-likelihoods and the predictive performance of GEMS on the same datasets. This relationship was significant both in terms of the coefficient of determination ($R^2$, Pearson Correlation Coefficient $r=0.750$) and the mean absolute error ($r=-0.880$). 
High-likelihood (ID) datasets consistently yielded more accurate predictions (Figure \ref{fig:lkhd_R2_boxplot}), while low likelihood distributions (OOD) showed a steep increase in error (Figure \ref{fig:lkhd_error_heatmaps}). At the individual complex level, this trend follows an approximate exponential behavior: Samples with low likelihoods are prone to large prediction errors, whereas complexes with high likelihoods are predicted accurately with high probability (Figure \ref{fig:lkhd_error_exponential}).

\begin{figure}[]
    \centering
    \includegraphics[width=\textwidth]{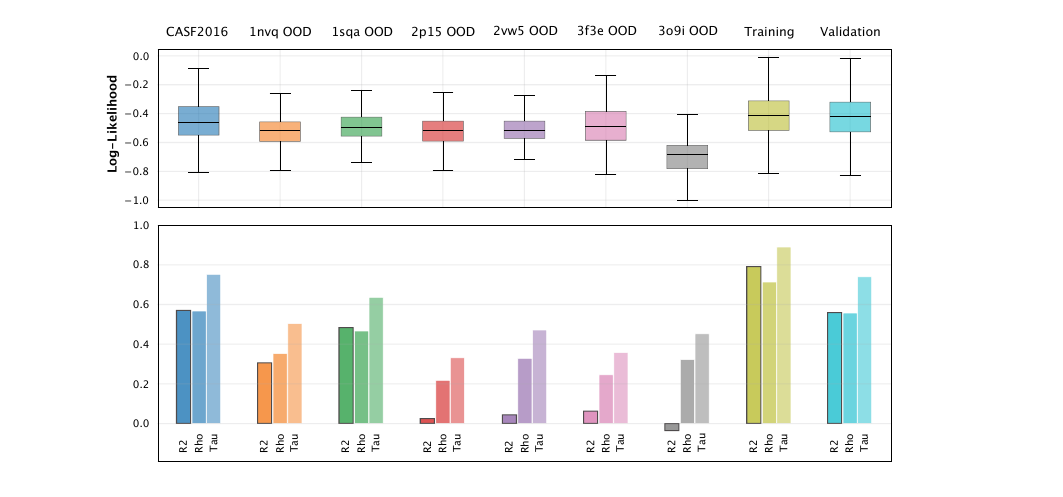}
    \caption{
    \textbf{Correlation between log-likelihoods and GEMS performance:} Comparison of log-likelihood distributions obtained for training, validation, CASF2016 and out-of-distribution (OOD) datasets with performance metrics achieved by the GEMS binding affinity prediction model trained on the same data. The box plot shows the distribution of log-likelihoods assigned to each dataset's protein-ligand complexes by the diffusion model, where lower values signify increased deviation from the learned distribution (OOD) and higher values indicate high in-distribution (ID) probability. The bar plot shows the corresponding performance metrics that GEMS achieved on the same datasets, including R-squared (R2), Kendall (Tau) and Spearman (Rho) rank correlation coefficients. Boxplots show the median (centre line), 25th–75th percentiles (box), whiskers extend to data points within 1.5 × IQR, outliers are not shown.
    }
    \label{fig:lkhd_R2_boxplot}
\end{figure}

\begin{figure}[t]
    \centering
    \includegraphics[width=\textwidth]{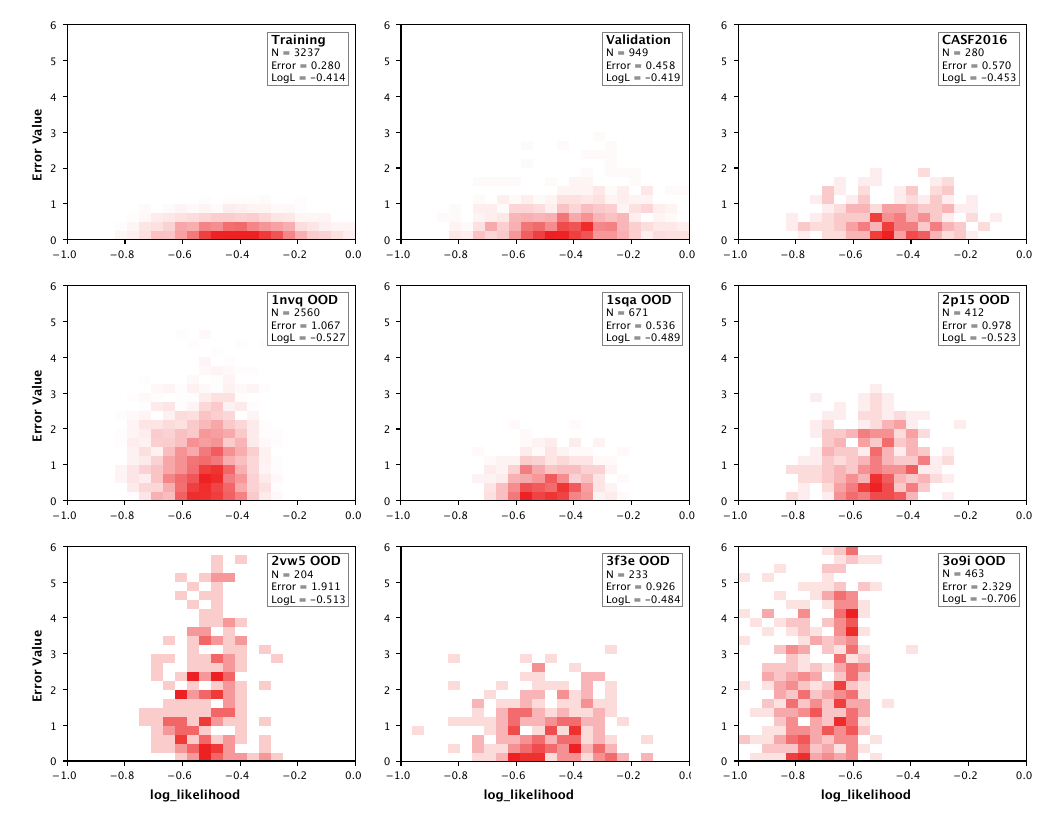}
    \caption{\textbf{Out-of-distribution complexes yield higher GEMS errors:} Heatmaps illustrating the relationship between the log-likelihoods assigned by the diffusion model and the corresponding GEMS binding affinity prediction errors (y-axis) across the training, validation, CASF2016 and out-of-distribution (OOD) datasets. The x-axis represents the log-likelihood for each complex, where lower values signify increased deviation from the learned distribution (OOD) and higher values indicate high in-distribution (ID) probability. The y-axis represents the variance-normalized errors of the GEMS binding affinity prediction. Color intensity represents density, with white areas containing no data. To allow fair comparison across datasets with different label spreads, absolute errors are normalized by the label variance, preventing artificially low error values on narrowly distributed datasets. The heatmaps visually confirm that low log-likelihoods are associated with higher GEMS prediction errors.
    }
    \label{fig:lkhd_error_heatmaps}
\end{figure}

\medskip To strengthen our observation that PF-ODE log-likelihoods are predictive of the performance of other predictive models that have been trained on the same data, we established a formal link between the PF-ODE log-likelihood and the reliability of predictions made by GEMS and show that the log-likelihoods control the error of the downstream model with high probability. 

We interpret the PF-ODE negative log-likelihood $L(x)$ as a task-agnostic typicality score under the diffusion density $q_\phi$ learned from ID complexes.
\[
L(x)\;:=\;-\log q_\phi(x),
\]
 Empirically, this typicality is also predictive of downstream reliability: Complexes with unusually large $L(x)$ (low likelihood under $q_\phi$) tend to incur larger binding-affinity errors under GEMS (Figures~\ref{fig:lkhd_error_heatmaps} and \ref{fig:lkhd_error_exponential}). The key intuition is that if prediction error grows (roughly) monotonically as inputs become less typical, then concentration of $L(x)$ under the ID law immediately yields a high-probability error guarantee. This is the content of the following informal proposition:

\begin{proposition}[Likelihood controls error with high probability]
\label{prop:likelihood-error-informal}
Let $x\sim p_{\mathrm{ID}}$ and define $L(x)=-\log q_\phi(x)$ with typical value $L_{\mathrm{typ}}=\E_{\mathrm{ID}}[L(x)]$.
Assume $L(x)$ has finite ID variance $\Var_{\mathrm{ID}}(L(x))\le \sigma^2$, and that there exists a non-decreasing calibration curve $\phi$ such that $e_\theta(x)\le \phi(L(x))$ for (almost) all $x$.
Then, for any margin $\alpha>0$,
\begin{equation}
\label{eq:likelihood-error-informal}
\mathbb{P}_{\mathrm{ID}}\!\bigl(
e_\theta(x)\;>\;\phi(L_{\mathrm{typ}}+\alpha)
\bigr)
\;\le\;
\frac{\sigma^2}{\alpha^2},
\end{equation}
equivalently $\;\mathbb{P}_{\mathrm{ID}}\!\left(e_\theta(x)\le \phi(L_{\mathrm{typ}}+\alpha)\right)\ge 1-\sigma^2/\alpha^2$.
\end{proposition}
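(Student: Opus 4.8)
The plan is to reduce the claimed error-tail bound to a one-sided deviation bound for the scalar typicality score $L(x)$, which then follows from Chebyshev's inequality using only the finite-variance hypothesis. The calibration assumption $e_\theta(x)\le\phi(L(x))$ together with monotonicity of $\phi$ is precisely what lets me transfer a statement about the downstream error $e_\theta$ into a statement about $L$, for which concentration is available.

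The first step is to establish an event inclusion. Fix $\alpha>0$ and take any $x$ in the error event $\{e_\theta(x)>\phi(L_{\mathrm{typ}}+\alpha)\}$. The calibration bound gives $\phi(L(x))\ge e_\theta(x)>\phi(L_{\mathrm{typ}}+\alpha)$, hence $\phi(L(x))>\phi(L_{\mathrm{typ}}+\alpha)$. Because $\phi$ is non-decreasing, I argue by contrapositive: if $L(x)\le L_{\mathrm{typ}}+\alpha$ held, monotonicity would force $\phi(L(x))\le\phi(L_{\mathrm{typ}}+\alpha)$, contradicting the strict inequality just derived. Therefore $L(x)>L_{\mathrm{typ}}+\alpha$, and up to the $p_{\mathrm{ID}}$-null set on which the calibration bound may fail we obtain
\[
\{x:e_\theta(x)>\phi(L_{\mathrm{typ}}+\alpha)\}\;\subseteq\;\{x:L(x)-L_{\mathrm{typ}}>\alpha\}.
\]
Taking $p_{\mathrm{ID}}$-probabilities yields $\mathbb{P}_{\mathrm{ID}}(e_\theta>\phi(L_{\mathrm{typ}}+\alpha))\le\mathbb{P}_{\mathrm{ID}}(L-L_{\mathrm{typ}}>\alpha)$.

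The second step bounds the right-hand side. Since $L_{\mathrm{typ}}=\E_{\mathrm{ID}}[L]$, the event $\{L-L_{\mathrm{typ}}>\alpha\}$ is a centered, one-sided deviation and is contained in $\{|L-L_{\mathrm{typ}}|>\alpha\}$. Chebyshev's inequality then gives $\mathbb{P}_{\mathrm{ID}}(|L-L_{\mathrm{typ}}|>\alpha)\le\Var_{\mathrm{ID}}(L)/\alpha^2\le\sigma^2/\alpha^2$ by the variance hypothesis. Chaining the two steps produces exactly \eqref{eq:likelihood-error-informal}, and the complementary form is immediate by passing to complements.

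The argument is genuinely short, so the only real obstacle is conceptual rather than computational: getting the direction of the monotone inclusion right. The non-strict monotonicity of $\phi$ means I cannot invert it pointwise, so I would phrase the critical step through the contrapositive of ``$a\le b\Rightarrow\phi(a)\le\phi(b)$'' rather than through an inverse function. A secondary point to handle cleanly is the ``almost all $x$'' caveat in the calibration hypothesis, which merely shifts the inclusion to hold $p_{\mathrm{ID}}$-almost surely and leaves the probability bound unchanged. Finally, I would remark that the same inclusion feeds directly into the sharper one-sided Chebyshev–Cantelli inequality, yielding the improved constant $\sigma^2/(\sigma^2+\alpha^2)$; I keep the stated $\sigma^2/\alpha^2$ for simplicity and because it requires no additional assumptions.
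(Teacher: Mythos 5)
Your proof is correct and follows essentially the same route as the paper's: the paper defines the event $A_\alpha=\{x: L(x)\le L_{\mathrm{typ}}+\alpha\}$, shows via monotonicity of $\phi$ that the error event lies in $A_\alpha^c$, and then invokes a Chebyshev tail bound on $L$ (Lemma~\ref{lem:nll-tail-chebyshev}), which is exactly your event-inclusion-plus-Chebyshev argument stated contrapositively. Your closing remark that Cantelli's one-sided inequality would sharpen the constant to $\sigma^2/(\sigma^2+\alpha^2)$ is a nice observation the paper does not make, but it does not change the structure of the proof.
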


\noindent
In practice, $\phi$ is obtained by fitting a monotone upper envelope to the empirical $(L(x),e_\theta(x))$ scatter on a held-out calibration set (Fig.~\ref{fig:lkhd_error_exponential}), turning PF-ODE likelihoods into a quantitative, label-free risk stratifier for supervised affinity prediction. The supplementary material (Section~\ref{subsec:pf-ode-error-theory}) gives the formal statement (Theorem~\ref{thm:likelihood-error}) and the broader PF-ODE context underlying this bound.

\subsection{Robust OOD Classification via Trajectory Features} \label{sec:OOD_classifier_results}
The strong correlation between diffusion log-likelihoods and bioinformatic similarity across eight test datasets suggests that the PF-ODE log-likelihoods are rooted in the underlying chemical similarity. Nevertheless, they prove insufficient for robust OOD detection in isolation. The log-likelihood distributions exhibit significant overlap with the training data, making accurate classification difficult. Moreover, the metric fails completely on low-complexity interactions due to complexity bias (3dd0, Figure \ref{fig:lkhd_histograms}h). To resolve these limitations and improve separability, we introduce a novel methodology for OOD detection that leverages the full information in a sample's PF-ODE trajectories for accurate classification. Instead of relying solely on the final scalar log-likelihood, we collect 18 additional PF-ODE trajectory statistics that characterize the model's dynamical behavior when transforming an input structure into Gaussian noise. 

\begin{itemize}
    \item Geometric Inefficiency: Path tortuosity and its inverse, path efficiency, characterize the extent to which the trajectory wanders or deviates from a direct path. Increased tortuosity is an indicator of distribution shift and OOD, suggesting the model struggles to maintain a geometrically efficient path in high-dimensional space.
    \item Local Instability and Stiffness: The maximum Lipschitz estimate reads off flow stiffness. High values indicate that nearby states elicit disproportionately different drifts, signaling that the model is operating off-manifold.
    \item Vector Field Activity and Smoothness: The vector-field mean and a spikiness ratio quantify the magnitude and burstiness of corrective pushes required by the model. Mean acceleration records the lack of temporal smoothness, indicating abrupt changes in the flow's velocity.
    \item Energetic Cost: Total flow energy, aggregated along the trajectory path, captures the total directed "work" expended by the model.
\end{itemize}

These additional features show coordinated variation under distribution shift: Stiff, bursty vector fields, higher acceleration and longer, less efficient paths tend to co-occur when processing OOD samples. Therefore, including quantities describing this coordinated movement facilitates robust OOD classification. 
To build a classifier that synthesizes these 19-dimensional feature space into an OOD signal, we fit separate Gaussian Kernel Density Estimators (KDEs) to ID and OOD samples. For new data, we derive a negated Log-Density Ratio (LDR), where larger scores indicate higher OOD confidence. A binary classification is achieved by thresholding, optimized on a validation set (Section \ref{subsec:trajectory}). We refer to this classifier as trajectory-aware LDR classifier.

\paragraph{Trajectory statistics overcome complexity bias}
One of the nine non-training datasets (3dd0, containing $\alpha$-carbonic anhydrases) yielded unexpectedly high PF-ODE log-likelihoods (Figure \ref{fig:lkhd_histograms}). Relying on these scores alone would result in the erroneous classification of this dataset as in-distribution. However, our results demonstrate that augmenting the log-likelihoods with additional trajectory statistics sufficiently enriches the feature space to correct this prediction and flag the dataset as OOD with high confidence. This ensemble of trajectory features enabled the construction of a robust decision boundary between the training data and the $\alpha$-carbonic anhydrase structures, effectively compensating for the lack of discriminative signal in the raw likelihood scores.

\paragraph{Classifier Performance}
To ensure robust performance estimates, we employed a balanced bootstrap evaluation strategy for each test dataset. Because the number of ID decoys is much larger than the number of OOD binders, we keep the OOD test set fixed and repeatedly draw 100 balanced ID test subsets of the same size. For each draw we evaluate the classifier and record AUROC as a measure of ID/OOD separability. 
We observed robust performance on test datasets with strong distribution shifts relative to the training data (Table  \ref{tab:results_traj-auroc-acc} and \ref{tab:traj-ldr-ood}), achieving consistent separation on the most structurally unique OOD data such as HIV-proteases (3o9i), heat-shock-proteins (2vw5) and $\alpha$-carbonic anhydrases (3dd0). However, as anticipated, accuracy diminished on datasets with intermediate OOD characteristics, confirming that these datasets contain complexes that retained moderate similarity to the training data and are therefore hard to separate. A prime example is the CASF2016 benchmark: Originally sampled from the training distribution, this dataset was rendered more independent via CleanSplit filtering. Consequently, it occupies a OOD space between the training data and the OOD datasets, as confirmed by both log-likelihood profiles and bioinformatic similarity analysis (Figure \ref{fig:lkhd_tm_tanimoto_rmsd}). This moderate distribution shift resulted in the low OOD-classification accuracy. Similarity, the transporter protein dataset (3f3e) was predicted with low accuracy. This dataset equally showed intermediate OOD characteristics. Its log-likelihood distribution was very broad with significant overlap with the training log-likelihoods, a finding that was further confirmed with bioinformatic metrics showing a significant degree of retained structural similarity with the training data (Figure \ref{fig:lkhd_tm_tanimoto_rmsd}).

\begin{table}[t]
\centering
\footnotesize
\begin{tabular}{lccc}
\toprule
Dataset & AUROC & Accuracy \\
\midrule
\texttt{casf2016} & $0.708\,{\pm}\,0.042$ & $0.604\,{\pm}\,0.029$ \\
\texttt{1nvq}     & $0.842\,{\pm}\,0.013$ & $0.744\,{\pm}\,0.011$ \\
\texttt{1sqa}     & $0.888\,{\pm}\,0.013$ & $0.787\,{\pm}\,0.014$ \\
\texttt{2p15}     & $0.754\,{\pm}\,0.034$ & $0.693\,{\pm}\,0.027$ \\
\texttt{2vw5}     & $0.939\,{\pm}\,0.033$ & $0.888\,{\pm}\,0.028$ \\
\texttt{3dd0}     & $0.951\,{\pm}\,0.009$ & $0.956\,{\pm}\,0.009$ \\
\texttt{3f3e}     & $0.639\,{\pm}\,0.061$ & $0.598\,{\pm}\,0.035$ \\
\texttt{3o9i}     & $0.963\,{\pm}\,0.006$ & $0.949\,{\pm}\,0.011$ \\
\bottomrule
\end{tabular}
\caption{
\textbf{Trajectory-aware OOD classifier performance:} Comparison of AUROC and accuracy across OOD datasets for the trajectory-aware LDR diffusion-based detector. To obtain a robust and balanced performance estimate despite class imbalance, we employ a bootstrap evaluation strategy. We keep the OOD test set fixed and repeatedly draw 100 balanced ID subsets of the same size from the training data. For each draw we evaluate the classifier, treating OOD as the positive class and report the mean $\pm$ standard deviation over the balanced bootstrap subsets. 
}
\label{tab:results_traj-auroc-acc}
\end{table}

\paragraph{Feature Importance}
To understand which aspects of the PF-ODE trajectory drive separation between ID and OOD complexes, we quantify the importance of each of our $d=19$ trajectory statistics and rank them accordingly (Figure \ref{fig:feature_importances}). Rather than reporting a single heuristic, we use a combined importance score that rewards features which separate ID and OOD strongly and do so consistently across complexes and bootstrap splits.

Several patterns stand out: First, PF-ODE log-likelihood is the single most discriminative coordinate, but it is not the whole story: Multiple trajectory descriptors achieve comparable importance, confirming that OOD shift manifests as a dynamic signature along the probability flow rather than only as a terminal density value. Second, the highest-ranked non-likelihood features cluster into interpretable families: (i) path geometry (tortuosity/efficiency), indicating that OOD inputs induce longer and less direct transport to noise; (ii) field magnitude and burstiness (vf\_l2 mean/max/std and spikiness), indicating larger and more intermittent corrective updates; and (iii) stability and smoothness (Lipschitz and acceleration), indicating stiffer and less temporally regular dynamics. Finally, coupling-related terms (e.g.\ coupling\_consistency and dynamic\_coord\_feature\_coupling) rank non-trivially, suggesting that OOD behavior is not purely geometric: Atypical complexes also perturb how coordinate and identity updates co-evolve across noise scales.

\begin{figure}[t]
    \centering
    \includegraphics[width=\textwidth]{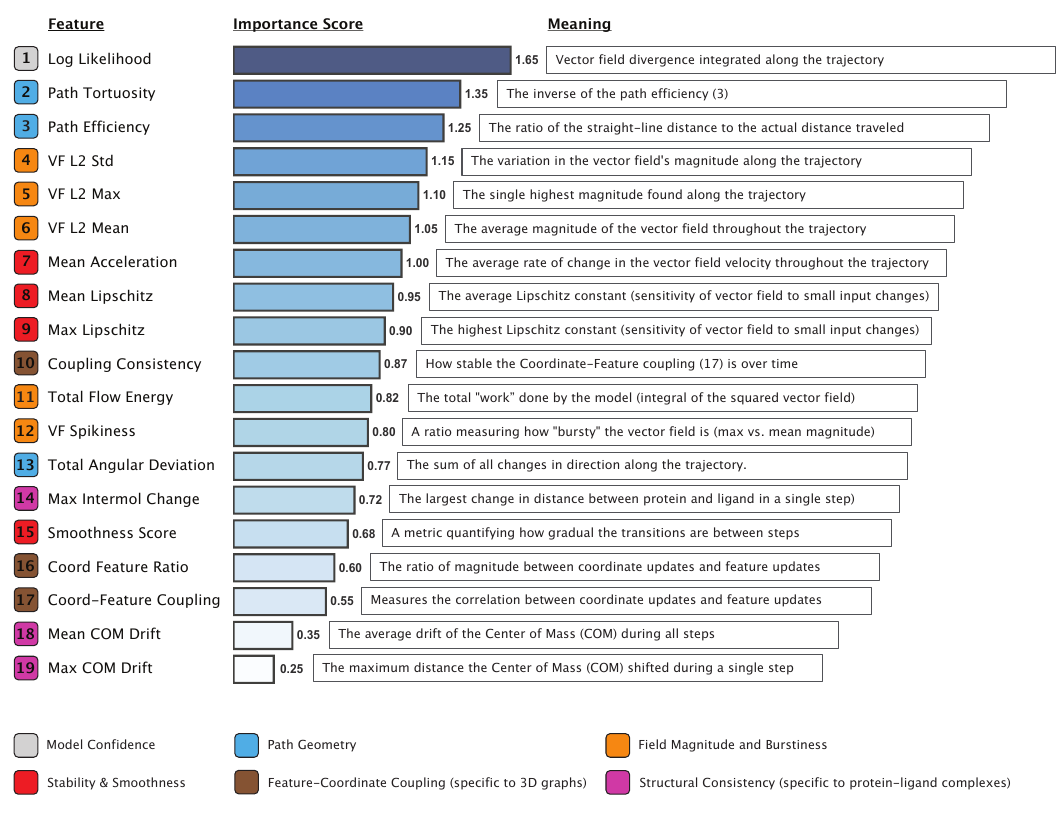}
    \caption{\textbf{Ranking of PF-ODE trajectory features by discriminative power for OOD detection:} Relative importance of 19 trajectory statistics, quantified by a composite score of separation strength (effect size between ID/OOD distributions) and stability under resampling. While log-likelihood is the dominant feature, dynamic characteristics such as path tortuosity, vector field energy, and Lipschitz stability contribute significantly to detection performance. Color indicators classify features into interpretable families (e.g., Path Geometry, Field Magnitude, Stability, Feature-Coordinate Coupling and Structural Consistency). This ranking confirms that OOD samples exhibit distinct dynamic noising trajectories, traversing longer, more energetic, and less stable paths.}
    \label{fig:feature_importances}
\end{figure}

\subsection{GEMS-based Post-Hoc Baselines}\label{sec:gems_baselines}

Due to the novelty of the 3D graph OOD detection task, there is currently a lack of competing models or established benchmarks to serve as baselines for our framework. Therefore, we constructed our own suite of post-hoc baselines by adapting two distinct paradigms from other machine learning domains. First, we implemented a family of classical anomaly detectors operating on the fixed latent representations of the pretrained GEMS encoder to test if OOD signals are captured in static feature spaces. Second, we implemented Rate-In, a state-of-the-art dynamic uncertainty baseline, which probes the stability of GEMS under dropout.

\paragraph{Embedding Space Baseline} 
We treated the GEMS encoder as a static feature extractor and assumed that ID complexes occupy a relatively compact region of the GEMS representation space, and that OOD complexes are either geometrically distant from this region or sparsely supported. For every molecular complex, we computed a summary embedding by averaging the output vectors from multiple stochastic forward passes of GEMS. We then trained five unsupervised anomaly detectors on these standardized ID embeddings, covering distance–based, density–ratio, isolation–based and margin–based approaches:

\begin{itemize}[itemsep=0.5pt]
    \item Mahalanobis Distance: Measures the distance of a sample from the ID class mean, accounting for covariance.
    \item k-Nearest Neighbors (k-NN): Scores anomaly based on the average distance to the nearest ID training examples.
    \item Local Outlier Factor (LOF): Detects samples with lower local density compared to their neighbors.
    \item Isolation Forest: Assigns higher anomaly scores to points that are easily isolated (few splits) in random decision trees.
    \item One-Class SVM: Learns a decision boundary (margin) that encloses the ID data.
\end{itemize}

All classifiers were trained on the same fixed GEMS latent representations. Performance was estimated following the same rigorous evaluation protocol as with our trajectory-aware OOD detection classifier, using balanced bootstrap test subsets to ensure robustness against class imbalance (Section \ref{sec:OOD_classifier_results}). As a main performance readout and robust indicator of detector quality, we compute the threshold-free AUROC (Table \ref{tab:ood-auroc-combined}). Additionally, we calibrate decision thresholds for each OOD dataset on held-out validation sets (maximizing F1-score). We then report the performance of the detector with the highest AUROC for each dataset (Table \ref{tab:gems-ood}).

\paragraph{Adaptive Uncertainty Baseline (Rate-In)} 
We implemented Rate-In \cite{zeevi2025rate}, a dynamic baseline that is built on the concept of Monte Carlo dropout. Instead of a fixed global dropout rate, Rate-In applies adaptive, per-sample, per-layer dropout rates chosen to preserve information flow through the network. It works under the assumption that ID samples allow for relatively high dropout rates without disrupting the computation, whereas OOD inputs are more fragile and loose information under dropout perturbations. 
We integrated this adaptive scheme directly into the GEMS binding affinity predictor. For each molecular complex, the method iteratively optimizes the dropout intensity at every network layer, seeking the maximum dropout level that still preserves stable information flow. This process generates a sample-specific robustness profile. The final anomaly score is constructed by aggregating three distinct signals: The standard predictive variance (epistemic uncertainty), the heterogeneity of the optimized dropout rates across layers, and the computational effort (convergence difficulty) required to find these stable rates. High instability in these metrics indicates that the predictive model struggles to process the input structure, suggesting the input is OOD.

\paragraph{Performance Evaluation} 
These baselines establish a rigorous reference for evaluating our trajectory-aware method. While Rate-In provides a sophisticated measure of predictive uncertainty, our empirical analysis showed that it offered lower discrimination accuracy compared to the trajectory–aware OOD classifier approach (Table \ref{tab:ood-auroc-combined}). This demonstrates that probing the stability of a discriminative model is inferior to the sensitivity of the diffusion-based approach using multiple PF-ODE trajectory features. Similarly, the embedding space baseline showed lower OOD detection accuracy across most datasets, showing that the latent space features from the GEMS encoder possess less OOD separability than the descriptors extracted from the PF-ODE trajectory. Interestingly, both the embedding-space baseline and Rate-In outperformed our method on the transporter protein dataset (3f3e). This suggests that for certain classes of distribution shift, the static latent representation of GEMS may capture specific OOD characteristics that are more discriminative than the PF-ODE flow dynamics of our diffusion model.

\begin{table}[t]
\centering
\small
\begin{tabular}{lccc}
\toprule
Dataset & Traj-LDR (diffusion) & GEMS (embedding) & Rate-In \\
\midrule
\texttt{casf2016} & $0.708\,{\pm}\,0.042$ & $0.554\,{\pm}\,0.019$ & $0.444\,{\pm}\,0.041$ \\
\texttt{1nvq}     & $0.842\,{\pm}\,0.013$ & $0.804\,{\pm}\,0.007$ & $0.581\,{\pm}\,0.013$ \\
\texttt{1sqa}     & $0.888\,{\pm}\,0.013$ & $0.928\,{\pm}\,0.008$ & $0.664\,{\pm}\,0.024$ \\
\texttt{2p15}     & $0.754\,{\pm}\,0.034$ & $0.721\,{\pm}\,0.017$ & $0.541\,{\pm}\,0.039$ \\
\texttt{2vw5}     & $0.939\,{\pm}\,0.033$ & $0.659\,{\pm}\,0.033$ & $0.439\,{\pm}\,0.047$ \\
\texttt{3dd0}     & $0.951\,{\pm}\,0.009$ & $0.728\,{\pm}\,0.019$ & $0.337\,{\pm}\,0.029$ \\
\texttt{3f3e}     & $0.639\,{\pm}\,0.061$ & $0.919\,{\pm}\,0.009$ & $0.741\,{\pm}\,0.032$ \\
\texttt{3o9i}     & $0.963\,{\pm}\,0.006$ & $0.922\,{\pm}\,0.010$ & $0.859\,{\pm}\,0.024$ \\
\bottomrule
\end{tabular}
\caption{
\textbf{Comparison of AUROC (mean $\pm$ standard deviation) across OOD datasets} for the trajectory-aware LDR diffusion-based detector (Traj-LDR), the embedding-space GEMS baseline, and the Rate-In baseline. All values are taken from Tables~\ref{tab:traj-ldr-ood}, \ref{tab:gems-ood} and \ref{tab:ratein-ood}. The \texttt{validation} split used only for Rate-In is omitted here.
}
\label{tab:ood-auroc-combined}
\end{table}

\paragraph{Key distinctions from baselines} 
The bioinformatic similarity analysis (Sections~\ref{subsec:bioinformatics_validation}) and the GEMS-based post-hoc OOD detection baselines (Section \ref{sec:gems_baselines}) provide a strong reference for our trajectory-aware OOD detection. However, our diffusion-based OOD detector occupies a fundamentally different region in the design space than the deterministic baselines. The diffusion model is trained in a purely generative fashion. It only observes bare three-dimensional coordinates and discrete atom/residue identities, and its objective is to approximate the joint data distribution. It never sees binding affinity labels, similarity scores, or any task-specific supervision. By contrast, out baselines are heavily endowed with biochemical priors: The bioinformatic OOD approximations are hand-crafted around well-established notions of structural and chemical similarity like TM-scores, Tanimoto scores and ligand RMSD values. In the Embedding Space baseline, OOD decisions are based on a static snapshot of the GEMS representation geometry. Similarly, the Rate-In baseline makes use of the pretrained weights of the GEMS model and derives OOD scores by probing prediction stability under dropout perturbation. 

\medskip In this sense, the diffusion model is substantially more structure-agnostic than the baselines, yet it is asked to solve a more demanding problem (full generative modeling) from strictly weaker supervision. Nevertheless, the OOD detection with diffusion trajectory features outperforms all baselines in our task of recognizing OOD datasets of isolated protein families (Table \ref{tab:ood-auroc-combined}).

Moreover, the scope of the baseline methods employed here is inherently limited. The bioinfomatic approximation of OOD scores requires exhaustive pairwise comparisons to construct similarity matrices. Both the Rate-In and the Embedding Space baseline rely on a high-quality GEMS encoder trained directly on the binding affinity task. Consequently, their applicability is restricted to supervised settings where such pretrained models exist. In contrast, our OOD detector based on diffusion PF-ODE trajectory is trained in a generative fashion and requires no labels. It only exploits the diffusion model's own internal dynamics and thus provides a versatile, label-free mechanism for OOD detection that is applicable to diverse data types and training regimes.

\clearpage
\section{Discussion}

We have introduced a generative framework for detecting out-of-distribution (OOD) structures in irregular 3D graph data, demonstrating that a diffusion model trained purely on raw geometry and discrete biochemical identities recovers a principled, task-agnostic OOD signal within its dynamic noising trajectories. Our methodology employs a unified continuous diffusion process and posterior-mean interpolation to establish a stable SE(3)-equivariant denoiser whose probability-flow ODE (PF-ODE) trajectories provide a per-complex notion of typicality with respect to the training distribution. The central methodological advancement is the augmentation of scalar likelihoods with 18 additional trajectory statistics. By capturing path geometry, flow stiffness, vector-field instability, and energetic cost, we demonstrate that OOD structures induce distinct dynamics in their diffusion trajectories that facilitate robust detection, effectively bypassing the complexity bias that often confounds methods based on likelihoods alone. This approach successfully identifies bioinformatically validated OOD protein-ligand interactions and outperforms classical post-hoc baselines despite working in a completely unsupervised setting.

\smallskip The practical utility of this unsupervised approach is underscored by the massive disparity between available structural data and curated labels. With the amount of structural data growing at an exponential rate relative to labeled datasets like PDBbind, the ability to derive OOD signals without supervision is critical. To our knowledge, this work represents the first report of unsupervised OOD detection specifically for 3D geometric graph data, a modality where detection via simple statistical checks or similarity computations is inherently impractical due to the complex combination of topology, 3D geometry, and discrete features. Instead, effective detection requires a joint assessment of these features; a task our framework accomplishes by explicitly modeling the probability density of this joint distribution. Our results demonstrate that these diffusion-based estimates can provide an a priori assessment of a model's reliability, forecasting prediction errors before a complex is processed by a downstream predictor. This provides a critical safety measure for deployed machine learning models, warning users when input data points are atypical and the model's output may be unreliable.

\medskip Beyond practical safety, this work contributes to the ongoing debate between generalization and memorization in structural bioinformatics. While modern models often report high benchmark performance, such claims are frequently accompanied by concerns that performance may largely stem from memorizing recurring patterns shared between training and test data. We propose that OOD quantification can act as a "certificate" for benchmark datasets and the connected generalization claims. By explicitly relating prediction errors to trajectory-based OOD scores, we can disentangle true generalization from dataset-specific memorization. A model exhibiting a shallow increase in error as it moves into OOD territory demonstrates genuine generalization to data underrepresented in its training set, whereas a steep increase indicates performance rooted in memorized training motifs. This enables a direct comparison across competing architectures, moving beyond single-point benchmark values to evaluate a model's true generalization capacity.

\smallskip While this work provides a proof of concept for PF-ODE trajectory features, significant potential remains to improve the diffusion model's distribution learning to further increase sensitivity to subtle chemical and structural nuances.
Nevertheless, our findings demonstrate that the generative modeling of 3D graphs inherently captures the "essence" of the data distribution. The emergence of robust OOD signals as a byproduct of the generative learning process, without task-specific labels or hand-crafted priors, offers a versatile mechanism for safety and reliability in geometric deep learning. This framework is not restricted to molecular complexes but is inherently generalizable to any modality amenable to continuous diffusion, offering a general recipe for OOD quantification and error forecasting. Our method should be viewed as one concrete instantiation of a broader OOD detection blueprint: The core requirement is simply a generative dynamical model, a diffusion or flow-matching model, trained to represent the in-distribution. Once such a model is available, the induced probability-flow dynamics provide a natural coordinate system in which to build trajectory-level diagnostics. Researchers can design score functions and summary statistics that probe the structural regularities relevant to their specific application, detecting OOD samples by identifying trajectories whose behavior is atypical under the learned training distribution.

\newpage
\section{Method}
\subsection{Datasets} \label{met:Datasets}
Existing diffusion models for protein–ligand interactions such as DiffSBDD \cite{Schneuing2024} and TargetDiff \cite{TargetDiff} were trained on datasets of experimental structures extracted from the Protein Data Bank (PDB), frequently augmented with artificial complexes generated by docking tools \cite{Francoeur2020}. The PDBbind database (v.2020) represents the subset of all available experimental protein-ligand complex structures for which binding affinity labels are available \cite{Wang2004, Liu2015}. This dataset (N=19'443) served as the main data resource to train and evaluate models in this work.

To evaluate the OOD detection capability of our diffusion model, we split the PDBbind dataset into a training set and several test sets with varying biological uniqueness using PLINDER’s pocket-level clustering based on Local Distance Difference Test lDDT (cluster type \textit{pocket lddt 50 community})\cite{PLINDER}. Seven proteins were selected to represent diverse protein families (PDB IDs: 1nvq, 1sqa, 2p15, 2vw5, 3dd0, 3f3e, 3o9i). For each representative protein, all complexes from its cluster were assigned to a separate test set, thereby removing entire protein families from the training dataset. To be able to use the CASF2016 benchmark \cite{Su2019} as an additional test dataset, we applied CleanSplit filtering \cite{GEMS} to the training data to remove similarities with CASF2016. We then further split the training data randomly into final training and validation set using a ratio of 90/10. This process resulted in the following datasets:

\begin{itemize}[itemsep=0.5pt]
    \item Training dataset (N=10'510): Dataset containing 90\% of the remaining complexes after OOD dataset extraction and CleanSplit filtering.
    \item Validation dataset (N=1167): Dataset containing 10\% of the remaining complexes after OOD dataset extraction and CleanSplit filtering.
    \item CASF2016 benchmark (N=285): Diverse dataset with protein-ligand complexes from 57 different protein families. Data leakage reduced trough CleanSplit filtering.
    \item 1nvq OOD (N=2714): Cluster of serine/threonine-protein kinase binding pockets
    \item 1sqa OOD (N=736): Cluster of Urokinase-type plasminogen activator pockets
    \item 2p15 OOD (N=462): Cluster of estrogen receptor binding pockets
    \item 2vw5 OOD (N=207): Cluster of molecular chaperone HSP82 binding pockets
    \item 3dd0 OOD (N=475): Cluster of carbonic anhydrase binding pockets
    \item 3f3e OOD (N=391): Cluster of transporter protein binding pockets
    \item 3o9i OOD (N=469): Cluster of HIV-1 protease binding pockets
\end{itemize}

This dataset split was employed for both the diffusion model and the GEMS binding affinity prediction model. The diffusion model was trained on the training dataset, with validation losses computed throughout the training process using the validation dataset. All non-training datasets (validation, CASF2016, and OOD sets) were subsequently used to evaluate the model's ability to differentiate and rank datasets according to their OOD level relative to the training data. GEMS was trained exclusively on the training dataset using 5-fold cross-validation (CV). All other datasets served as test sets to evaluate generalization capability to datasets of varying distribution shifts, ranging from ID (e.g. the validation set) to highly OOD (e.g. the HIV proteases).

\subsection{Diffusion Model} \label{sec:diffusion_model}

Our diffusion model reuses only the EGNN backbone architecture from DiffSBDD~\cite{Schneuing2024}; all other components, ranging from the diffusion training infrastructure, to metrics and evaluation components and sampling procedures (covering also the PF-ODE implementation) are specific to this work. 

\subsubsection{Background and Motivation}
\paragraph{Continuous diffusion and probability--flow}
Diffusion models specify a family of corrupted observations $\{\vect{x}_t\}_{t\in[0,1]}$ linked by a forward SDE (typical setup of \cite{EDM})
\begin{equation}
\dd \vect{x} \;=\; f(\vect{x},t)\,\dd t \;+\; g(t)\,\dd \vect{w}, \qquad \vect{w}\text{ a standard Wiener process},
\end{equation}
and learn a time-indexed score field $s(\vect{x},t)=\nabla_{\vect{x}}\log p_t(\vect{x})$ to reverse the corruption. Under mild regularity, the \emph{probability--flow ODE} (PF-ODE) $\dot{\vect{x}}=\tilde{f}(\vect{x},t)$ transports samples deterministically while matching the SDE marginals at every $t$; its drift is a functional of the score. For variance-exploding (VE) designs with $f\equiv 0$ and scalar noise scale $\sigma(t)$, the conditional $p(\vect{x}_t\!\mid\!\vect{x}_0)=\Normal(\vect{x}_0,\sigma(t)^2 I)$ and the PF-ODE drift is proportional to the score,
\begin{equation}
\dot{\vect{x}}(t) \;=\; -\tfrac{1}{2}g(t)^2 \, s(\vect{x}_t,t),
\end{equation}
yielding an exact change-of-variables likelihood for any clean input $\vect{x}_0$,
\begin{equation}
\label{eq:pf_ode_likelihood}
\log p(\vect{x}_0) \;=\; \log p_1(\vect{x}_1) \;+\; \int_{0}^{1}\!\tr\!\left(\frac{\partial \tilde{f}}{\partial \vect{x}}(\vect{x}(t),t)\right)\dd t,
\end{equation}
where $\vect{x}(1)$ is the terminal state and $p_1$ is the induced terminal density (Gaussian in the VE case). Equation~\eqref{eq:pf_ode_likelihood} shows that diffusion models are not only samplers but also differentiable density models once a numerically stable divergence estimator is in place.

\paragraph{Denoising as deconvolution (posterior mean)}
In the VE setting, corruption is a Gaussian convolution $\vect{x}_t=\vect{x}_0+\sigma(t)\,\epsilon$ with $\epsilon\sim\Normal(0,I)$. Recovering $\vect{x}_0$ from $\vect{x}_t$ is a classical \emph{deconvolution} problem whose Bayes estimator under squared loss is the posterior mean:
\begin{equation}
\label{eq:posterior_mean}
\widehat{\vect{x}}_0(\vect{x}_t,t) \;=\; \E[\vect{x}_0\mid \vect{x}_t,t] \;=\; \arg\min_{\vect{y}}\; \E\!\left[\|\vect{y}-\vect{x}_0\|_2^2 \,\middle|\, \vect{x}_t,t\right].
\end{equation}
The conditional score is \emph{affine} in $\vect{x}_0$,
\begin{equation}
\label{eq:ve_score}
s(\vect{x}_t,t\mid \vect{x}_0) \;=\; \frac{\vect{x}_0 - \vect{x}_t}{\sigma(t)^2},
\end{equation}
so taking expectation over $p(\vect{x}_0\!\mid\!\vect{x}_t,t)$ gives
\begin{equation}
\label{eq:interp_score}
\hat{s}(\vect{x}_t,t) \;=\; \E[s(\vect{x}_t,t\mid \vect{x}_0)\mid \vect{x}_t,t]
\;=\; \frac{\widehat{\vect{x}}_0(\vect{x}_t,t)-\vect{x}_t}{\sigma(t)^2}.
\end{equation}
Thus, learning any mechanism that reliably produces the posterior mean clean signal \emph{directly delivers} the score needed for both sampling and PF-ODE likelihoods.

\subsubsection{Unified continuous diffusion for molecular coordinates and identities}

\paragraph{Contribution}
We introduce a unified \emph{continuous} diffusion model for molecules that treats both 3D geometry and discrete chemical identities in a single Euclidean state: we learn L2-normalized prototypes for the categorical alphabet and run the same VE diffusion on the concatenated coordinates$\;\|\;$embeddings, converting class logits to scores via \emph{posterior–mean interpolation}. This yields a single, end-to-end objective and avoids the collapse pathologies of direct score regression on the categorical channel.

\paragraph{Background: Score interpolation for categorical data}
Under VE corruption $\vect{x}_t=\vect{x}_0+\sigma(t)\,\epsilon$ with $\epsilon\sim\mathcal{N}(0,\mathbf{I})$, the conditional score is an affine function of the posterior mean of the clean variable:
\begin{equation}
\label{eq:ve_score}
s(\vect{x}_t,t)\;=\;\frac{\E[\vect{x}_0\mid\vect{x}_t,t]-\vect{x}_t}{\sigma(t)^2}.
\end{equation}
For a categorical variable represented by L2-normalized prototypes $\vect{e}_1,\dots,\vect{e}_V\in\R^d$ and a noisy embedding $\vect{x}_t$, if a denoiser outputs logits $\vect{z}$ with class posteriors $p(i\mid\vect{x}_t,t)=\softmax(\vect{z})_i$, then the posterior mean over the clean embedding is the finite mixture
\begin{equation}
\label{eq:embedding_posterior_mean}
\widehat{\vect{e}}_0(\vect{x}_t,t)\;=\;\sum_{i=1}^{V} p(i\mid \vect{x}_t,t)\,\vect{e}_i,
\end{equation}
and the corresponding \emph{interpolated score} for the categorical block follows by plugging \eqref{eq:embedding_posterior_mean} into \eqref{eq:ve_score}:
\begin{equation}
\label{eq:interp_score}
\hat{s}(\vect{x}_t,t)\;=\;\frac{\widehat{\vect{e}}_0(\vect{x}_t,t)-\vect{x}_t}{\sigma(t)^2}.
\end{equation}
This “score via posterior mean” viewpoint enables standard cross-entropy training on logits while keeping the dynamics continuous in time and space (cf.\ CDCD, \cite{Dieleman2022}).

\paragraph{Learning molecular embeddings without collapse}
Direct score matching on the categorical prototypes is ill-posed: with Gaussian corruption around each prototype, a degenerate joint optimum collapses all $\{\vect e_i\}$ to a single point, making the noise perfectly predictable while erasing identity. We therefore \emph{train the categorical path with cross-entropy on logits} and \emph{derive the score} via \eqref{eq:embedding_posterior_mean}–\eqref{eq:interp_score}. Concretely, let $y\in\{1,\dots,V\}$ be the clean token with prototype $\vect e_y$, sample $t\sim p(t)$ and $\epsilon\sim\mathcal N(0,\mathbf I)$, and form the noisy embedding
\[
\vect x_t \;=\; \vect e_y + \sigma(t)\,\epsilon.
\]
The denoiser produces a normalized feature $\widehat{\vect h}_\theta(\vect x_t,t)\!\in\!\R^d$ and cosine logits
\begin{equation}
\label{eq:logits_cosine}
z_i(\vect x_t,t)\;=\;\kappa(t)\,\langle \vect e_i,\;\widehat{\vect h}_\theta(\vect x_t,t)\rangle,
\qquad \|\vect e_i\|_2=\|\widehat{\vect h}_\theta\|_2=1,
\end{equation}
with either a \emph{theory-aligned} temperature $\kappa(t)=\sigma(t)^{-2}$ (matching the Bayes posterior for unit-norm prototypes) or a fixed $\kappa\!=\!1$ for simplicity.\footnote{We enforce the unit-sphere constraints by projection after each update: $\vect e_i\leftarrow \vect e_i/\|\vect e_i\|_2$ and $\widehat{\vect h}_\theta \leftarrow \vect h_\theta/\|\vect h_\theta\|_2$.}
Our primary loss is standard hard-label cross-entropy
\begin{equation}
\label{eq:cat_loss_hard}
\mathcal L_{\text{cat}}(\theta,\{\vect e_i\})\;=\;
\E_{y,t,\epsilon}\Big[-\log \operatorname{softmax}(\vect z(\vect x_t,t))_{y}\Big],
\end{equation}
which learns posteriors $p_\theta(i\mid\vect x_t,t)=\softmax(\vect z)_i$ and, through \eqref{eq:embedding_posterior_mean}–\eqref{eq:interp_score}, yields the interpolation-derived categorical score.

These cosine logits remove scale indeterminacy (no norm inflation), enlarge inter-class margins on the sphere, and stabilize the interpolation-derived score, yielding a compact, regularizer-free recipe for joint training of coordinates and identities.

\paragraph{Unified diffusion over coordinates $\|\,$embeddings}
We concatenate continuous 3D coordinates with the learned categorical embeddings and run the same VE diffusion across the entire state. This brings three concrete advantages:
\emph{(i) Principled estimator.} Denoising is deconvolution: coordinates are regressed to their clean values; categorical embeddings use the exact finite average \eqref{eq:embedding_posterior_mean}. The resulting score \eqref{eq:interp_score} is consistent by construction.
\emph{(ii) Numerical stability.} Cross-entropy over a small chemical alphabet (tens of atom/residue types) is well-conditioned and avoids the small-$\sigma$ gradient blow-ups that can afflict direct score regression; L2 normalization bounds activations and improves Lipschitz behavior across noise scales.
\emph{(iii) Exact finite averaging.} Unlike language modeling, $V$ here is small, so \eqref{eq:embedding_posterior_mean} is computed \emph{exactly} at every node and time step—preserving chemical similarity (nearby classes $\Rightarrow$ nearby embeddings) while representing genuine superpositions at intermediate noise levels.

\paragraph{Geometry and symmetry}
Molecular validity depends on geometry and symmetry. Our diffusion state always enforces translation invariance by projecting the coordinate block to the center-of-mass (COM)-free subspace per complex; the same projection is applied to all coordinate noise, drifts, and divergence probes. An SE(3)-equivariant graph denoiser updates coordinates and predicts logits conditioned on noisy coordinates\,$\|\,$embeddings, allowing geometric cues to inform identity and vice versa. Because \eqref{eq:interp_score} expresses the PF-ODE drift through posterior means, the same denoiser underpins both generation and likelihood evaluation without changing objective or architecture.

\paragraph{Consequence for likelihoods and OOD}
Unifying geometry and identity within a continuous diffusion equipped with exact posterior-mean interpolation yields a \emph{single}, self-consistent PF-ODE whose pathwise divergence and terminal Gaussian define a tractable, per-complex log-likelihood via \eqref{eq:pf_ode_likelihood}. Likelihoods accumulate information across a continuum of noise scales, furnishing a sensitive typicality score for out-of-distribution detection in molecular settings where categorical spaces are small and geometric constraints are paramount.

\subsubsection{Diffusion Model Setup}
\paragraph{State, symmetries, and encoders}
Protein-ligand complexes were mapped to fully-connected graphs including ligand atoms and C$\alpha$ atoms of all amino acids with atoms less then \SI{5}{\angstrom} from any ligand atom. We represent a batched complex by nodes $i=1,\dots,N$ with coordinates $\vect{r}_i\in\R^3$ and categorical labels $y_i\in\{1,\dots,V\}$ representing atom/residue types. Edge features were initialized as the Euclidean distance between the connected nodes. Two learnable encoders map atom/residue types to L2-normalized embeddings $\vect{e}_{y_i}\in\R^d$, and we define the per-node state $\vect{x}_i=[\vect{r}_i\;\|\;\vect{e}_{y_i}]\in\R^{3+d}$. For each complex in the batch we remove global translation by projecting the coordinate block to the center-of-mass (COM)-free subspace at every step; the same projection is applied to all coordinate noise and updates. This enforces translation invariance and reduces the effective coordinate degrees of freedom.

\paragraph{Noise schedule and preconditioning}
We use a monotone VE schedule $\sigma:[0,1]\to (\sigma_{\min},\sigma_{\max}]$; in practice we parameterize $\sigma$ so that $\log \sigma$ is approximately linear in $t$ and is easily invertible for sampler step allocation. Following EDM-style preconditioning, inputs are scaled by $c_{\text{in}}(t) = (\sigma(t)^2+\sigma_{\mathrm{data}}^2)^{-1/2}$ and outputs by $c_{\text{out}}(t) = \sigma(t)\sigma_{\mathrm{data}}(\sigma(t)^2+\sigma_{\mathrm{data}}^2)^{-1/2}$ to equalize dynamic range across noise levels.

\paragraph{Equivariant denoiser}
The denoiser is an EGNN that consumes the concatenated state with a scalar noise/time conditioning and returns two heads per node: a clean coordinate estimate $\widehat{\vect{r}}_{0,i}$ and logits $\vect{z}_i\in\R^{V}$. Using $\vect{z}_i$ we form probabilities $p_i=\softmax(\vect{z}_i)$ and the posterior-mean clean embedding
\begin{equation}
\widehat{\vect{e}}_{0,i} \;=\; \sum_{k=1}^{V} p_i(k)\,\vect{e}_k.
\end{equation}
Collecting per-node predictions, the denoiser yields $\widehat{\vect{x}}_0 = [\widehat{\vect{r}}_0 \;\|\; \widehat{\vect{e}}_0]$.

\paragraph{Training objective}
At a sampled noise level $t$ we construct noisy inputs $\vect{x}_t=\vect{x}_0+\sigma(t)\epsilon$ with $\epsilon\sim\Normal(0,I)$, enforcing COM-free noise in the coordinate block. The loss combines a coordinate regression term with a categorical cross-entropy:
\begin{equation}
\label{eq:loss}
\mathcal{L} \;=\; \lambda_{\mathrm{coord}}\;\E\!\left[ w(t)\,\|\widehat{\vect{r}}_0 - \vect{r}_0\|_2^2 \right]
\;+\; \lambda_{\mathrm{CE}}\;\E\!\left[ \sum_{i=1}^N \mathrm{CE}(\vect{z}_i, y_i) \right]
\;+\; \lambda_{\mathrm{geom}}\;\mathcal{L}_{\mathrm{geom}},
\end{equation}
where $w(t)$ is the EDM weight that flattens the effective noise-level distribution, and $\mathcal{L}_{\mathrm{geom}}$ optionally encodes geometric regularizers (e.g., connectivity, radius of gyration, cycle incentives). L2 normalization of the embedding tables stabilizes learning and empirically prevents collapse without additional penalties.

\paragraph{Reverse-time sampling}
Sampling follows the reverse SDE with Euler–Maruyama integration coupled with COM projection after each step. The drift uses the denoiser’s effective clean prediction and the schedule derivatives (computed by automatic differentiation on $\sigma$). This produces coordinate trajectories and categorical superpositions that resolve as noise decays.

\paragraph{Probability--flow ODE and likelihood} \label{subsec:pf-ode}
For the PF-ODE, we use the VE correspondence that yields a drift proportional to the interpolated score. In particular, with \eqref{eq:ve_score}--\eqref{eq:interp_score} and the standard VE choice $g(t)=\sqrt{2}\,\sigma'(t)$ one obtains
\begin{equation}
\label{eq:pf_drift}
\dot{\vect{x}}(t) \;=\; -\tfrac{1}{2}g(t)^2\,\hat{s}(\vect{x}_t,t)
\;=\; \alpha(t) \left(\vect{x}_t - \widehat{\vect{x}}_0(\vect{x}_t,t)\right),
\qquad \alpha(t) := \frac{g(t)^2}{2\,\sigma(t)^2},
\end{equation}
where $\widehat{\vect{x}}_0$ uses the coordinate head and the class-probability–weighted mean embedding. We integrate \eqref{eq:pf_drift} forward from $t{=}0$ (clean) to $t{=}1$ (noisy) with Heun's method in the COM-free subspace.

The log-density is then computed by \eqref{eq:pf_ode_likelihood}. The terminal density $p_1(\vect{x}_1)$ factors over nodes and splits into a coordinate block and an embedding block. In the coordinate block, the COM constraint reduces the degrees of freedom of a complex with $n$ atoms by $3$ (in 3D), leading to
\begin{equation}
\log p_{1,\mathrm{coord}} \;=\; -\frac{1}{2\sigma(1)^2}\|\vect{r}_1\|_2^2 \;-\; \frac{(3n-3)}{2}\log\!\big(2\pi\sigma(1)^2\big),
\end{equation}
while the embedding block is isotropic Gaussian in $\R^{d}$ per node (with $d$ the embedding dimension). Summing the two yields $\log p_1(\vect{x}_1)$.

\paragraph{Divergence via Hutchinson under COM constraints}
We estimate the divergence in \eqref{eq:pf_ode_likelihood} with a Hutchinson trace estimator. Let $\epsilon\sim\Normal(0,I)$ be a probe in the joint state; project its coordinate block to the COM-free subspace to match the dynamics. A single vector–Jacobian product computes $\epsilon^\top \nabla_{\vect{x}} \tilde{f}(\vect{x},t)$, and averaging a few probes per time step produces a low-variance estimate. To ensure consistency, the denoiser and schedule are evaluated with gradients enabled along the PF-ODE path, and COM projection is applied to \emph{both} drifts and probes.

\begin{algorithm}[h]
\caption{Training (embedding diffusion with cross-entropy)}
\label{alg:training}
\begin{algorithmic}[1]
\Require dataset $\{\vect{r}_0, y\}$, encoders $\mathrm{Enc}$, schedule $\sigma(t)$, EGNN denoiser $D$
\While{not converged}
  \State sample minibatch; form embeddings $\vect{e}_0 \leftarrow \mathrm{Enc}(y)$ (L2-normalize)
  \State COM-center coordinates; sample $t\sim p(t)$; set $\vect{x}_0 \gets [\vect{r}_0\|\vect{e}_0]$
  \State sample $\epsilon\sim\Normal(0,I)$, make coordinate block COM-free; set $\vect{x}_t \gets \vect{x}_0 + \sigma(t)\epsilon$
  \State $(\widehat{\vect{r}}_0,\vect{z}) \leftarrow D(\vect{x}_t, t)$; $\widehat{\vect{e}}_0 \leftarrow \sum_i \softmax(\vect{z})_i\,\vect{e}_i$
  \State compute $\mathcal{L}$ via \eqref{eq:loss}; update parameters
\EndWhile
\end{algorithmic}
\end{algorithm}

\begin{algorithm}[h]
\caption{PF-ODE likelihood per complex}
\label{alg:likelihood}
\begin{algorithmic}[1]
\Require clean state $\vect{x}_0$, schedule $\sigma$, denoiser $D$, time grid $0=t_0<\dots<t_T=1$
\State COM-center coordinates in $\vect{x}_0$; set $\ell \leftarrow 0$, $\vect{x}\leftarrow \vect{x}_0$
\For{$k=0,\dots,T-1$}
  \State $\widehat{\vect{x}}_0 \leftarrow \text{posterior mean from } D(\vect{x},t_k)$
  \State $\tilde{f} \leftarrow \alpha(t_k)\big(\vect{x}-\widehat{\vect{x}}_0\big)$ \hfill (COM-free)
  \State sample $m$ Hutchinson probes $\epsilon^{(j)}$; project coordinate blocks to COM-free
  \State $\widehat{\mathrm{div}} \leftarrow \tfrac{1}{m}\sum_j \epsilon^{(j)\top} \nabla_{\vect{x}} \tilde{f}(\vect{x},t_k)$
  \State $\ell \leftarrow \ell + \widehat{\mathrm{div}}\,(t_{k+1}{-}t_k)$
  \State Heun step: predictor $\vect{x}'\!=\!\vect{x}+\tilde{f}\,\Delta t$; corrector with $\tilde{f}'$ at $t_{k+1}$
  \State COM-center coordinates of the updated state
\EndFor
\State compute $\log p_1(\vect{x})$ with DOF correction; \Return $\log p_1(\vect{x}) + \ell$
\end{algorithmic}
\end{algorithm}

\subsubsection{Diffusion Model Training and Evaluation}

\paragraph{Data preparation}
To prepare the protein-ligand complex structures for diffusion model training and evaluation, all complexes underwent a standardized preprocessing. Structural files for proteins (PDB) and ligands (SDF) were parsed using Gemmi (v.0.7.0) and processed using Numpy (v.1.26.4), PyTorch (v.2.7.0) and PyTorch Geometric (v.2.6.1). 

Ligand atom coordinates and types were extracted from the Gemmi objects (hydrogens omitted). Ligand atom coordinates were combined into a ligand coordinate matrix ($\mathbf{C}_L$) and ligand atom types were one-hot-encoded to construct the ligand feature matrix ($\mathbf{F}_L$) using the following 10 categories:

\begin{itemize}[itemsep=1pt,topsep=5pt,parsep=0pt,partopsep=5pt]
    \item C, N, O, S, B, Br, Cl, P, I, F
\end{itemize}

For each residue in the protein, the pairwise distances between the residue's atoms and all ligand atoms were computed. A protein residue was designated as a pocket residue if any of its atoms were within 5Å of a ligand atom. The C-alpha ($\text{C}_\alpha$) coordinates of all identified pocket residues were combined to form the protein coordinate matrix ($\mathbf{C}_P$). The corresponding residue type 3-letter-codes were standardized and one-hot-encoded to create the protein feature matrix ($\mathbf{F}_P$), using the following 21 categories (with metal ions aggregated into a single group): 

\begin{itemize}[itemsep=1pt,topsep=5pt,parsep=0pt,partopsep=5pt]
    \item ALA, CYS, ASP, GLU, PHE, GLY, HIS, ILE, LYS, LEU, MET, ASN, PRO, GLN, ARG, SER, THR, VAL, TRP, TYR, METAL
\end{itemize}

In summary, four arrays were computed to represent a complex:

$$\begin{aligned}
\text{Ligand coordinates:} & \quad \mathbf{C}_L \in \mathbb{R}^{N_L \times 3} \\
\text{Protein coordinates:} & \quad \mathbf{C}_P \in \mathbb{R}^{N_P \times 3} \\
\text{Ligand features:} & \quad \mathbf{F}_L \in \mathbb{R}^{N_L \times 10} \\
\text{Protein features:} & \quad \mathbf{F}_P \in \mathbb{R}^{N_P \times 21}
\end{aligned}$$

where $N_L$ is the number of ligand atoms and $N_P$ is the number of pocket residues. These four arrays were then merged in a PyTorch geometric data object for each complex. To construct datasets, these data objects were combined into PyTorch datasets according to the splitting outlined in section \ref{met:Datasets}.

\paragraph{Training and hyperparameter optimization}
To maximize the model's performance and ensure robust training, we performed a systematic hyperparameter optimization. The search space was defined across several key architectural and training features, including learning rate, the number of sampling steps, the number of EGNN layers, the joint and hidden embedding sizes and the edge feature dimensionality. Initially, we employed a randomized search with short training runs to set reasonable search space boundaries. Then we explored the resulting search space outlined in Table~\ref{tab:hyperparams} using Bayesian optimization with the Optuna (v.4.4.0) python package. The best parameter combination was selected based on a joint assessment of validation loss and sampling-based evaluations, including percent of fragmented ligands, mean number of ligand fragments, mean number of rings, mean ring size and Jensen-Shannon divergences between training and sampled ligand atom and protein residue type distributions. The model was trained using the final configuration (in the rightmost column of the Table \ref{tab:hyperparams}), resulting in a model with with 6'386'128 parameters. The training process was continued for as long as the training and validation losses continued to decrease, but was stopped eventually after 1400 epochs. The last saved model (epoch 1390) was used for PF-ODE trajectory-based OOD evaluation. Protein-ligand interactions sampled from this model contained ligands with 92\% chemical validity, 72\% of unfragmented ligands and a mean number of 2.34 rings with mean size of 5.31 per molecule. Divergences in atom and amino acid type distributions (Jensen–Shannon) from the training data were very low, at 0.0017 and 0.004, respectively.

\begin{table}[htbp]
    \centering
    \small
    \renewcommand{\arraystretch}{1.2}
    \begin{tabular}{l p{8cm} c}
        \toprule
        \textbf{Parameter} & \textbf{Search Space} & \textbf{Selected} \\
        \midrule
        \texttt{num\_sampling\_steps} & $\{50, 100, 200, 300, 400\}$ & $400$ \\
        \texttt{joint\_nf} & $\{32, 64, 128, 256\}$ & $256$ \\
        \texttt{hidden\_nf} & $\{64, 128, 256\}$ & $256$ \\
        \texttt{n\_layers} & $\{4, 6, 8\}$ & $6$ \\
        \texttt{edge\_embedding\_dim} & $\{8, 16, 32, 64\}$ & $64$ \\
        \texttt{learning\_rate} & $\{5\times10^{-5}, 1\times10^{-4}, 5\times10^{-4}, 1\times10^{-3}, 5\times10^{-3}\}$ & $1\times10^{-4}$ \\
        \texttt{batch\_size} & $\{16, 32, 64, 128\}$ & $16$ \\
        \bottomrule
    \end{tabular}
    \caption{\textbf{Hyperparameter search spaces} and selected optimal values used for diffusion model training.}
    \label{tab:hyperparams}
\end{table}

\subsection{GEMS Training and Evaluation}
The Graph Neural Network (GNN) for efficient molecular scoring (GEMS) binding affinity prediction model \cite{GEMS} was trained using the same dataset splits as the diffusion model, following the authors' original instructions. 

\paragraph{Data preparation and training} Custom datasets were constructed for the training, validation, CASF2016, and OOD sets. This involved computing graph representations of the complexes across the entire PDBbind v.2020 dataset, featurized using language model embeddings derived from ESM2 \cite{Lin2022}, Ankh \cite{Elnaggar2023}, and ChemBERTa2 \cite{Ahmad2022}.
The GEMS model was trained on the training dataset using 5-fold cross-validation (CV) and the default hyperparameters: \begin{itemize}[itemsep=0.5pt]
    \item Loss Function: Root Mean Squared Error (RMSELoss)
    \item Optimizer: Stochastic Gradient Descent (SGD)
    \item Batch Size: 256
    \item Learning Rate: 0.001 (constant)
    \item Weight Decay: 0.001
\end{itemize}
The model was trained using early stopping, terminating the process when the validation RMSE did not decrease for 100 epochs. The last model that improved validation RMSE was then used.

\paragraph{Model selection and evaluation} We performed a hyperparameter search on the dropout rate applied before the final regression head. The model with a dropout rate of $0.6$ was selected, as it showed the lowest and most consistent validation loss across all five folds of the cross-validation. To evaluate performance on the validation, CASF2016, and OOD datasets, the predictions from the five models generated during the 5-fold CV were averaged to create an ensemble prediction. 

\paragraph{Evaluation metrics} The coefficient of determination ($R^2$) was chosen as the main readout for model performance. Other common regression metrics, such as the Pearson Correlation Coefficient (PCC) and Root Mean Squared Error (RMSE), were disregarded due to specific limitations. PCC is insensitive to scale offsets, meaning a high correlation score can coexist with large absolute errors. Conversely, prediction RMSE values are sensitive to variations in the label spread across different datasets, distorting performance comparisons. For datasets with a narrow label distribution, naive models (which predict the dataset mean for all complexes) can achieve deceptively low RMSE values, whereas the same naive models would achieve poor RMSE on datasets with a wide label distribution. $R^2$ normalizes performance relative to the target variance, providing a fairer comparison across diverse datasets.
To compute complex-wise errors, we applied a similar normalization to account for varying label spread across datasets. Specifically, we used Variance-Normalized Errors (VNE), calculated by dividing the absolute error of each prediction by the variance of the dataset's labels. This effectively rescales the error relative to the difficulty of the dataset, providing a metric that reflects how much the model improves sample scoring over a naive baseline that simply predicts the dataset mean.

\subsection{Protein-Ligand Complex Similarity Computation}
To compute a similarity score between two protein-ligand complexes, ligand similarity (Tanimoto), protein similarity (TM scores), and a pocket-aligned ligand root-mean-square-deviation (RMSD) were combined into an overall structure-based similarity score. This combination captures the presence of similar interaction patterns between two protein-ligand complexes on a structural level. These similarity scores were precomputed in our earlier work \cite{GEMS} and were reused here:

\begin{itemize}
    \item The Tanimoto similarity score to compare ligand structure was computed using RDKit library v.2024.03.3 based on count-based molecular fingerprints (GetCountFingerprint) of size 2048 and radius 2.
    
    \item TM-align \cite{Zhang2005} was used to assess protein structural similarity and to align complexes based on their protein residues. This algorithm identifies the best structural alignment between protein pairs, using a scoring function based on the RMSD of aligned residues and a length normalization factor, making it useful for identifying structural similarities regardless of sequence similarity. To find if one of the proteins is well represented within the other, we considered TM-scores normalized by the lengths of both amino acid chains and used the highest result.
    
    \item A root-mean-square-deviation (RMSD) is computed to compare ligand positioning in the binding pocket of the proteins. For this, the protein structures were aligned together with the bound ligands by applying the translation vector and rotation matrix that TM-align used to generate an optimal protein alignment. Then, the atom coordinates of the aligned ligands were compared by computing the RMSD between the nearest points in the two point clouds. In case of different atom counts, the distances from each atom in the larger ligand to its nearest neighbor in the smaller ligand, resulting in larger RMSD values for ligands with different atom counts.
\end{itemize}

To aggregate these similarity scores into an simple aggregate score representing similarity on the protein-ligand interaction level, we defined 
\[
  S = \max\left\{
    T_{\text{Tanimoto}} + T_{\text{TM-score}} + \bigl(1 - \mathrm{RMSD}\bigr),\; 0
  \right\}.
\]

Complex pairs possessing an identical pocket, identical ligands, and matching ligand binding pose achieve the maximum score of $3.0$. This score is primarily suited for our use case of detecting and comparing highly similar complexes and is robust in that range. However, its robustness decreases when comparing and ranking complexes in low-level similarity ranges, which is not relevant to our current analysis goals.

\subsection{Trajectory-aware out-of-distribution detection} \label{subsec:trajectory}
The PF-ODE in \eqref{eq:pf_ode_likelihood} yields a scalar log-likelihood $\ell(x)$ and, along the same forward integration, a set of trajectory summaries that quantify how the flow behaves on the way from clean to noise. We detect OOD samples by embedding these \emph{PF-ODE features} into a low-dimensional space and fitting separate density models for in-distribution (ID) and OOD trajectories. At test time we use a log-density ratio between the ID and OOD models as a trajectory-aware OOD score.

\paragraph{Feature construction}
We aggregate $d{=}19$ PF-ODE trajectory statistics per complex (full list in Fig.~\ref{fig:feature_importances}). For exposition we display only the most discriminative coordinates:
{\setlength{\arraycolsep}{2pt}
\[
\phi(x)=
\left[
\begin{array}{@{}l@{}}
\ell(x),\ \text{path tortuosity},\ \text{path efficiency},\\
\text{max Lipschitz},\ \text{VF spikiness},\ \text{mean acceleration},\\
\text{total flow energy},\ \text{VF }\ell_2\text{ mean},\ \text{coord--feature coupling, ...}
\end{array}
\right].
\]
}
In practice, we apply a train-only preprocessing pipeline (quantile transform, standardization, and PCA) to the full $19$-dimensional feature vector and retain the top $m$ principal components (with $m{=}15$ in our experiments).

\paragraph{Interpretation}
The selected statistics describe how hard the flow works and how steadily it does so. Paths that wander—summarized by tortuosity and its inverse, efficiency—indicate geometric inefficiency that is typical under shift. Local Lipschitz estimates read off stiffness: nearby states elicit disproportionately different drifts when the model is acting on OOD data. Field size and burstiness, captured by the vector-field $\ell_2$ mean and a spikiness ratio, reveal large corrective pushes that concentrate at intermediate noise scales. Mean acceleration records the lack of temporal smoothness in the drift, while total flow energy, $\sum_t f_t^\top \Delta x_t$, aggregates the directed work along the trajectory. Finally, coord–feature coupling measures how spatial updates co-move with feature updates across atoms and residues.

Empirically these quantities move together under distribution shift: stiffer and burstier fields, higher acceleration, and longer, less efficient paths tend to co-occur, whereas in-distribution trajectories remain shorter, smoother, and energetically modest. This coordinated variation explains their strong loadings on the leading PCA component, yielding a single, high-variance axis along which ID and OOD separate cleanly.

\paragraph{Class-conditional density model}
Let $\mathcal{I}$ index ID training samples and $\mathcal{O}$ index OOD calibration samples for a given shift condition, with corresponding PF-ODE features
\[
X_{\mathrm{ID}} = \{\phi(x_i)\}_{i\in\mathcal{I}},
\qquad
X_{\mathrm{OOD}} = \{\phi(x_j)\}_{j\in\mathcal{O}}.
\]
We first learn a preprocessing map
\[
T \;=\; \Pi_m \circ Z \circ Q,
\]
where $Q$ is a marginal quantile transform to normal scores (fitted on $X_{\mathrm{ID}}$), $Z$ is standardization (mean/variance estimated from $X_{\mathrm{ID}}$), and $\Pi_m$ projects onto the top $m$ principal components (PCA fit on $X_{\mathrm{ID}}$). This yields low-dimensional embeddings
\[
z_i = T\phi(x_i)\in\mathbb{R}^m, \quad i\in\mathcal{I},
\qquad
z_j = T\phi(x_j)\in\mathbb{R}^m, \quad j\in\mathcal{O}.
\]

In this PC space we fit separate Gaussian KDEs for ID and OOD:
\[
\hat p_{\mathrm{ID}}(z)
=
\frac{1}{|X_{\mathrm{ID}}|\,h_{\mathrm{ID}}^m}
\sum_{i\in\mathcal{I}}
\frac{1}{(2\pi)^{m/2}}
\exp\!\Big(-\tfrac{1}{2}\big\|\tfrac{z-z_i}{h_{\mathrm{ID}}}\big\|_2^2\Big),
\]
\[
\hat p_{\mathrm{OOD}}(z)
=
\frac{1}{|X_{\mathrm{OOD}}|\,h_{\mathrm{OOD}}^m}
\sum_{j\in\mathcal{O}}
\frac{1}{(2\pi)^{m/2}}
\exp\!\Big(-\tfrac{1}{2}\big\|\tfrac{z-z_j}{h_{\mathrm{OOD}}}\big\|_2^2\Big),
\]
with isotropic bandwidths $h_{\mathrm{ID}}, h_{\mathrm{OOD}}>0$. Each bandwidth is selected by $K$-fold cross-validation on the corresponding class, maximizing the mean log-likelihood on held-out folds.

\paragraph{Scoring and decision rule (test time)}
For a test complex $x^\star$ we compute $z^\star = T\phi(x^\star)$ and define the log-density ratio
\begin{equation}
\label{eq:traj-ldr-score}
L(x^\star)
\;=\;
\log \hat p_{\mathrm{ID}}\!\big(z^\star\big)
\;-\;
\log \hat p_{\mathrm{OOD}}\!\big(z^\star\big).
\end{equation}
Large positive values of $L(x^\star)$ indicate that the trajectory looks more ID-like than OOD-like, while negative values indicate the opposite. We use the negated ratio
\begin{equation}
\label{eq:traj-ood-score}
S(x^\star) \;=\; -\,L(x^\star)
\end{equation}
as a continuous OOD score, where larger $S$ means more OOD-like. A binary decision is obtained by thresholding $S(x)$ at a level $\tau$ chosen on a labeled calibration set of ID and OOD examples, for example by sweeping $\tau$ to optimize a desired trade-off between false positives and true positives. AUROC is reported by sweeping $\tau$ over the full range of observed scores, treating OOD as the positive class. Note that
\[
L(x) \;=\; \log\frac{P_{\mathrm{ID}}(x)}{P_{\mathrm{OOD}}(x)},
\]
so $L(x)>0$ favors the ID hypothesis and $L(x)<0$ favors the OOD hypothesis. All results reported below are obtained without any additional outlier trimming of either class.

\paragraph{Bootstrap evaluation under class imbalance}
For each OOD split we partition ID and OOD complexes into training, validation, and test subsets. The two KDEs and the preprocessing map $T$ are fit on their respective training data, and the decision threshold $\tau$ is fixed once by maximizing the F1-score for the OOD class on the validation set. Because the number of ID decoys can be much larger than the number of OOD binders, naive evaluation on the full test pool would overweight specificity. To obtain a balanced and robust estimate, we keep the OOD test set fixed and repeatedly draw balanced ID test subsets of the same size. For each draw we evaluate the classifier, treating OOD as the positive class, and record AUROC, accuracy, precision, recall, F1 and specificity. We finally report, for each dataset, the mean and standard deviation of these metrics over multiple such bootstrap test subsets. All plots use a representative test subset, while the legends and tables display the aggregated mean $\pm$ standard deviation.

\paragraph{Confidence and risk stratification}
We obtain a notion of confidence by ranking $S(x)$ against the empirical score distribution on ID calibration data. The percentile rank of $S(x)$ within this reference distribution can be mapped to qualitative risk levels (e.g.\ Low/Medium/High) via fixed quantile cutoffs, providing a simple way to stratify predictions by trajectory atypicality.

\paragraph{Why trajectory features help}
The scalar PF-ODE likelihood $\ell(x)$ already aggregates information across noise scales via the divergence integral in \eqref{eq:pf_ode_likelihood}. Trajectory summaries add complementary signals that are characteristic of OOD behavior: larger and spikier vector fields, higher and less stable local Lipschitz factors, increased acceleration, and longer, less efficient paths between states. Modeling the \emph{joint} behavior of these quantities (including $\ell(x)$) through a class-conditional density ratio in a low-dimensional PC space yields substantially stronger separation than calibrating $\ell(x)$ alone, while remaining simple and data-efficient. This ensemble of features is what allowed us to successfully draw a strong decision boundary between the training data and the $\alpha$-carbonic anhydrase structures (\texttt{3dd0}) dataset where the likelihood scores did not provide sufficient discriminative signal. We provide further analysis of this special case in Section~\ref{suppl:robustness}.

\paragraph{Empirical performance}
Table \ref{tab:traj-ldr-ood} summarizes the performance of the trajectory-aware LDR classifier across multiple OOD test splits. For each dataset we report the mean $\pm$ standard deviation of AUROC, accuracy, F1, specificity, precision and recall over balanced bootstrap test subsets, with OOD treated as the positive class. Overall, the method achieves high AUROC and strong specificity on most shifts, with some degradation on the most challenging benchmarks.

\subsection{Embedding–space baseline on GEMS}
\label{sec:baseline-gems}

\paragraph{Methodology}
We construct a non–generative baseline that operates directly in the representation space of a pretrained GEMS encoder. For each complex \(x\), we run the encoder \(R\) stochastic passes (e.g.\ dropout, data augmentation) and obtain vectors \(\{E_r(x)\in\mathbb{R}^d\}_{r=1}^{R}\). We then store a single summary embedding
\[
\bar{E}(x)\;:=\;\frac{1}{R}\sum_{r=1}^{R} E_r(x)\in\mathbb{R}^d,
\]
which serves as a fixed descriptor of \(x\) in all subsequent experiments. All baselines described below operate only on these precomputed embeddings and never modify or retrain GEMS.

\paragraph{Unsupervised detectors on ID embeddings}
Given the ID training set embeddings \(\{\bar{E}(x)\}_{x\in\mathcal{D}_{\mathrm{ID}}^{\mathrm{train}}}\), we fit a family of classical one–class / anomaly detectors. Before fitting, we standardize all features with a single affine transform (mean and variance estimated on \(\mathcal{D}_{\mathrm{ID}}^{\mathrm{train}}\)). Each detector then assigns an anomaly score \(S(x)\) to a test embedding, with the convention that larger values indicate evidence for OOD:
\begin{enumerate}
  \item \textbf{Mahalanobis distance.} We estimate an ID mean \(\mu\in\mathbb{R}^d\) and a regularized covariance
  \[
  \Sigma \;=\; \mathrm{Cov}\bigl(\bar{E}(x)\bigr) \;+\; \lambda I_d,\qquad \lambda>0,
  \]
  and score a point by its Mahalanobis distance
  \[
  S_{\mathrm{mah}}(x)\;=\;\sqrt{\bigl(\bar{E}(x)-\mu\bigr)^{\!\top}\Sigma^{-1}\bigl(\bar{E}(x)-\mu\bigr)}.
  \]
  \item \textbf{k–NN distance.} We build a \(k\)–nearest–neighbor index over the ID embeddings (Euclidean metric in the standardized space). The score is the mean distance to the \(k\) nearest ID neighbors,
  \[
  S_{\mathrm{knn}}(x)\;=\;\frac{1}{k}\sum_{i=1}^{k}\left\|\bar{E}(x)-\bar{E}(x^{(i)})\right\|_2,
  \]
  where \(x^{(i)}\) denotes the \(i\)-th nearest ID training complex to \(x\).
  \item \textbf{Local Outlier Factor (LOF).} We fit a Local Outlier Factor model on the ID embeddings with novelty detection enabled. LOF returns a negative log–density–like score; we negate it so that larger values correspond to more anomalous points:
  \[
  S_{\mathrm{lof}}(x)\;=\; - \mathrm{LOF\_score}(x).
  \]
  \item \textbf{Isolation Forest.} We fit an Isolation Forest on the ID embeddings. The model outputs higher scores for typical points; we again negate these values,
  \[
  S_{\mathrm{if}}(x)\;=\; - \mathrm{IF\_score}(x),
  \]
  so that higher scores indicate higher isolation (OOD).
  \item \textbf{One–Class SVM.} Finally, we fit a one–class SVM with an RBF kernel on the ID embeddings. Its decision function is positive for inliers and negative for outliers; we define
  \[
  S_{\mathrm{svm}}(x)\;=\; - \mathrm{OCSVM\_decision}(x),
  \]
  so that larger values again correspond to stronger OOD evidence.
\end{enumerate}
These five detectors cover distance–based, density–ratio, isolation–based and margin–based paradigms, all applied to the same fixed GEMS representation.

\paragraph{Calibration and decision rules}
For each OOD test split, we re–use the same ID training embeddings but construct a small validation set to obtain a decision threshold. Concretely, given ID and OOD embeddings for a particular split, we randomly partition both into validation and test subsets. Each detector \(S\) is trained on \(\mathcal{D}_{\mathrm{ID}}^{\mathrm{train}}\) only and then evaluated on the validation embeddings:
\[
\{S(x)\}_{x\in\mathcal{D}_{\mathrm{ID}}^{\mathrm{val}}}, \qquad
\{S(x)\}_{x\in\mathcal{D}_{\mathrm{OOD}}^{\mathrm{val}}}.
\]
We treat OOD as the positive class and search over a grid of candidate thresholds \(\tau\), selecting the one that maximizes the F1 score on this validation set. This yields a single calibrated threshold \(\tau_S\) per detector and per OOD split, which is \emph{not} adjusted on the test data.

\paragraph{Evaluation protocol}
On the test embeddings for a given split, we report both threshold–free and thresholded performance. For AUROC, we use the raw anomaly scores \(S(x)\) and treat OOD as the positive class. For calibrated decisions, we predict
\[
\hat{y}(x)\;=\;\mathbb{I}\bigl\{ S(x) > \tau_S \bigr\}
\]
and compute accuracy, precision, recall, F1 and specificity. To reduce variance and to match the protocol used for the trajectory–aware LDR classifier, all metrics are estimated on balanced bootstrap test subsets: in each of \(B\) bootstrap rounds we subsample the ID test set to match the size of the OOD test set, evaluate the metrics, and finally report the mean and standard deviation over these \(B\) replicates.

We evaluate performance across the same held-out OOD splits used for the trajectory-aware LDR classifier. For each split we train all five detectors (Mahalanobis, $k$-NN distance, Local Outlier Factor, Isolation Forest, and one-class SVM) on the ID training embeddings, and then \emph{select the best detector per dataset by AUROC}. This gives a generous reference for what can be achieved by post-hoc anomaly detection on a fixed GEMS representation.

\paragraph{Why we emphasize AUROC}
In this setting AUROC is a more robust indicator of detector quality than the thresholded metrics. First, AUROC is threshold-free: it depends only on the ranking induced by the anomaly scores and is invariant to the particular decision threshold. In contrast, accuracy, F1, precision, recall and specificity are all evaluated at a \emph{single} threshold that we explicitly tune on a small validation set to maximize F1. This calibration step is useful to obtain a reasonable operating point, but it can inflate F1 and accuracy even for relatively weak detectors, especially on the balanced 50/50 ID/OOD mixtures used in our bootstrap protocol. Secondly, the balanced evaluation itself is convenient for comparison but does not reflect realistic class imbalance where OOD events are rare; under such imbalance, the same threshold would typically yield very different precision and accuracy, while AUROC remains unchanged. For these reasons, when comparing to the trajectory-aware LDR classifier we primarily interpret AUROC as the main measure of intrinsic separability between ID and OOD, and view F1/accuracy as secondary, operating-point-dependent statistics.

\paragraph{Rationale and limitations}
This embedding–space baseline assumes that ID complexes occupy a relatively compact region of the GEMS representation space, and that OOD complexes are either geometrically distant from this region (captured by \(S_{\mathrm{mah}}\) and \(S_{\mathrm{knn}}\)) or sparsely supported / difficult to fit by standard one–class models (captured by LOF, Isolation Forest and one–class SVM). The approach is entirely post–hoc: it requires no generative training, no access to the PF–ODE dynamics, and can be implemented with off–the–shelf anomaly detection packages given a frozen GEMS encoder. At the same time, it is a purely static snapshot of representation geometry and does not integrate information along noise–time trajectories or exploit intermediate superpositions of identities.

\subsection{Rate-In baseline on GEMS}

We include Rate-In \cite{zeevi2025rate} as a non-diffusion baseline for OOD detection. Rate-In builds on MC dropout~\cite{gal2016dropout}, but replaces a fixed global dropout rate with \emph{adaptive, per-sample, per-layer} dropout rates chosen to preserve information flow through the network. Intuitively, in-distribution (ID) inputs admit relatively large dropout rates without disrupting the computation, whereas OOD inputs are much more fragile under dropout perturbations. Rate-In turns this robustness gap into an OOD signal.

\paragraph{Information-preserving dropout}
Consider a network with $L$ dropout layers and layer-$\ell$ activations $h_{\text{in}}^{(\ell)}$ and $h_{\text{out}}^{(\ell)}$. For a given input $x$ and dropout rate $p_\ell$ at layer $\ell$, Rate-In measures how much information about $h_{\text{in}}^{(\ell)}$ survives through dropout by approximating the mutual information
\[
I_\ell(p_\ell) \approx I\big(h_{\text{in}}^{(\ell)}; h_{\text{out}}^{(\ell)} \,\big|\, p_\ell\big).
\]
The “no-dropout” reference
\[
I_{\text{full}}^{(\ell)} := I\big(h_{\text{in}}^{(\ell)}; h_{\text{out}}^{(\ell)} \,\big|\, p_\ell = 0\big)
\]
is computed once per input. The relative information loss at layer $\ell$ is
\[
\delta I_\ell(p_\ell) = \frac{I_{\text{full}}^{(\ell)} - I_\ell(p_\ell)}{I_{\text{full}}^{(\ell)}}.
\]
Rate-In chooses the largest dropout rate $p_\ell^\star$ for which $\delta I_\ell(p_\ell)$ stays below a target threshold $\epsilon$ (we use $\epsilon = 0.1$ in all experiments).

We approximate
\[
I(X;Y) \approx -\tfrac{1}{2}\log(1 - \rho^2),
\]
where $\rho$ is the Pearson correlation between flattened $X$ and $Y$. To reduce variance, we average over $K=5$ forward passes with independent dropout masks at the current $p_\ell$.

\paragraph{Per-layer rate optimisation}
For each input $x$ and each dropout layer $\ell$, we solve for $p_\ell^\star$ with a simple multiplicative update:

\begin{enumerate}
    \item Compute $I_{\text{full}}^{(\ell)}$ with dropout disabled at layer $\ell$.
    \item Initialise $p_\ell$ (we use $p_\ell = 0.3$).
    \item Repeat for up to $N_{\max}=20$ iterations:
    \begin{enumerate}
        \item Enable dropout only at layer $\ell$ with rate $p_\ell$, run $K$ stochastic forward passes, and estimate $I_\ell(p_\ell)$ by averaging the MI over these passes.
        \item Compute $\delta I_\ell(p_\ell)$.
        \item If $\lvert \delta I_\ell(p_\ell) - \epsilon\rvert$ is below a tolerance $\delta$ (we use $\delta=0.02$), stop and set $p_\ell^\star \gets p_\ell$.
        \item If $\delta I_\ell(p_\ell) > \epsilon$ (too much information loss), decrease dropout: $p_\ell \leftarrow 0.9\,p_\ell$.
        \item If $\delta I_\ell(p_\ell) < \epsilon$ (too little information loss), increase dropout: $p_\ell \leftarrow \min(1.1\,p_\ell, 0.9)$.
    \end{enumerate}
\end{enumerate}

This yields, for each input, an optimised rate vector $\mathbf{p}^\star = (p_1^\star,\dots,p_L^\star)$ and per-layer iteration counts $n_\ell$.

\paragraph{OOD scoring from variance and optimisation dynamics}
Given the optimised rates $\mathbf{p}^\star$, we run MC dropout with all layers active and fixed rates $p_\ell^\star$, producing $M=30$ stochastic predictions $\{\hat y_m\}_{m=1}^M$ for the regression output (binding affinity in our case).

Rate-In then constructs a \emph{scalar} OOD score by combining three signals:

\begin{enumerate}
    \item \textbf{Predictive variance}
    \[
    \sigma_{\text{pred}}^2 = \mathrm{Var}\big(\{\hat y_m\}_{m=1}^M\big),
    \]
    the usual MC-dropout epistemic uncertainty.
    \item \textbf{Rate variance}
    \[
    \sigma_{\text{rate}}^2 = \mathrm{Var}\big(\{p_\ell^\star\}_{\ell=1}^L\big),
    \]
    which measures how heterogeneous the optimised dropout rates are across layers.
    \item \textbf{Convergence difficulty}
    \[
    \bar n = \frac{1}{L}\sum_{\ell=1}^L n_\ell,
    \]
    the average number of iterations required to satisfy the information-loss constraint.
\end{enumerate}

We combine these into a single OOD score
\begin{equation}
s_{\text{OOD}}(x)
= \sigma_{\text{pred}}^2
+ \lambda_1 \,\sigma_{\text{rate}}^2
+ \lambda_2 \,\frac{\bar n}{N_{\max}},
\label{eq:ratein-ood-score}
\end{equation}
with fixed weights $\lambda_1 = 0.3$ and $\lambda_2 = 0.2$. Large values of $s_{\text{OOD}}$ indicate high uncertainty and/or unstable information flow, and are interpreted as evidence for OOD.

\paragraph{Instantiation on GEMS}
We apply Rate-In to the GEMS graph neural network~\cite{GEMS}, a protein–ligand binding affinity predictor. GEMS operates on molecular graphs with rich node, edge, and global features and contains $L=6$ dropout layers across its node/edge MLPs and global pooling blocks.

To implement Rate-In:

\begin{itemize}
    \item We register PyTorch forward hooks on every dropout-containing module to cache $h_{\text{in}}^{(\ell)}$ and $h_{\text{out}}^{(\ell)}$ for MI estimation.
    \item For batched molecular graphs, activations are flattened over nodes/edges within each molecule before computing correlations.
    \item During rate optimisation for a given layer $\ell$, we disable dropout in all other layers and iteratively adjust $p_\ell$ as described above.
    \item After obtaining $\mathbf{p}^\star$, we enable dropout in all layers and collect $M$ predictions to compute $s_{\text{OOD}}(x)$ via~\eqref{eq:ratein-ood-score}.
\end{itemize}




\subsection{Author Contributions}

VA developed the diffusion methodology and software stack, implementing the full pipeline from scratch for molecular complex dynamics: (i) training infrastructure (losses, conditioning, and sampling for evaluation), (ii) diffusion/noise schedules and time discretization, (iii) numerical solvers and sampling procedures, (iv) probability-flow ODE (PF-ODE) integration, and (v) likelihood evaluation via PF-ODE divergence estimation (including Hutchinson trace estimation). VA ported and integrated a modified EGNN backbone adapted from DiffSBDD into this pipeline. VA further conceived and implemented the CDCD-style unified continuous diffusion scheme to embed discrete chemical identities and continuous coordinates in a shared continuous state, enabling a single joint diffusion process and self-consistent likelihood/OOD evaluation. VA additionally designed and implemented the PF-ODE trajectory-statistics scoring features, trained the trajectory-scoring models using trajectory statistics computed by DG, implemented and trained both baselines on GEMS outputs generated by DG, and wrote the majority of the methods and the mathematical sections.

\medskip DG curated the datasets, defined the data splitting strategies, and formulated the underlying graph modeling approach. DG conducted the training of the diffusion models, implementing a custom evaluation pipeline to monitor diffusion model learning and biochemical quality metrics of sampled structures simultaneously throughout the training process. DG executed the large-scale computation of log-likelihoods and trajectory features across all datasets using a pipeline developed by VA. DG further performed the bioinformatic validation of the diffusion-based OOD scores and trained and evaluated GEMS for binding affinity predictions. Finally, DG performed the data analysis, designed the figures, and wrote the majority of the main text.\\
\\
SM conceived the project and DG initiated the collaboration \\
\\
RB provided advice throughout the project and critically reviewed the manuscript. \\
\\
The project was supervised by SM

\subsection{Acknowledgments}
We thank Dr. Peter Stockinger for carefully reading the manuscript and providing valuable feedback.

\clearpage
\bibliography{diffusion_paper}

@article{Lin2022, 
  year     = {2022}, 
  keywords = {Machine Learning, proteins, Structural Biology}, 
  title    = {Evolutionary-scale prediction of atomic level protein structure with a language model}, 
  author   = {Lin, Zeming and Akin, Halil and Rao, Roshan and Hie, Brian and Zhu, Zhongkai and Lu, Wenting and Smetanin, Nikita and Verkuil, Robert and Kabeli, Ori and Shmueli, Yaniv and Costa, Allan Dos Santos and Fazel-Zarandi, Maryam and Sercu, Tom and Candido, Salvatore and Rives, Alexander}, 
  doi      = {10.1101/2022.07.20.500902}, 
  url      = {https://doi.org/10.1101/2022.07.20.500902}, 
  abstract = {Artificial intelligence has the potential to open insight into the structure of proteins at the scale of evolution. It has only recently been possible to extend protein structure prediction to two hundred million cataloged proteins. Characterizing the structures of the exponentially growing billions of protein sequences revealed by large scale gene se-quencing experiments would necessitate a breakthrough in the speed of folding. Here we show that direct inference of structure from primary sequence using a large language model enables an order of magnitude speed-up in high resolution structure prediction. Leveraging the insight that language models learn evolutionary patterns across millions of sequences, we train models up to 15B parameters, the largest language model of proteins to date. As the language models are scaled they learn information that enables prediction of the three-dimensional structure of a protein at the resolution of individual atoms. This results in prediction that is up to 60x faster than state-of-the-art while maintaining resolution and accuracy. Building on this, we present the {ESM} Metage-nomic Atlas. This is the first large-scale structural characterization of metagenomic proteins, with more than 617 million structures. The atlas reveals more than 225 million high confidence predictions , including millions whose structures are novel in comparison with experimentally determined structures, giving an unprecedented view into the vast breadth and diversity of the structures of some of the least understood proteins on earth.}
}

@article{Elnaggar2023, 
  year     = {2023}, 
  title    = {Ankh: Optimized Protein Language Model Unlocks General-Purpose Modelling}, 
  author   = {Elnaggar, Ahmed and Essam, Hazem and Salah-Eldin, Wafaa and Moustafa, Walid and Elkerdawy, Mohamed and Rochereau, Charlotte and Rost, Burkhard}, 
  journal  = {{arXiv}}, 
  doi      = {10.48550/arxiv.2301.06568}, 
  eprint   = {2301.06568}, 
  abstract = {As opposed to scaling-up protein language models ({PLMs}), we seek improving performance via protein-specific optimization. Although the proportionality between the language model size and the richness of its learned representations is validated, we prioritize accessibility and pursue a path of data-efficient, cost-reduced, and knowledge-guided optimization. Through over twenty experiments ranging from masking, architecture, and pre-training data, we derive insights from protein-specific experimentation into building a model that interprets the language of life, optimally. We present Ankh, the first general-purpose {PLM} trained on Google's {TPU}-v4 surpassing the state-of-the-art performance with fewer parameters (<10\% for pre-training, <7\% for inference, and <30\% for the embedding dimension). We provide a representative range of structure and function benchmarks where Ankh excels. We further provide a protein variant generation analysis on High-N and One-N input data scales where Ankh succeeds in learning protein evolutionary conservation-mutation trends and introducing functional diversity while retaining key structural-functional characteristics. We dedicate our work to promoting accessibility to research innovation via attainable resources.}
}

@article{Wang2004, 
  year     = {2004}, 
  title    = {The {PDBbind} Database: Collection of Binding Affinities for Protein−Ligand Complexes with Known Three-Dimensional Structures}, 
  author   = {Wang, Renxiao and Fang, Xueliang and Lu, Yipin and Wang, Shaomeng}, 
  journal  = {Journal of Medicinal Chemistry}, 
  issn     = {0022-2623}, 
  doi      = {10.1021/jm030580l}, 
  pmid     = {15163179}, 
  abstract = {We have screened the entire Protein Data Bank (Release No. 103, January 2003) and identified 5671 protein−ligand complexes out of 19 621 experimental structures. A systematic examination of the primary references of these entries has led to a collection of binding affinity data (K d, K i, and {IC}50) for a total of 1359 complexes. The outcomes of this project have been organized into a Web-accessible database named the {PDBbind} database.}, 
  pages    = {2977--2980}, 
  number   = {12}, 
  volume   = {47}, 
  note     = {{PDBbind} Database}
}

@article{Su2019, 
  year     = {2019}, 
  title    = {Comparative Assessment of Scoring Functions: The {CASF}-2016 Update}, 
  author   = {Su, Minyi and Yang, Qifan and Du, Yu and Feng, Guoqin and Liu, Zhihai and Li, Yan and Wang, Renxiao}, 
  journal  = {Journal of Chemical Information and Modeling}, 
  issn     = {1549-9596}, 
  doi      = {10.1021/acs.jcim.8b00545}, 
  pmid     = {30481020}, 
  abstract = {In structure-based drug design, scoring functions are often employed to evaluate protein–ligand interactions. A variety of scoring functions have been developed so far, and thus, some objective benchmarks are desired for assessing their strength and weakness. The comparative assessment of scoring functions ({CASF}) benchmark developed by us provides an answer to this demand. {CASF} is designed as a “scoring benchmark”, where the scoring process is decoupled from the docking process to depict the performance of scoring function more precisely. Here, we describe the latest update of this benchmark, i.e., {CASF}-2016. Each scoring function is still evaluated by four metrics, including “scoring power”, “ranking power”, “docking power”, and “screening power”. Nevertheless, the evaluation methods have been improved considerably in several aspects. A new test set is compiled, which consists of 285 protein–ligand complexes with high-quality crystal structures and reliable binding constants. A panel of 25 scoring functions are tested on {CASF}-2016 as a demonstration. Our results reveal that the performance of current scoring functions is more promising in terms of docking power than scoring, ranking, and screening power. Scoring power is somewhat correlated with ranking power, so are docking power and screening power. The results obtained on {CASF}-2016 may provide valuable guidance for the end users to make smart choices among available scoring functions. Moreover, {CASF} is created as an open-access benchmark so that other researchers can utilize it to test a wider range of scoring functions. The complete {CASF}-2016 benchmark will be released on the {PDBbind}-{CN} web server (http://www.pdbbind-cn.org/casf.asp/) once this article is published.}, 
  pages    = {895--913}, 
  number   = {2}, 
  volume   = {59}, 
  note     = {{CASF}2016}
}

@article{Liu2015, 
  year     = {2015}, 
  title    = {{PDB}-wide collection of binding data: current status of the {PDBbind} database}, 
  author   = {Liu, Zhihai and Li, Yan and Han, Li and Li, Jie and Liu, Jie and Zhao, Zhixiong and Nie, Wei and Liu, Yuchen and Wang, Renxiao}, 
  journal  = {Bioinformatics}, 
  issn     = {1367-4803}, 
  doi      = {10.1093/bioinformatics/btu626}, 
  pmid     = {25301850}, 
  abstract = {Motivation: Molecular recognition between biological macromolecules and organic small molecules plays an important role in various life processes. Both structural information and binding data of biomolecular complexes are indispensable for depicting the underlying mechanism in such an event. The {PDBbind} database was created to collect experimentally measured binding data for the biomolecular complexes throughout the Protein Data Bank ({PDB}). It thus provides the linkage between structural information and energetic properties of biomolecular complexes, which is especially desirable for computational studies or statistical analyses. Results: Since its first public release in 2004, the {PDBbind} database has been updated on an annual basis. The latest release (version 2013) provides experimental binding affinity data for 10 776 biomolecular complexes in {PDB}, including 8302 protein–ligand complexes and 2474 other types of complexes. In this article, we will describe the current methods used for compiling {PDBbind} and the updated status of this database. We will also review some typical applications of {PDBbind} published in the scientific literature. Availability and implementation: All contents of this database are freely accessible at the {PDBbind}-{CN} Web server at http://www.pdbbind-cn.org/ . Contact:wangrx@mail.sioc.ac.cn . Supplementary information:Supplementary data are available at Bioinformatics online.}, 
  pages    = {405--412}, 
  number   = {3}, 
  volume   = {31}, 
  note     = {{PDBbind} Database}
}

@article{Ahmad2022, 
  year     = {2022}, 
  title    = {{ChemBERTa}-2: Towards Chemical Foundation Models}, 
  author   = {Ahmad, Walid and Simon, Elana and Chithrananda, Seyone and Grand, Gabriel and Ramsundar, Bharath}, 
  journal  = {{arXiv}}, 
  doi      = {10.48550/arxiv.2209.01712}, 
  eprint   = {2209.01712}, 
  abstract = {Large pretrained models such as {GPT}-3 have had tremendous impact on modern natural language processing by leveraging self-supervised learning to learn salient representations that can be used to readily finetune on a wide variety of downstream tasks. We investigate the possibility of transferring such advances to molecular machine learning by building a chemical foundation model, {ChemBERTa}-2, using the language of {SMILES}. While labeled data for molecular prediction tasks is typically scarce, libraries of {SMILES} strings are readily available. In this work, we build upon {ChemBERTa} by optimizing the pretraining process. We compare multi-task and self-supervised pretraining by varying hyperparameters and pretraining dataset size, up to 77M compounds from {PubChem}. To our knowledge, the 77M set constitutes one of the largest datasets used for molecular pretraining to date. We find that with these pretraining improvements, we are competitive with existing state-of-the-art architectures on the {MoleculeNet} benchmark suite. We analyze the degree to which improvements in pretraining translate to improvement on downstream tasks.}
}

@article{Francoeur2020, 
  year     = {2020}, 
  title    = {Three-Dimensional Convolutional Neural Networks and a Cross-Docked Data Set for Structure-Based Drug Design}, 
  author   = {Francoeur, Paul G. and Masuda, Tomohide and Sunseri, Jocelyn and Jia, Andrew and Iovanisci, Richard B. and Snyder, Ian and Koes, David R.}, 
  journal  = {Journal of Chemical Information and Modeling}, 
  issn     = {1549-9596}, 
  doi      = {10.1021/acs.jcim.0c00411}, 
  pmid     = {32865404}, 
  pmcid    = {{PMC}8902699}, 
  abstract = {One of the main challenges in drug discovery is predicting protein–ligand binding affinity. Recently, machine learning approaches have made substantial progress on this task. However, current methods of model evaluation are overly optimistic in measuring generalization to new targets, and there does not exist a standard data set of sufficient size to compare performance between models. We present a new data set for structure-based machine learning, the {CrossDocked}2020 set, with 22.5 million poses of ligands docked into multiple similar binding pockets across the Protein Data Bank, and perform a comprehensive evaluation of grid-based convolutional neural network ({CNN}) models on this data set. We also demonstrate how the partitioning of the training data and test data can impact the results of models trained with the {PDBbind} data set, how performance improves by adding more lower-quality training data, and how training with docked poses imparts pose sensitivity to the predicted affinity of a complex. Our best performing model, an ensemble of five densely connected {CNNs}, achieves a root mean squared error of 1.42 and Pearson R of 0.612 on the affinity prediction task, an {AUC} of 0.956 at binding pose classification, and a 68.4\% accuracy at pose selection on the {CrossDocked}2020 set. By providing data splits for clustered cross-validation and the raw data for the {CrossDocked}2020 set, we establish the first standardized data set for training machine learning models to recognize ligands in noncognate target structures while also greatly expanding the number of poses available for training. In order to facilitate community adoption of this data set for benchmarking protein–ligand binding affinity prediction, we provide our models, weights, and the {CrossDocked}2020 set at https://github.com/gnina/models.}, 
  pages    = {4200--4215}, 
  number   = {9}, 
  volume   = {60}, 
  note     = {{CrossDocked}}
}

@article{Zhang2005, 
  year     = {2005}, 
  title    = {{TM}-align: a protein structure alignment algorithm based on the {TM}-score}, 
  author   = {Zhang, Yang and Skolnick, Jeffrey}, 
  journal  = {Nucleic Acids Research}, 
  issn     = {0305-1048}, 
  doi      = {10.1093/nar/gki524}, 
  pmid     = {15849316}, 
  pmcid    = {{PMC}1084323}, 
  abstract = {We have developed {TM}-align, a new algorithm to identify the best structural alignment between protein pairs that combines the {TM}-score rotation matrix and Dynamic Programming ({DP}). The algorithm is ∼4 times faster than {CE} and 20 times faster than {DALI} and {SAL}. On average, the resulting structure alignments have higher accuracy and coverage than those provided by these most often-used methods. {TM}-align is applied to an all-against-all structure comparison of 10 515 representative protein chains from the Protein Data Bank ({PDB}) with a sequence identity cutoff <95\%: 1996 distinct folds are found when a {TM}-score threshold of 0.5 is used. We also use {TM}-align to match the models predicted by {TASSER} for solved non-homologous proteins in {PDB}. For both folded and misfolded models, {TM}-align can almost always find close structural analogs, with an average root mean square deviation, {RMSD}, of 3 Å and 87\% alignment coverage. Nevertheless, there exists a significant correlation between the correctness of the predicted structure and the structural similarity of the model to the other proteins in the {PDB}. This correlation could be used to assist in model selection in blind protein structure predictions. The {TM}-align program is freely downloadable at http://bioinformatics.buffalo.edu/{TM}-align .}, 
  pages    = {2302--2309}, 
  number   = {7}, 
  volume   = {33}, 
  note     = {{TM}-align}
}

@article{PLINDER, 
  year     = {2024}, 
  title    = {{PLINDER}: The protein-ligand interactions dataset and evaluation resource}, 
  author   = {Durairaj, Janani and Adeshina, Yusuf and Cao, Zhonglin and Zhang, Xuejin and Oleinikovas, Vladas and Duignan, Thomas and {McClure}, Zachary and Robin, Xavier and Studer, Gabriel and Kovtun, Daniel and Rossi, Emanuele and Zhou, Guoqing and Veccham, Srimukh and Isert, Clemens and Peng, Yuxing and Sundareson, Prabindh and Akdel, Mehmet and Corso, Gabriele and Stärk, Hannes and Tauriello, Gerardo and Carpenter, Zachary and Bronstein, Michael and Kucukbenli, Emine and Schwede, Torsten and Naef, Luca}, 
  journal  = {{bioRxiv}}, 
  doi      = {10.1101/2024.07.17.603955}, 
  abstract = {Protein-ligand interactions ({PLI}) are foundational to small molecule drug design. With computational methods striving towards experimental accuracy, there is a critical demand for a well-curated and diverse {PLI} dataset. Existing datasets are often limited in size and diversity, and commonly used evaluation sets suffer from training information leakage, hindering the realistic assessment of method generalization capabilities. To address these shortcomings, we present {PLIN}-{DER}, the largest and most annotated dataset to date, comprising 449,383 {PLI} systems, each with over 500 annotations, similarity metrics at protein, pocket, interaction and ligand levels, and paired unbound (apo) and predicted structures. We propose an approach to generate training and evaluation splits that minimizes task-specific leakage and maximizes test set quality, and compare the resulting performance of {DiffDock} when retrained with different kinds of splits.}, 
  pages    = {2024.07.17.603955}
}

@article{Schneuing2024, 
  year     = {2024}, 
  title    = {Structure-based drug design with equivariant diffusion models}, 
  author   = {Schneuing, Arne and Harris, Charles and Du, Yuanqi and Didi, Kieran and Jamasb, Arian and Igashov, Ilia and Du, Weitao and Gomes, Carla and Blundell, Tom L. and Lio, Pietro and Welling, Max and Bronstein, Michael and Correia, Bruno}, 
  journal  = {Nature Computational Science}, 
  doi      = {10.1038/s43588-024-00737-x}, 
  pmid     = {39653846}, 
  abstract = {Structure-based drug design ({SBDD}) aims to design small-molecule ligands that bind with high affinity and specificity to pre-determined protein targets. Generative {SBDD} methods leverage structural data of drugs with their protein targets to propose new drug candidates. However, most existing methods focus exclusively on bottom-up de novo design of compounds or tackle other drug development challenges with task-specific models. The latter requires curation of suitable datasets, careful engineering of the models and retraining from scratch for each task. Here we show how a single pretrained diffusion model can be applied to a broader range of problems, such as off-the-shelf property optimization, explicit negative design and partial molecular design with inpainting. We formulate {SBDD} as a three-dimensional conditional generation problem and present {DiffSBDD}, an {SE}(3)-equivariant diffusion model that generates novel ligands conditioned on protein pockets. Furthermore, we show how additional constraints can be used to improve the generated drug candidates according to a variety of computational metrics.}, 
  pages    = {1--11}, 
  note     = {{DiffSBDD}}
}

@article{Bajusz2015, 
  year     = {2015}, 
  title    = {Why is Tanimoto index an appropriate choice for fingerprint-based similarity calculations?}, 
  author   = {Bajusz, Dávid and Rácz, Anita and Héberger, Károly}, 
  journal  = {Journal of Cheminformatics}, 
  issn     = {1758-2946}, 
  doi      = {10.1186/s13321-015-0069-3}, 
  pmid     = {26052348}, 
  pmcid    = {{PMC}4456712}, 
  abstract = {Cheminformaticians are equipped with a very rich toolbox when carrying out molecular similarity calculations. A large number of molecular representations exist, and there are several methods (similarity and distance metrics) to quantify the similarity of molecular representations. In this work, eight well-known similarity/distance metrics are compared on a large dataset of molecular fingerprints with sum of ranking differences ({SRD}) and {ANOVA} analysis. The effects of molecular size, selection methods and data pretreatment methods on the outcome of the comparison are also assessed. A supplier database (https://mcule.com/) was used as the source of compounds for the similarity calculations in this study. A large number of datasets, each consisting of one hundred compounds, were compiled, molecular fingerprints were generated and similarity values between a randomly chosen reference compound and the rest were calculated for each dataset. Similarity metrics were compared based on their ranking of the compounds within one experiment (one dataset) using sum of ranking differences ({SRD}), while the results of the entire set of experiments were summarized on box and whisker plots. Finally, the effects of various factors (data pretreatment, molecule size, selection method) were evaluated with analysis of variance ({ANOVA}). This study complements previous efforts to examine and rank various metrics for molecular similarity calculations. Here, however, an entirely general approach was taken to neglect any a priori knowledge on the compounds involved, as well as any bias introduced by examining only one or a few specific scenarios. The Tanimoto index, Dice index, Cosine coefficient and Soergel distance were identified to be the best (and in some sense equivalent) metrics for similarity calculations, i.e. these metrics could produce the rankings closest to the composite (average) ranking of the eight metrics. The similarity metrics derived from Euclidean and Manhattan distances are not recommended on their own, although their variability and diversity from other similarity metrics might be advantageous in certain cases (e.g. for data fusion). Conclusions are also drawn regarding the effects of molecule size, selection method and data pretreatment on the ranking behavior of the studied metrics. Graphical {AbstractA} visual summary of the comparison of similarity metrics with sum of ranking differences ({SRD}). A visual summary of the comparison of similarity metrics with sum of ranking differences ({SRD}).}, 
  pages    = {20}, 
  number   = {1}, 
  volume   = {7}, 
  note     = {Tanimoto}
}

@article{Dieleman2022, 
  year     = {2022}, 
  title    = {Continuous diffusion for categorical data}, 
  author   = {Dieleman, Sander and Sartran, Laurent and Roshannai, Arman and Savinov, Nikolay and Ganin, Yaroslav and Richemond, Pierre H and Doucet, Arnaud and Strudel, Robin and Dyer, Chris and Durkan, Conor and Hawthorne, Curtis and Leblond, Rémi and Grathwohl, Will and Adler, Jonas}, 
  journal  = {{arXiv}}, 
  doi      = {10.48550/arxiv.2211.15089}, 
  eprint   = {2211.15089}, 
  abstract = {Diffusion models have quickly become the go-to paradigm for generative modelling of perceptual signals (such as images and sound) through iterative refinement. Their success hinges on the fact that the underlying physical phenomena are continuous. For inherently discrete and categorical data such as language, various diffusion-inspired alternatives have been proposed. However, the continuous nature of diffusion models conveys many benefits, and in this work we endeavour to preserve it. We propose {CDCD}, a framework for modelling categorical data with diffusion models that are continuous both in time and input space. We demonstrate its efficacy on several language modelling tasks.}, 
  note     = {{CDCD} Loss}
}

@article{TargetDiff, 
  year     = {2023}, 
  title    = {3D Equivariant Diffusion for Target-Aware Molecule Generation and Affinity Prediction}, 
  author   = {Guan, Jiaqi and Qian, Wesley Wei and Peng, Xingang and Su, Yufeng and Peng, Jian and Ma, Jianzhu}, 
  journal  = {{arXiv}}, 
  doi      = {10.48550/arxiv.2303.03543}, 
  eprint   = {2303.03543}, 
  abstract = {Rich data and powerful machine learning models allow us to design drugs for a specific protein target {\textbackslash} textit\{in silico\}. Recently, the inclusion of 3D structures during targeted drug design shows superior performance to other target-free models as the atomic interaction in the 3D space is explicitly modeled. However, current 3D target-aware models either rely on the voxelized atom densities or the autoregressive sampling process, which are not equivariant to rotation or easily violate geometric constraints resulting in unrealistic structures. In this work, we develop a 3D equivariant diffusion model to solve the above challenges. To achieve target-aware molecule design, our method learns a joint generative process of both continuous atom coordinates and categorical atom types with a {SE}(3)-equivariant network. Moreover, we show that our model can serve as an unsupervised feature extractor to estimate the binding affinity under proper parameterization, which provides an effective way for drug screening. To evaluate our model, we propose a comprehensive framework to evaluate the quality of sampled molecules from different dimensions. Empirical studies show our model could generate molecules with more realistic 3D structures and better affinities towards the protein targets, and improve binding affinity ranking and prediction without retraining.}, 
  note     = {{TargetDiff}}
}

@article{EDM, 
  year     = {2022}, 
  title    = {Elucidating the Design Space of Diffusion-Based Generative Models}, 
  author   = {Karras, Tero and Aittala, Miika and Aila, Timo and Laine, Samuli}, 
  journal  = {{arXiv}}, 
  doi      = {10.48550/arxiv.2206.00364}, 
  eprint   = {2206.00364}, 
  abstract = {We argue that the theory and practice of diffusion-based generative models are currently unnecessarily convoluted and seek to remedy the situation by presenting a design space that clearly separates the concrete design choices. This lets us identify several changes to both the sampling and training processes, as well as preconditioning of the score networks. Together, our improvements yield new state-of-the-art {FID} of 1.79 for {CIFAR}-10 in a class-conditional setting and 1.97 in an unconditional setting, with much faster sampling (35 network evaluations per image) than prior designs. To further demonstrate their modular nature, we show that our design changes dramatically improve both the efficiency and quality obtainable with pre-trained score networks from previous work, including improving the {FID} of a previously trained {ImageNet}-64 model from 2.07 to near-{SOTA} 1.55, and after re-training with our proposed improvements to a new {SOTA} of 1.36.}, 
  note     = {{EDM}}
}

@article{GEMS, 
  year     = {2025}, 
  title    = {Resolving data bias improves generalization in binding affinity prediction}, 
  author   = {Graber, David and Stockinger, Peter and Meyer, Fabian and Mishra, Siddhartha and Horn, Claus and Buller, Rebecca}, 
  journal  = {Nature Machine Intelligence}, 
  issn     = {2522-5839}, 
  doi      = {10.1038/s42256-025-01124-5}, 
  pmid     = {41143208}, 
  pmcid    = {{PMC}12552120}, 
  abstract = {The field of computational drug design requires accurate scoring functions to predict binding affinities for protein–ligand interactions. However, train–test data leakage between the {PDBbind} database and the Comparative Assessment of Scoring Function benchmark datasets has severely inflated the performance metrics of currently available deep-learning-based binding affinity prediction models, leading to overestimation of their generalization capabilities. Here we address this issue by proposing {PDBbind} {CleanSplit}, a training dataset curated by a new structure-based filtering algorithm that eliminates train–test data leakage as well as redundancies within the training set. Retraining current top-performing models on {CleanSplit} caused their benchmark performance to drop substantially, indicating that the performance of existing models is largely driven by data leakage. By contrast, our graph neural network model maintains high benchmark performance when trained on {CleanSplit}. Leveraging a sparse graph modelling of protein–ligand interactions and transfer learning from language models, our model is able to generalize to strictly independent test datasets.}, 
  pages    = {1713--1725}, 
  number   = {10}, 
  volume   = {7}
}

@inproceedings{zeevi2025rate,
  title={Rate-In: Information-Driven Adaptive Dropout Rates for Improved Inference-Time Uncertainty Estimation},
  author={Zeevi, Tal and Shwartz-Ziv, Ravid and LeCun, Yann and Staib, Lawrence H and Onofrey, John A},
  booktitle={Proceedings of the Computer Vision and Pattern Recognition Conference},
  pages={20757--20766},
  year={2025}
}

@inproceedings{gal2016dropout,
  title={Dropout as a bayesian approximation: Representing model uncertainty in deep learning},
  author={Gal, Yarin and Ghahramani, Zoubin},
  booktitle={international conference on machine learning},
  pages={1050--1059},
  year={2016},
  organization={PMLR}
}

@article{kopko2025generalization,
  title={Generalization Beyond Benchmarks: Evaluating Learnable Protein-Ligand Scoring Functions on Unseen Targets},
  author={Kopko, Jakub and Graber, David and Eyrilmez, Saltuk Mustafa and Mazurenko, Stanislav and Bednar, David and Sedlar, Jiri and Sivic, Josef},
  journal={arXiv preprint arXiv:2512.05386},
  year={2025}
}

@article{Shen2024, 
  year     = {2024}, 
  title    = {Optimizing {OOD} Detection in Molecular Graphs: A Novel Approach with Diffusion Models}, 
  author   = {Shen, Xu and Wang, Yili and Zhou, Kaixiong and Pan, Shirui and Wang, Xin}, 
  journal  = {{arXiv}}, 
  doi      = {10.48550/arxiv.2404.15625}, 
  eprint   = {2404.15625}, 
  abstract = {The open-world test dataset is often mixed with out-of-distribution ({OOD}) samples, where the deployed models will struggle to make accurate predictions. Traditional detection methods need to trade off {OOD} detection and in-distribution ({ID}) classification performance since they share the same representation learning model. In this work, we propose to detect {OOD} molecules by adopting an auxiliary diffusion model-based framework, which compares similarities between input molecules and reconstructed graphs. Due to the generative bias towards reconstructing {ID} training samples, the similarity scores of {OOD} molecules will be much lower to facilitate detection. Although it is conceptually simple, extending this vanilla framework to practical detection applications is still limited by two significant challenges. First, the popular similarity metrics based on Euclidian distance fail to consider the complex graph structure. Second, the generative model involving iterative denoising steps is time-consuming especially when it runs on the enormous pool of drugs. To address these challenges, our research pioneers an approach of Prototypical Graph Reconstruction for Molecular {OOD} Detection, dubbed as {PGR}-{MOOD} and hinges on three innovations: i) An effective metric to comprehensively quantify the matching degree of input and reconstructed molecules; ii) A creative graph generator to construct prototypical graphs that are in line with {ID} but away from {OOD}; iii) An efficient and scalable {OOD} detector to compare the similarity between test samples and pre-constructed prototypical graphs and omit the generative process on every new molecule. Extensive experiments on ten benchmark datasets and six baselines are conducted to demonstrate our superiority.}, 
  note     = {Graphs}
}

@article{Heng2024, 
  year    = {2024}, 
  title   = {Out-of-Distribution Detection with a Single Unconditional Diffusion Model}, 
  author  = {Heng, Alvin and Soh, Harold and Thiery, Alexandre}, 
  journal = {Advances in Neural Information Processing Systems 37}, 
  doi     = {10.52202/079017-1395}, 
  pages   = {43952--43974}
}

@article{Yang2021, 
  year     = {2021}, 
  title    = {Generalized Out-of-Distribution Detection: A Survey}, 
  author   = {Yang, Jingkang and Zhou, Kaiyang and Li, Yixuan and Liu, Ziwei}, 
  journal  = {{arXiv}}, 
  doi      = {10.48550/arxiv.2110.11334}, 
  eprint   = {2110.11334}, 
  abstract = {Out-of-distribution ({OOD}) detection is critical to ensuring the reliability and safety of machine learning systems. For instance, in autonomous driving, we would like the driving system to issue an alert and hand over the control to humans when it detects unusual scenes or objects that it has never seen during training time and cannot make a safe decision. The term, {OOD} detection, first emerged in 2017 and since then has received increasing attention from the research community, leading to a plethora of methods developed, ranging from classification-based to density-based to distance-based ones. Meanwhile, several other problems, including anomaly detection ({AD}), novelty detection ({ND}), open set recognition ({OSR}), and outlier detection ({OD}), are closely related to {OOD} detection in terms of motivation and methodology. Despite common goals, these topics develop in isolation, and their subtle differences in definition and problem setting often confuse readers and practitioners. In this survey, we first present a unified framework called generalized {OOD} detection, which encompasses the five aforementioned problems, i.e., {AD}, {ND}, {OSR}, {OOD} detection, and {OD}. Under our framework, these five problems can be seen as special cases or sub-tasks, and are easier to distinguish. We then review each of these five areas by summarizing their recent technical developments, with a special focus on {OOD} detection methodologies. We conclude this survey with open challenges and potential research directions.}
}

@article{Ding2025, 
  year     = {2025}, 
  title    = {Revisiting Likelihood-Based Out-of-Distribution Detection by Modeling Representations}, 
  author   = {Ding, Yifan and Aleksandrauskas, Arturas and Ahmadian, Amirhossein and Unger, Jonas and Lindsten, Fredrik and Eilertsen, Gabriel}, 
  journal  = {{arXiv}}, 
  doi      = {10.48550/arxiv.2504.07793}, 
  eprint   = {2504.07793}, 
  abstract = {Out-of-distribution ({OOD}) detection is critical for ensuring the reliability of deep learning systems, particularly in safety-critical applications. Likelihood-based deep generative models have historically faced criticism for their unsatisfactory performance in {OOD} detection, often assigning higher likelihood to {OOD} data than in-distribution samples when applied to image data. In this work, we demonstrate that likelihood is not inherently flawed. Rather, several properties in the images space prohibit likelihood as a valid detection score. Given a sufficiently good likelihood estimator, specifically using the probability flow formulation of a diffusion model, we show that likelihood-based methods can still perform on par with state-of-the-art methods when applied in the representation space of pre-trained encoders. The code of our work can be found at \${\textbackslash} href\{https://github.com/limchaos/Likelihood-{OOD}.git\}\{{\textbackslash} texttt\{https://github.com/limchaos/Likelihood-{OOD}.git\}\}\$.}
}

@article{Nalisnick2018, 
  year     = {2018}, 
  title    = {Do Deep Generative Models Know What They Don't Know?}, 
  author   = {Nalisnick, Eric and Matsukawa, Akihiro and Teh, Yee Whye and Gorur, Dilan and Lakshminarayanan, Balaji}, 
  journal  = {{arXiv}}, 
  doi      = {10.48550/arxiv.1810.09136}, 
  eprint   = {1810.09136}, 
  abstract = {A neural network deployed in the wild may be asked to make predictions for inputs that were drawn from a different distribution than that of the training data. A plethora of work has demonstrated that it is easy to find or synthesize inputs for which a neural network is highly confident yet wrong. Generative models are widely viewed to be robust to such mistaken confidence as modeling the density of the input features can be used to detect novel, out-of-distribution inputs. In this paper we challenge this assumption. We find that the density learned by flow-based models, {VAEs}, and {PixelCNNs} cannot distinguish images of common objects such as dogs, trucks, and horses (i.e. {CIFAR}-10) from those of house numbers (i.e. {SVHN}), assigning a higher likelihood to the latter when the model is trained on the former. Moreover, we find evidence of this phenomenon when pairing several popular image data sets: {FashionMNIST} vs {MNIST}, {CelebA} vs {SVHN}, {ImageNet} vs {CIFAR}-10 / {CIFAR}-100 / {SVHN}. To investigate this curious behavior, we focus analysis on flow-based generative models in particular since they are trained and evaluated via the exact marginal likelihood. We find such behavior persists even when we restrict the flows to constant-volume transformations. These transformations admit some theoretical analysis, and we show that the difference in likelihoods can be explained by the location and variances of the data and the model curvature. Our results caution against using the density estimates from deep generative models to identify inputs similar to the training distribution until their behavior for out-of-distribution inputs is better understood.}
}

@article{Goodier2023, 
  year     = {2023}, 
  title    = {Likelihood-based Out-of-Distribution Detection with Denoising Diffusion Probabilistic Models}, 
  author   = {Goodier, Joseph and Campbell, Neill D F}, 
  journal  = {{arXiv}}, 
  doi      = {10.48550/arxiv.2310.17432}, 
  eprint   = {2310.17432}, 
  abstract = {Out-of-Distribution detection between dataset pairs has been extensively explored with generative models. We show that likelihood-based Out-of-Distribution detection can be extended to diffusion models by leveraging the fact that they, like other likelihood-based generative models, are dramatically affected by the input sample complexity. Currently, all Out-of-Distribution detection methods with Diffusion Models are reconstruction-based. We propose a new likelihood ratio for Out-of-Distribution detection with Deep Denoising Diffusion Models, which we call the Complexity Corrected Likelihood Ratio. Our likelihood ratio is constructed using Evidence Lower-Bound evaluations from an individual model at various noising levels. We present results that are comparable to state-of-the-art Out-of-Distribution detection methods with generative models.}
}

@article{AnCho2015, 
  year     = {2015}, 
  title    = {Variational Autoencoder based Anomaly Detection using Reconstruction Probability}, 
  author   = {An, Jinwon and Cho, Sungzoon}, 
  url      = {https://api.semanticscholar.org/CorpusID:36663713}, 
  abstract = {We propose an anomaly detection method using the reconstruction probability from thevariational autoencoder. The reconstruction probability is a probabilistic measure that takesinto account the variability of the distribution of variables. The reconstruction probabilityhas a theoretical background making it a more principled and objective anomaly score thanthe reconstruction error, which is used by autoencoder and principal components basedanomaly detection methods. Experimental results show that the proposed method outper-forms autoencoder based and principal components based methods. Utilizing the generativecharacteristics of the variational autoencoder enables deriving the reconstruction of the datato analyze the underlying cause of the anomaly}
}

@inproceedings{Li2022, 
  year      = {2022}, 
  author    = {Li, Zenan and Wu, Qitian and Nie, Fan and Yan, Junchi}, 
  title     = {{GraphDE}: A Generative Framework for Debiased Learning and Out-of-Distribution Detection on Graphs}, 
  booktitle = {Advances in Neural Information Processing Systems}, 
  url       = {https://proceedings.neurips.cc/paper_files/paper/2022/file/c34262c35aa5f8c1a091822cbb2020c2-Paper-Conference.pdf}, 
  pages     = {30277---30290}, 
  note      = {Graphs}
}

@inproceedings{Hendrycks2017, 
  year      = {2017}, 
  author    = {Hendrycks and Gimpel, Kevin}, 
  title     = {A {BASELINE} {FOR} {DETECTING} {MISCLASSIFIED} {AND} {OUT}-{OF}-{DISTRIBUTION} {EXAMPLES} {IN} {NEURAL} {NETWORKS}}, 
  booktitle = {{ICLR} 2017}, 
  abstract  = {We consider the two related problems of detecting if an example is misclassified orout-of-distribution. We present a simple baseline that utilizes probabilities fromsoftmax distributions. Correctly classified examples tend to have greater maxi-mum softmax probabilities than erroneously classified and out-of-distribution ex-amples, allowing for their detection. We assess performance by defining sev-eral tasks in computer vision, natural language processing, and automatic speechrecognition, showing the effectiveness of this baseline across all. We then showthe baseline can sometimes be surpassed, demonstrating the room for future re-search on these underexplored detection tasks.}
}

@article{Wu2023, 
  year     = {2023}, 
  title    = {Energy-based Out-of-Distribution Detection for Graph Neural Networks}, 
  author   = {Wu, Qitian and Chen, Yiting and Yang, Chenxiao and Yan, Junchi}, 
  journal  = {{arXiv}}, 
  doi      = {10.48550/arxiv.2302.02914}, 
  eprint   = {2302.02914}, 
  abstract = {Learning on graphs, where instance nodes are inter-connected, has become one of the central problems for deep learning, as relational structures are pervasive and induce data inter-dependence which hinders trivial adaptation of existing approaches that assume inputs to be i.i.d.{\textasciitilde} sampled. However, current models mostly focus on improving testing performance of in-distribution data and largely ignore the potential risk w.r.t. out-of-distribution ({OOD}) testing samples that may cause negative outcome if the prediction is overconfident on them. In this paper, we investigate the under-explored problem, {OOD} detection on graph-structured data, and identify a provably effective {OOD} discriminator based on an energy function directly extracted from graph neural networks trained with standard classification loss. This paves a way for a simple, powerful and efficient {OOD} detection model for {GNN}-based learning on graphs, which we call {GNNSafe}. It also has nice theoretical properties that guarantee an overall distinguishable margin between the detection scores for in-distribution and {OOD} samples, which, more critically, can be further strengthened by a learning-free energy belief propagation scheme. For comprehensive evaluation, we introduce new benchmark settings that evaluate the model for detecting {OOD} data from both synthetic and real distribution shifts (cross-domain graph shifts and temporal graph shifts). The results show that {GNNSafe} achieves up to \$17.0{\textbackslash} \%\$ {AUROC} improvement over state-of-the-arts and it could serve as simple yet strong baselines in such an under-developed area.}, 
  note     = {Graphs}
}

@article{Sun2021, 
  year     = {2021}, 
  title    = {{ReAct}: Out-of-distribution Detection With Rectified Activations}, 
  author   = {Sun, Yiyou and Guo, Chuan and Li, Yixuan}, 
  journal  = {{arXiv}}, 
  doi      = {10.48550/arxiv.2111.12797}, 
  eprint   = {2111.12797}, 
  abstract = {Out-of-distribution ({OOD}) detection has received much attention lately due to its practical importance in enhancing the safe deployment of neural networks. One of the primary challenges is that models often produce highly confident predictions on {OOD} data, which undermines the driving principle in {OOD} detection that the model should only be confident about in-distribution samples. In this work, we propose {ReAct}--a simple and effective technique for reducing model overconfidence on {OOD} data. Our method is motivated by novel analysis on internal activations of neural networks, which displays highly distinctive signature patterns for {OOD} distributions. Our method can generalize effectively to different network architectures and different {OOD} detection scores. We empirically demonstrate that {ReAct} achieves competitive detection performance on a comprehensive suite of benchmark datasets, and give theoretical explication for our method's efficacy. On the {ImageNet} benchmark, {ReAct} reduces the false positive rate ({FPR}95) by 25.05\% compared to the previous best method.}
}

@article{Ren2019, 
  year     = {2019}, 
  title    = {Likelihood Ratios for Out-of-Distribution Detection}, 
  author   = {Ren, Jie and Liu, Peter J and Fertig, Emily and Snoek, Jasper and Poplin, Ryan and {DePristo}, Mark A and Dillon, Joshua V and Lakshminarayanan, Balaji}, 
  journal  = {{arXiv}}, 
  doi      = {10.48550/arxiv.1906.02845}, 
  eprint   = {1906.02845}, 
  abstract = {Discriminative neural networks offer little or no performance guarantees when deployed on data not generated by the same process as the training distribution. On such out-of-distribution ({OOD}) inputs, the prediction may not only be erroneous, but confidently so, limiting the safe deployment of classifiers in real-world applications. One such challenging application is bacteria identification based on genomic sequences, which holds the promise of early detection of diseases, but requires a model that can output low confidence predictions on {OOD} genomic sequences from new bacteria that were not present in the training data. We introduce a genomics dataset for {OOD} detection that allows other researchers to benchmark progress on this important problem. We investigate deep generative model based approaches for {OOD} detection and observe that the likelihood score is heavily affected by population level background statistics. We propose a likelihood ratio method for deep generative models which effectively corrects for these confounding background statistics. We benchmark the {OOD} detection performance of the proposed method against existing approaches on the genomics dataset and show that our method achieves state-of-the-art performance. We demonstrate the generality of the proposed method by showing that it significantly improves {OOD} detection when applied to deep generative models of images.}
}

@inproceedings{Zong2018, 
  year      = {2018}, 
  author    = {Zong, Bo and Song, Qi and Min, Martin Renqiang and Cheng, Wei and Lumezanu, Cristian and Cho, Daeki and Chen, Haifeng}, 
  title     = {Deep Autoencoding Gaussian Mixture Model for Unsupervised Anomaly Detection}, 
  booktitle = {International Conference on Learning Representations}, 
  url       = {https://openreview.net/forum?id=BJJLHbb0-}
}

@article{Denouden2018, 
  year     = {2018}, 
  title    = {Improving Reconstruction Autoencoder Out-of-distribution Detection with Mahalanobis Distance}, 
  author   = {Denouden, Taylor and Salay, Rick and Czarnecki, Krzysztof and Abdelzad, Vahdat and Phan, Buu and Vernekar, Sachin}, 
  journal  = {{arXiv}}, 
  doi      = {10.48550/arxiv.1812.02765}, 
  eprint   = {1812.02765}, 
  abstract = {There is an increasingly apparent need for validating the classifications made by deep learning systems in safety-critical applications like autonomous vehicle systems. A number of recent papers have proposed methods for detecting anomalous image data that appear different from known inlier data samples, including reconstruction-based autoencoders. Autoencoders optimize the compression of input data to a latent space of a dimensionality smaller than the original input and attempt to accurately reconstruct the input using that compressed representation. Since the latent vector is optimized to capture the salient features from the inlier class only, it is commonly assumed that images of objects from outside of the training class cannot effectively be compressed and reconstructed. Some thus consider reconstruction error as a kind of novelty measure. Here we suggest that reconstruction-based approaches fail to capture particular anomalies that lie far from known inlier samples in latent space but near the latent dimension manifold defined by the parameters of the model. We propose incorporating the Mahalanobis distance in latent space to better capture these out-of-distribution samples and our results show that this method often improves performance over the baseline approach.}
}

@article{Serra2019, 
  year     = {2019}, 
  title    = {Input complexity and out-of-distribution detection with likelihood-based generative models}, 
  author   = {Serrà, Joan and Álvarez, David and Gómez, Vicenç and Slizovskaia, Olga and Núñez, José F and Luque, Jordi}, 
  journal  = {{arXiv}}, 
  doi      = {10.48550/arxiv.1909.11480}, 
  eprint   = {1909.11480}, 
  abstract = {Likelihood-based generative models are a promising resource to detect out-of-distribution ({OOD}) inputs which could compromise the robustness or reliability of a machine learning system. However, likelihoods derived from such models have been shown to be problematic for detecting certain types of inputs that significantly differ from training data. In this paper, we pose that this problem is due to the excessive influence that input complexity has in generative models' likelihoods. We report a set of experiments supporting this hypothesis, and use an estimate of input complexity to derive an efficient and parameter-free {OOD} score, which can be seen as a likelihood-ratio, akin to Bayesian model comparison. We find such score to perform comparably to, or even better than, existing {OOD} detection approaches under a wide range of data sets, models, model sizes, and complexity estimates.}
}

@article{Kirichenko2020, 
  year     = {2020}, 
  title    = {Why Normalizing Flows Fail to Detect Out-of-Distribution Data}, 
  author   = {Kirichenko, Polina and Izmailov, Pavel and Wilson, Andrew Gordon}, 
  journal  = {{arXiv}}, 
  doi      = {10.48550/arxiv.2006.08545}, 
  eprint   = {2006.08545}, 
  abstract = {Detecting out-of-distribution ({OOD}) data is crucial for robust machine learning systems. Normalizing flows are flexible deep generative models that often surprisingly fail to distinguish between in- and out-of-distribution data: a flow trained on pictures of clothing assigns higher likelihood to handwritten digits. We investigate why normalizing flows perform poorly for {OOD} detection. We demonstrate that flows learn local pixel correlations and generic image-to-latent-space transformations which are not specific to the target image dataset. We show that by modifying the architecture of flow coupling layers we can bias the flow towards learning the semantic structure of the target data, improving {OOD} detection. Our investigation reveals that properties that enable flows to generate high-fidelity images can have a detrimental effect on {OOD} detection.}
}

@article{Ataeiasad2024, 
  year     = {2024}, 
  title    = {Out-of-Distribution Detection with Memory-Augmented Variational Autoencoder}, 
  author   = {Ataeiasad, Faezeh and Elizondo, David and Ramírez, Saúl Calderón and Greenfield, Sarah and Deka, Lipika}, 
  journal  = {Mathematics}, 
  doi      = {10.3390/math12193153}, 
  abstract = {This paper proposes a novel method capable of both detecting {OOD} data and generating in-distribution data samples. To achieve this, a {VAE} model is adopted and augmented with a memory module, providing capacities for identifying {OOD} data and synthesising new in-distribution samples. The proposed {VAE} is trained on normal data and the memory stores prototypical patterns of the normal data distribution. At test time, the input is encoded by the {VAE} encoder; this encoding is used as a query to retrieve related memory items, which are then integrated with the input encoding and passed to the decoder for reconstruction. Normal samples reconstruct well and yield low reconstruction errors, while {OOD} inputs produce high reconstruction errors as their encodings get replaced by retrieved normal patterns. Prior works use memory modules for {OOD} detection with autoencoders, but this method leverages a {VAE} architecture to enable generation abilities. Experiments conducted with {CIFAR}-10 and {MNIST} datasets show that the memory-augmented {VAE} consistently outperforms the baseline, particularly where {OOD} data resembles normal patterns. This notable improvement is due to the enhanced latent space representation provided by the {VAE}. Overall, the memory-equipped {VAE} framework excels in identifying {OOD} and generating creative examples effectively.}, 
  pages    = {3153}, 
  number   = {19}, 
  volume   = {12}
}

@article{Ma2021, 
  year     = {2021}, 
  title    = {Deep Graph-level Anomaly Detection by Glocal Knowledge Distillation}, 
  author   = {Ma, Rongrong and Pang, Guansong and Chen, Ling and Hengel, Anton van den}, 
  journal  = {{arXiv}}, 
  doi      = {10.48550/arxiv.2112.10063}, 
  eprint   = {2112.10063}, 
  abstract = {Graph-level anomaly detection ({GAD}) describes the problem of detecting graphs that are abnormal in their structure and/or the features of their nodes, as compared to other graphs. One of the challenges in {GAD} is to devise graph representations that enable the detection of both locally- and globally-anomalous graphs, i.e., graphs that are abnormal in their fine-grained (node-level) or holistic (graph-level) properties, respectively. To tackle this challenge we introduce a novel deep anomaly detection approach for {GAD} that learns rich global and local normal pattern information by joint random distillation of graph and node representations. The random distillation is achieved by training one {GNN} to predict another {GNN} with randomly initialized network weights. Extensive experiments on 16 real-world graph datasets from diverse domains show that our model significantly outperforms seven state-of-the-art models. Code and datasets are available at https://git.io/{GLocalKD}.}, 
  note     = {Graphs}
}

@article{Liang2017, 
  year     = {2017}, 
  title    = {Enhancing The Reliability of Out-of-distribution Image Detection in Neural Networks}, 
  author   = {Liang, Shiyu and Li, Yixuan and Srikant, R}, 
  journal  = {{arXiv}}, 
  doi      = {10.48550/arxiv.1706.02690}, 
  eprint   = {1706.02690}, 
  abstract = {We consider the problem of detecting out-of-distribution images in neural networks. We propose {ODIN}, a simple and effective method that does not require any change to a pre-trained neural network. Our method is based on the observation that using temperature scaling and adding small perturbations to the input can separate the softmax score distributions between in- and out-of-distribution images, allowing for more effective detection. We show in a series of experiments that {ODIN} is compatible with diverse network architectures and datasets. It consistently outperforms the baseline approach by a large margin, establishing a new state-of-the-art performance on this task. For example, {ODIN} reduces the false positive rate from the baseline 34.7\% to 4.3\% on the {DenseNet} (applied to {CIFAR}-10) when the true positive rate is 95\%.}
}

@article{Liu2020, 
  year     = {2020}, 
  title    = {Energy-based Out-of-distribution Detection}, 
  author   = {Liu, Weitang and Wang, Xiaoyun and Owens, John D and Li, Yixuan}, 
  journal  = {{arXiv}}, 
  doi      = {10.48550/arxiv.2010.03759}, 
  eprint   = {2010.03759}, 
  abstract = {Determining whether inputs are out-of-distribution ({OOD}) is an essential building block for safely deploying machine learning models in the open world. However, previous methods relying on the softmax confidence score suffer from overconfident posterior distributions for {OOD} data. We propose a unified framework for {OOD} detection that uses an energy score. We show that energy scores better distinguish in- and out-of-distribution samples than the traditional approach using the softmax scores. Unlike softmax confidence scores, energy scores are theoretically aligned with the probability density of the inputs and are less susceptible to the overconfidence issue. Within this framework, energy can be flexibly used as a scoring function for any pre-trained neural classifier as well as a trainable cost function to shape the energy surface explicitly for {OOD} detection. On a {CIFAR}-10 pre-trained {WideResNet}, using the energy score reduces the average {FPR} (at {TPR} 95\%) by 18.03\% compared to the softmax confidence score. With energy-based training, our method outperforms the state-of-the-art on common benchmarks.}
}

@article{Song2020, 
  year     = {2020}, 
  title    = {Score-Based Generative Modeling through Stochastic Differential Equations}, 
  author   = {Song, Yang and Sohl-Dickstein, Jascha and Kingma, Diederik P and Kumar, Abhishek and Ermon, Stefano and Poole, Ben}, 
  journal  = {{arXiv}}, 
  doi      = {10.48550/arxiv.2011.13456}, 
  eprint   = {2011.13456}, 
  abstract = {Creating noise from data is easy; creating data from noise is generative modeling. We present a stochastic differential equation ({SDE}) that smoothly transforms a complex data distribution to a known prior distribution by slowly injecting noise, and a corresponding reverse-time {SDE} that transforms the prior distribution back into the data distribution by slowly removing the noise. Crucially, the reverse-time {SDE} depends only on the time-dependent gradient field ({\textbackslash} aka, score) of the perturbed data distribution. By leveraging advances in score-based generative modeling, we can accurately estimate these scores with neural networks, and use numerical {SDE} solvers to generate samples. We show that this framework encapsulates previous approaches in score-based generative modeling and diffusion probabilistic modeling, allowing for new sampling procedures and new modeling capabilities. In particular, we introduce a predictor-corrector framework to correct errors in the evolution of the discretized reverse-time {SDE}. We also derive an equivalent neural {ODE} that samples from the same distribution as the {SDE}, but additionally enables exact likelihood computation, and improved sampling efficiency. In addition, we provide a new way to solve inverse problems with score-based models, as demonstrated with experiments on class-conditional generation, image inpainting, and colorization. Combined with multiple architectural improvements, we achieve record-breaking performance for unconditional image generation on {CIFAR}-10 with an Inception score of 9.89 and {FID} of 2.20, a competitive likelihood of 2.99 bits/dim, and demonstrate high fidelity generation of 1024 x 1024 images for the first time from a score-based generative model.}, 
  note     = {Probability Flow {ODE} canonical reference}
}

@article{Zhang2025, 
  year     = {2025}, 
  title    = {Enhanced Graph Autoencoder for Graph Anomaly Detection Using Subgraph Information}, 
  author   = {Zhang, Chi and Jung, Jin-Woo}, 
  journal  = {Applied Sciences}, 
  doi      = {10.3390/app15158691}, 
  abstract = {Graph anomaly detection aims at identifying rare, unusual entities in attributed networks with respect to their patterns or structures that deviate significantly from the majority within a graph. Over the years, extensive efforts in this field have been dedicated to the powerful capability of attributed networks to model real-world systems. Given the scarcity of labeled anomalies, current research primarily emphasizes model design via unsupervised learning. Graph autoencoders have been widely utilized for such purposes, leveraging the outstanding capabilities of Graph Neural Networks to model graph structured data. However, most existing graph autoencoder-based anomaly detectors do not exploit the nodes’ local subgraph information, limiting their ability to comprehensively understand the network for better representation learning. Moreover, these methods place greater emphasis on the attribute reconstruction process while neglecting the structure reconstruction aspect. This paper proposes an enhanced graph autoencoder framework for graph anomaly detection tasks that incorporates a subgraph extraction and aggregation preprocessing stage to utilize the nodes’ local topological information for enhanced embedding generation and to induce an additional node–subgraph view through model learning. A graph structure learning-based decoder is introduced as the structure decoder for better relationship learning. Finally, during the anomaly scoring stage, a node neighborhood selection technique is applied to enhance the detection performance. The effectiveness of the proposed framework is demonstrated through comprehensive experiments conducted on six commonly used real-world datasets.}, 
  pages    = {8691}, 
  number   = {15}, 
  volume   = {15}, 
  note     = {Anomalies {WITHIN} graphs}
}

@article{Hoogeboom2022, 
  year     = {2022}, 
  title    = {Equivariant Diffusion for Molecule Generation in 3D}, 
  author   = {Hoogeboom, Emiel and Satorras, Victor Garcia and Vignac, Clément and Welling, Max}, 
  journal  = {{arXiv}}, 
  doi      = {10.48550/arxiv.2203.17003}, 
  eprint   = {2203.17003}, 
  abstract = {This work introduces a diffusion model for molecule generation in 3D that is equivariant to Euclidean transformations. Our E(3) Equivariant Diffusion Model ({EDM}) learns to denoise a diffusion process with an equivariant network that jointly operates on both continuous (atom coordinates) and categorical features (atom types). In addition, we provide a probabilistic analysis which admits likelihood computation of molecules using our model. Experimentally, the proposed method significantly outperforms previous 3D molecular generative methods regarding the quality of generated samples and efficiency at training time.}
}

\clearpage

\setcounter{section}{0}
\renewcommand{\thesection}{S\arabic{section}}
\renewcommand{\thesubsection}{S\arabic{section}.\arabic{subsection}}
\setcounter{figure}{0}
\renewcommand{\thefigure}{S\arabic{figure}}
\setcounter{table}{0}
\renewcommand{\thetable}{S\arabic{table}}

\section{Supplementary Information}

\subsection{Related Work}
\label{sec:related_work}
The problem of detecting out-of-distribution (OOD) samples has been extensively studied in computer vision. Early discriminative approaches relied on post-hoc analysis of supervised classifier outputs, such as the maximum softmax probability \cite{Hendrycks2017} or temperature scaling \cite{Liang2017}. More recent methods leverage energy-based scores \cite{Liu2020} or activation pruning \cite{Sun2021} to separate ID from OOD data. However, these methods usually require training labels of the in-distribution data and their applicability is thus limited to labeled datasets.

\paragraph{Deep generative models for OOD detection} 
Generative models are particularly well suited for OOD detection in an unsupervised setting, as they are trained on the generative task and require no labels. During training, these models learn a probability density function of the training data. This allows to estimate the likelihood of a given new sample under the learned model \cite{Song2020, Goodier2023}. 

Some generative approaches for OOD detection are reconstruction-based, predominantly using Variational Autoencoders (VAEs) \cite{AnCho2015, Yang2021, Ataeiasad2024}. The underlying assumption is that a model trained on in-distribution data will fail to accurately reconstruct OOD samples, yielding a high reconstruction error. However, some research highlighted that standard autoencoders often reconstruct outliers perfectly, rendering the reconstruction error unreliable \cite{Denouden2018, Zong2018}. 

To overcome this, research shifted toward explicit density estimation using the likelihood $p(x)$ directly to identify OOD samples. These methods model the in-distribution data with a probabilistic model and flag test data that lies in low-density regions as OOD. While theoretically sound, this direction faced a major setback when \cite{Nalisnick2018} demonstrated the likelihood paradox, where generative models assigned higher likelihoods to simple OOD data (e.g., SVHN) than to the complex training distribution (e.g., CIFAR-10). This was confirmed by following research, showing that likelihood exhibits a strong bias towards the input complexity of the data \cite{Serra2019, Kirichenko2020}. Some works have attempted to mitigate this using likelihood ratios \cite{Ren2019, Goodier2023}, complexity-based corrections \cite{Serra2019} or applying diffusion in the representation space of a pre-trained image encoder \cite{Ding2025}. 

Deviating from pure likelihood-based methods, Heng et al. (2024) \cite{Heng2024} introduced diffusion trajectory features, specifically the rate-of-change and curvature of the PF-ODE, as effective OOD signals for image benchmarks. Their trajectory-focused approach proved resistant to the complexity bias that often confounds likelihood-based OOD classification. However, while their study was limited to image data and a small set of descriptors, we translate these principles to the challenging domain of complex 3D graphs. We expand the feature space to 18 distinct trajectory features and model them jointly with log-likelihoods. This composite approach yields a robust OOD detector for 3D graph data, where likelihoods and trajectory features provide complementary, essential signals for classification in this domain.

\paragraph{OOD detection on graphs} 
Extending OOD detection from image data to graph domains introduces unique challenges due to the irregular, non-Euclidean nature of graph data. While several works have adapted discriminative scores for Graph Neural Networks (GNNs) trained with standard supervised objectives \cite{Wu2023}, unsupervised generative approaches typically focus on reconstructing the adjacency matrix conditioned on node features ($P(A|X)$). These methods operate on the assumption that a failure to recreate a graph's structure indicates an outlier \cite{Li2022, Zhang2025}. However, existing unsupervised graph-level OOD detectors \cite{Ma2021, Li2022, Shen2024} generally operate on 2D topological graphs, neglecting the precise 3D geometry that is essential for molecular complexes. Shen et al. (2024) \cite{Shen2024} use a diffusion model for OOD detection on molecular graphs of drug-like compounds. However, they also limit their modeling to topological graphs without 3D geometry, rendering their method unsuitable for our use case of OOD detection in protein-ligand interactions. In molecular interaction graphs, OOD characteristics manifest not only in features or connectivity but crucially in geometry. Since the distinction between in-distribution and OOD samples often hinges on 3D spatial properties, such as bond lengths or steric clashes, methods that do not account for the precise spatial positioning of nodes would fail to detect "geometric outliers" that are topologically valid but physically impossible. Our work addresses this by combining graph features and 3D geometry into a single representation for the diffusion process.

\paragraph{Diffusion models for protein-ligand interactions} 
Score-based generative models and Denoising Diffusion Probabilistic Models (DDPMs) have achieved state-of-the-art results generating new protein-ligand interactions \cite{Hoogeboom2022,TargetDiff,Schneuing2024}. However, these works primarily focused on the generative task of sampling novel, stable conformations. The potential of using the probability flow ODE formulation \cite{Song2020} of a diffusion model for OOD detection in 3D molecular complexes remains unexplored. Our work bridges this gap by adapting the equivariant diffusion backbone from DiffSBDD \cite{Schneuing2024} for precise OOD detection, thereby providing a much-needed OOD detection capability to the growing field of machine learning for protein-ligand interactions.

\subsection{Error-log-likelihood relation}
\begin{figure}[h!]
    \centering
    \includegraphics[width=\textwidth]{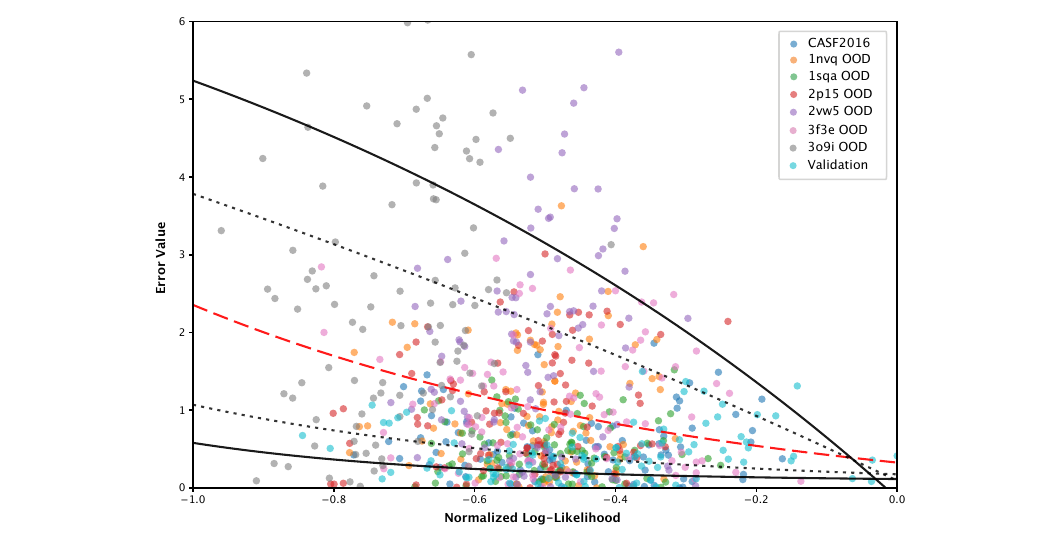}
    \caption{\textbf{Error-log-likelihood relation follows a scaled and shifted exponential:}
    Scatterplot illustrating the relationship between the log-likelihoods assigned by the diffusion model (x-axis) and the corresponding GEMS binding affinity prediction errors (y-axis) across the validation, CASF2016, and six out-of-distribution (OOD) datasets. A hierarchical fitting of five scaled and shifted exponential curves shows that log-likelihoods provide a rough estimate of the expected GEMS prediction error.  Across all non-training complexes ($N=6223$), 75.8\% had predicted errors that fell within the bounds of the fitted exponential curves, with 8.84\% high outliers (errors above the fit) and 15.39\% low outliers (errors below the fit). To allow fair comparison across datasets with different label spreads, absolute errors are normalized by the label variance, preventing artificially low error values on narrowly distributed datasets. Exponential curves are fit to a maximum of 500 randomly sampled points from each non-training distribution. Only 100 randomly sampled complexes from each datasets are shown in the to improve visibility.
    }
    \label{fig:lkhd_error_exponential}
\end{figure}

\subsection{Evaluation of sensitivity to ligand size and chemical features} \label{suppl:robustness}
Naturally, the size of the input ligand and pocket influence the PF-ODE log-likelihood scores. As anticipated, datasets with significant divergence in ligand and pocket size distributions relative to the training data generally received lower scores, indicating OOD status. However, the diffusion model demonstrated a notable robustness to this size-induced distribution shift. Many complexes containing extraordinarily large ligands were nonetheless assigned high log-likelihoods, indicating that size is not the sole driver of the OOD signal. This resilience is exemplified by the 3f3e dataset: Despite possessing by far the largest ligand and pocket sizes (Figure \ref{fig:pocket_ligand_sizes}), it yielded intermediate log-likelihoods. This suggests an intermediate OOD status, a conclusion which was confirmed by our bioinformatic OOD quantification (Figure \ref{fig:lkhd_tm_tanimoto_rmsd}), which identified 3f3e to present a moderate distribution shift relative to the training data. This suggests that the diffusion model recognizes familiar structural motifs and assigns high likelihoods, even when the input's size strongly deviates from the common sizes in the training distribution.

\begin{figure}[h]
    \centering
    \includegraphics[width=\textwidth]{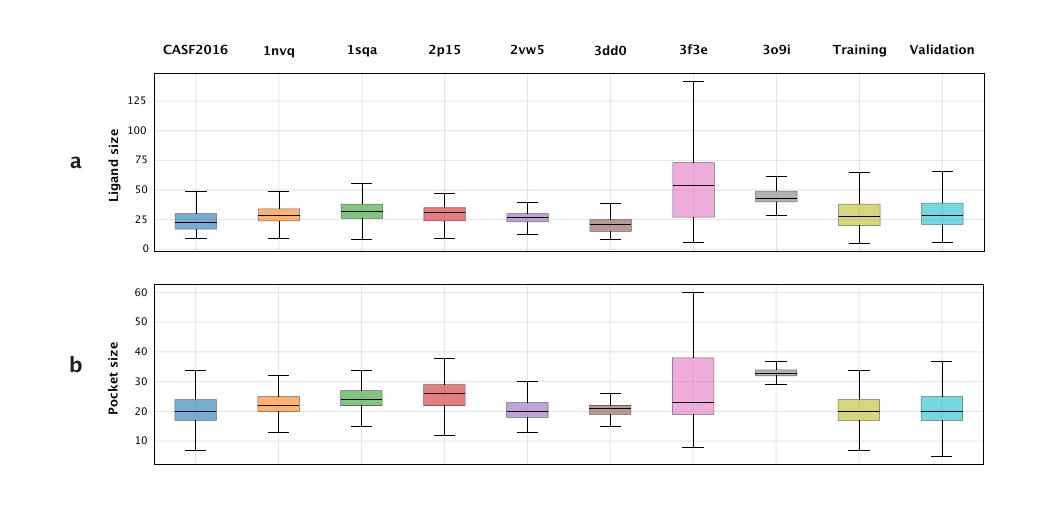}
    \caption{\textbf{Ligand and pocket size distribution across datasets:}
    This figure compares the distributions of \textbf{a)} ligand sizes (atom counts) and \textbf{b)} protein pocket sizes (amino acid counts) across all datasets, including CASF2016 ($N=285$), 1nvq ($N=2684$), 1sqa ($N=734$), 2p15 ($N=455$), 2vw5 ($N=207$), 3dd0 ($N=461$), 3f3e ($N=389$), 3o9i ($N=468$), training ($N=10'168$) and validation ($N=1'127$) datasets. As pocket residues are determined using a fixed radius around each ligand atom, ligand and pocket sizes are strongly connected. 
    Boxplots show the median (centre line), 25th–75th percentiles (box), whiskers extend to data points within 1.5 × IQR, outliers are not shown.
    }
    \label{fig:pocket_ligand_sizes}
\end{figure}

\subsection{Intra-Cluster Ligand Diversity}
While we isolated specific protein families into OOD datasets, the ligand space remained uncontrolled. Consequently, OOD pockets bind a diverse mix of ligands, ranging from familiar to exotic structures. Figure \ref{fig:intra_ligand_diversity} shows the intra-cluster ligand diversity of each dataset. For every dataset, we identified a medoid ligand, defined as the molecule with the smallest sum of distances ($1 - \text{Tanimoto similarity}$) to all other ligands, and computed the distance of all other ligands to this central reference. 

\begin{figure}[h!]
    \centering
    \includegraphics[width=0.8\textwidth]{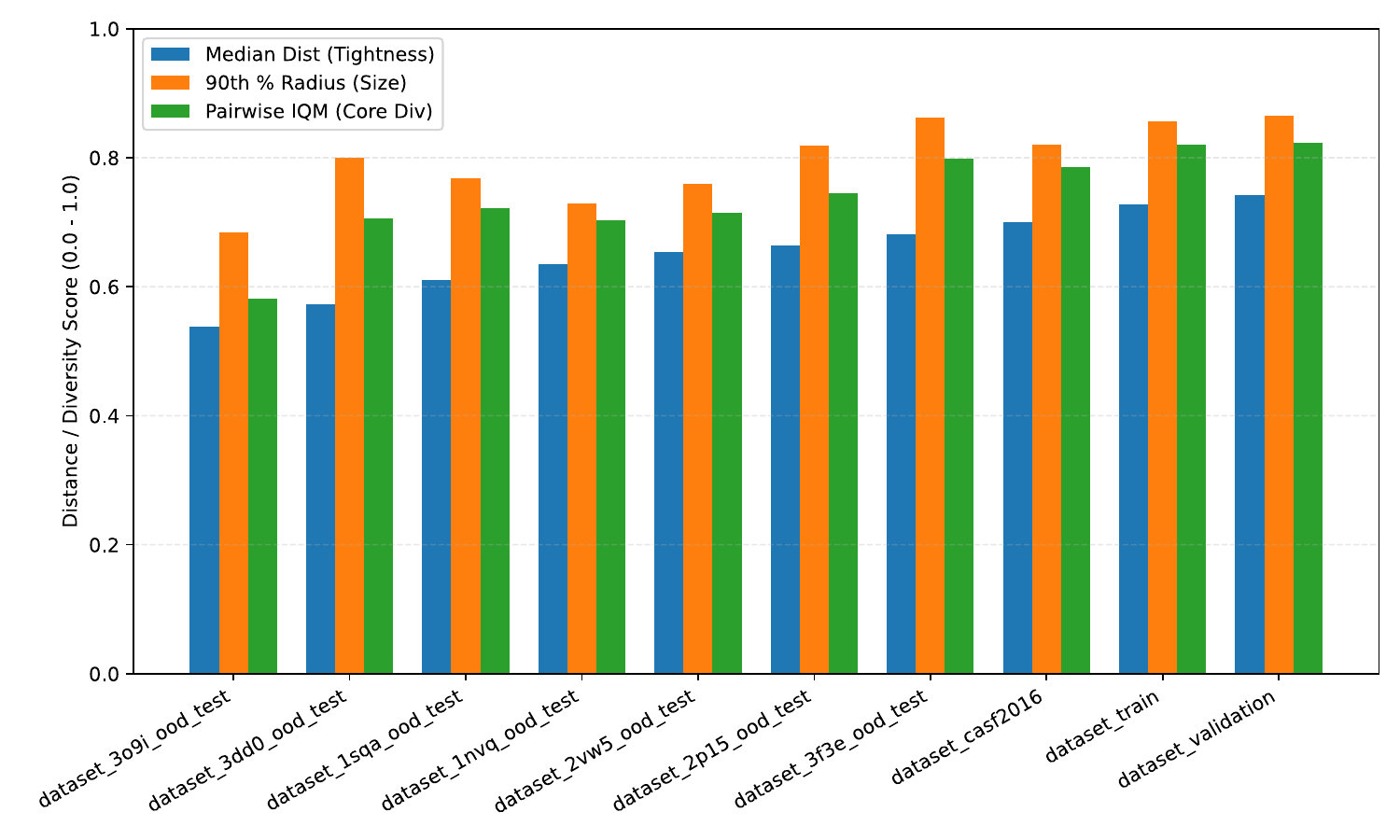}
    \caption{\textbf{Intra-cluster ligand diversity of all datasets} sorted by cluster tightness in ascending order. Three robust intra-cluster diversity metrics are included: 1) Median distance to medoid measures typical cluster tightness by computing the median distance from the most representative point to all other points. 2) 90th percentile radius captures robust cluster size by identifying the distance that encompasses 90\% of points from the medoid. 3) Pairwise inter-quartile mean (IQM) quantifies overall cluster cohesion by averaging distances between all unique pairs within the middle 50\% of the pairwise distance distribution, providing resistance to outliers.}
    \label{fig:intra_ligand_diversity}
\end{figure}

\newpage
\subsection{Sampling Evaluation}\label{sec:sampling_eval}
The distribution-learning performance of the diffusion model was tested by quality checks on sampled graphs during model training. The validity of the \textit{de novo} generated protein–ligand configurations was assessed using the following metrics:
\begin{itemize}[itemsep=1pt]
    \item Divergence of atom type distribution from training distribution
    \item Divergence of amino acid type distribution from training distribution
    \item Percentage of fragmented ligands and mean number of fragments after bond inference
    \item Chemical validity (RDKit sanitization success)
    \item Mean number of rings and mean ring size per molecule
\end{itemize}
The best model achieved 92\% chemical validity, 84\% unfragmented ligands, and an average of 2.72 rings per ligand with a mean ring size of 5.59. Divergences in atom and amino acid type distributions (Jensen–Shannon) from the training data were low, at 0.010 and 0.009, respectively. These results indicate that the model learned the training distribution well enough to effectively map Gaussian noise to realistic 3D atom placements, resulting in chemically valid \textit{de novo} ligands.

\subsection{PF-ODE likelihoods as predictors of predictive error}
\label{subsec:pf-ode-error-theory}

We formalize the connection between the probability--flow ODE (PF-ODE)
log-likelihoods assigned by the diffusion model and the prediction errors of
an independent binding-affinity model such as GEMS.
Throughout, $x$ denotes a protein-ligand complex represented as a graph with
3D coordinates and discrete atom/residue types
(Section~\ref{sec:diffusion_model}).

\paragraph{Two-model setting}
Let $p_0$ be the (unknown) data distribution on complexes, assumed to admit
a density $p_0(x)$ with respect to Lebesgue measure on the ambient
Euclidean space (COM-centered coordinates concatenated with continuous
embeddings).
We train:
\begin{itemize}
  \item a \emph{diffusion model} with PF-ODE density $q_\phi(x)$ and score
  field $s_\phi(x,t)$, parameterised by $\phi$, and

  \item an \emph{independent predictor} (e.g.\ GEMS) with parameters
  $\theta$, which for each input $x$ outputs a scalar prediction
  $\hat{y}_\theta(x)$ of the true binding affinity $y(x)$.
\end{itemize}
We define the pointwise prediction error as
\begin{equation}
  e_\theta(x) \;:=\; \bigl|\hat{y}_\theta(x) - y(x)\bigr|,
\end{equation}
or its variance-normalised version used in our experiments
(Section~\ref{subsec:gems_correlations}).

The diffusion model yields a per-complex log-likelihood via the PF-ODE
(Section~\ref{subsec:pf-ode}); we write
\begin{equation}
  \ell_\phi(x) \;:=\; \log q_\phi(x),
  \qquad
  L_\phi(x) \;:=\; -\ell_\phi(x) \;=\; -\log q_\phi(x),
\end{equation}
for the log-likelihood and the negative log-likelihood (NLL),
respectively.  In what follows we omit the subscript $\phi$ when
unambiguous and write $q,q_t,s$ instead of $q_\phi,q_t^\phi,s_\phi$.

Our goal in this section is to relate $L(x)$, which is fully determined by the
diffusion model, to the error $e_\theta(x)$ of the \emph{separate}
predictive model, under explicit and mild assumptions.

\subsubsection{Relative entropy along the PF-ODE}

We recall the PF-ODE setup from Section~\ref{sec:diffusion_model}.
Let $\{p_t\}_{t\in[0,1]}$ and $\{q_t\}_{t\in[0,1]}$ be the time-marginal
densities of the noising process and the learned PF-ODE, respectively, so that
\begin{equation}
  p_t(x) \;=\; p(x_t = x),
  \qquad
  q_t(x) \;=\; q_\phi(x_t = x).
\end{equation}
The true and learned PF-ODE drifts are denoted by
$v^\star(x,t)$ and $v_\phi(x,t)$, and both families satisfy the
continuity equations
\begin{equation}
  \partial_t p_t(x) \;=\; -\nabla_x \!\cdot \bigl(p_t(x)\,v^\star(x,t)\bigr),
  \qquad
  \partial_t q_t(x) \;=\; -\nabla_x \!\cdot \bigl(q_t(x)\,v_\phi(x,t)\bigr).
\end{equation}

\begin{assumption}[Regularity of the PF-ODE marginals]
\label{ass:regularity}
For every $t\in[0,1]$ the densities $p_t$ and $q_t$ are smooth and
strictly positive on $\R^d$, and satisfy sufficient decay at infinity
so that:
\begin{itemize}
  \item differentiation under the integral sign is justified;
  \item integration by parts over $\R^d$ produces no boundary terms.
\end{itemize}
\end{assumption}

\emph{Interpretation.}
This is a standard smoothness and decay assumption for continuity
equations and PF-ODEs.  In our setting, $p_1$ and $q_1$ are equal to
the same COM-constrained Gaussian (Section~\ref{subsec:pf-ode}), and
the PF-ODE preserves smoothness when started from smooth data.

\paragraph{KL divergence and relative Fisher information}
For each $t\in[0,1]$ we define the relative entropy (KL divergence)
and relative Fisher information between $p_t$ and $q_t$ as
\begin{align}
  D_t
  &\;:=\;
  \mathrm{KL}\bigl(p_t \,\|\, q_t\bigr)
  \;=\;
  \int_{\R^d}
    p_t(x)\,\log\!\frac{p_t(x)}{q_t(x)}\,\dd x,
  \label{eq:Dt-def}
  \\
  I_t
  &\;:=\;
  \int_{\R^d}
    p_t(x)\,\bigl\|
      \nabla_x \log p_t(x) - \nabla_x \log q_t(x)
    \bigr\|_2^2 \,\dd x.
  \label{eq:It-def}
\end{align}
In our molecular setting, $x$ contains all COM-centered coordinates and
continuous embeddings of atoms and residues, stacked into a single Euclidean
vector.

\begin{lemma}[Time derivative of KL along the PF-ODE]
\label{lem:dt-kl-exact}
Under Assumption~\ref{ass:regularity},
\begin{equation}
  \frac{\dd}{\dd t}D_t
  \;=\;
  \int_{\R^d}
    p_t(x)\,
    \bigl(v^\star(x,t) - v_\phi(x,t)\bigr)^\top
    \Bigl[
      \nabla_x \log p_t(x) - \nabla_x \log q_t(x)
    \Bigr]\,
    \dd x.
  \label{eq:dt-kl-exact}
\end{equation}
\end{lemma}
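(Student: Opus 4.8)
The plan is to differentiate $D_t$ directly, substitute the two continuity equations, and then reorganize everything via integration by parts until the drift fields appear contracted against the score difference. Throughout I would lean on Assumption~\ref{ass:regularity}: smoothness lets me differentiate under the integral and apply the chain rule to $\log(p_t/q_t)$, strict positivity of $q_t$ keeps the ratio $p_t/q_t$ and the logarithm well-defined and integrable, and the decay clause is what kills every boundary term.

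First I would write $D_t = \int p_t \log p_t \, \dd x - \int p_t \log q_t \, \dd x$ and differentiate under the integral, applying the product rule to $\partial_t\!\bigl(p_t \log(p_t/q_t)\bigr)$. Since $\partial_t \log(p_t/q_t) = \partial_t p_t / p_t - \partial_t q_t / q_t$, this yields three contributions: $(\partial_t p_t)\log(p_t/q_t)$, a bare $\partial_t p_t$ coming from differentiating $\log p_t$, and $-(p_t/q_t)\,\partial_t q_t$ coming from differentiating $\log q_t$. The bare term integrates to zero by mass conservation, i.e.\ $\int \partial_t p_t \, \dd x = -\int \nabla\cdot(p_t v^\star)\,\dd x = 0$ under the no-boundary clause, so only the first and third contributions survive.

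Next I would substitute $\partial_t p_t = -\nabla\cdot(p_t v^\star)$ and $\partial_t q_t = -\nabla\cdot(q_t v_\phi)$ and integrate by parts in each surviving term. For the first, $-\int \nabla\cdot(p_t v^\star)\,\log(p_t/q_t)\,\dd x = \int p_t\, v^\star \cdot \nabla\log(p_t/q_t)\,\dd x$. For the third, I would first rewrite it as $+\int (p_t/q_t)\,\nabla\cdot(q_t v_\phi)\,\dd x$ and integrate by parts to obtain $-\int \nabla(p_t/q_t)\cdot q_t v_\phi\,\dd x$. The key algebraic identity here is $\nabla(p_t/q_t) = (p_t/q_t)\,\bigl(\nabla\log p_t - \nabla\log q_t\bigr)$, which collapses the third term into $-\int p_t\, v_\phi \cdot \bigl(\nabla\log p_t - \nabla\log q_t\bigr)\,\dd x$ once the explicit $q_t$ cancels. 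Adding the two contributions and using $\nabla\log(p_t/q_t) = \nabla\log p_t - \nabla\log q_t$ gives exactly \eqref{eq:dt-kl-exact}.

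The main obstacle is the asymmetric bookkeeping around the $\log q_t$ factor: its time derivative enters weighted by the ratio $p_t/q_t$ rather than by $p_t$, and it is only after the second integration by parts together with the ratio identity that this reorganizes cleanly into the same $p_t$-weighted score difference as the first term. Verifying that the explicit $q_t$ inside $q_t v_\phi$ cancels against the $1/q_t$ generated by $\nabla(p_t/q_t)$ is the one step where sign and factor errors are easy to make; everything else is a routine application of the product and chain rules, with the regularity assumption invoked to discard the boundary terms from both integrations by parts.
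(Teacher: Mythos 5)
Your proposal is correct and follows essentially the same route as the paper's proof: differentiation under the integral, elimination of the bare $\partial_t p_t$ term by mass conservation, substitution of the two continuity equations, and two integrations by parts with the ratio identity $\nabla(p_t/q_t) = (p_t/q_t)\,\nabla\log(p_t/q_t)$ collapsing the $v_\phi$ term. The only difference is expository—you make the cancellation of the explicit $q_t$ factor a highlighted step, which the paper performs silently—so there is nothing substantive to add.
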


\begin{proof}
Starting from the definition \eqref{eq:Dt-def},
\begin{align}
  \frac{\dd}{\dd t}D_t
  &=
  \frac{\dd}{\dd t}
  \int_{\R^d} p_t(x)\,\log\!\frac{p_t(x)}{q_t(x)}\,\dd x
  \\
  &=
  \int_{\R^d} (\partial_t p_t(x))\,
        \log\!\frac{p_t(x)}{q_t(x)}\,\dd x
  \;+\;
  \int_{\R^d} p_t(x)\,\partial_t\log\!\frac{p_t(x)}{q_t(x)}\,\dd x.
  \label{eq:dt-kl-split}
\end{align}
For the second term we expand
\begin{equation}
  \partial_t\log\!\frac{p_t}{q_t}
  =
  \frac{\partial_t p_t}{p_t} - \frac{\partial_t q_t}{q_t},
\end{equation}
so that
\begin{align}
  \int p_t\,\partial_t\log\!\frac{p_t}{q_t}\,\dd x
  &=
  \int \partial_t p_t(x)\,\dd x
  \;-\;
  \int \frac{p_t(x)}{q_t(x)}\,\partial_t q_t(x)\,\dd x.
\end{align}
Because $p_t$ is a probability density, $\int \partial_t p_t\,\dd x =
\partial_t \int p_t\,\dd x = 0$, hence
\begin{equation}
  \frac{\dd}{\dd t}D_t
  =
  \int (\partial_t p_t)\,\log\!\frac{p_t}{q_t}\,\dd x
  \;-\;
  \int \frac{p_t}{q_t}\,\partial_t q_t\,\dd x.
  \label{eq:dt-kl-before-ce}
\end{equation}
Using the continuity equations
\begin{equation}
  \partial_t p_t = -\nabla_x\cdot(p_t v^\star),
  \qquad
  \partial_t q_t = -\nabla_x\cdot(q_t v_\phi),
\end{equation}
we obtain
\begin{align}
  \frac{\dd}{\dd t}D_t
  &=
  -\int \nabla_x\cdot(p_t v^\star)\,\log\!\frac{p_t}{q_t}\,\dd x
  \;+\;
  \int \frac{p_t}{q_t}\,\nabla_x\cdot(q_t v_\phi)\,\dd x.
\end{align}
Assumption~\ref{ass:regularity} allows us to integrate by parts without
boundary terms.  For the first term,
\begin{align}
  -\int \nabla_x\cdot(p_t v^\star)\,\log\!\frac{p_t}{q_t}
  \,\dd x
  &=
  \int p_t(x)\,v^\star(x,t)\cdot\nabla_x\log\!\frac{p_t(x)}{q_t(x)}\,\dd x.
\end{align}
For the second term,
\begin{align}
  \int \frac{p_t}{q_t}\,\nabla_x\cdot(q_t v_\phi)\,\dd x
  &=
  -\int q_t(x)\,v_\phi(x,t)\cdot
  \nabla_x\!\left(\frac{p_t(x)}{q_t(x)}\right)\,\dd x
  \\
  &=
  -\int p_t(x)\,v_\phi(x,t)\cdot
  \nabla_x\log\!\frac{p_t(x)}{q_t(x)}\,\dd x.
\end{align}
Combining these identities gives
\begin{equation}
  \frac{\dd}{\dd t}D_t
  =
  \int p_t(x)\,
    \bigl(v^\star(x,t) - v_\phi(x,t)\bigr)\cdot
    \nabla_x\log\!\frac{p_t(x)}{q_t(x)}\,\dd x,
\end{equation}
which is \eqref{eq:dt-kl-exact}.
\end{proof}

\emph{Interpretation.}
Lemma~\ref{lem:dt-kl-exact} is an exact information-balance identity:
the rate of change of the KL divergence between the true and learned
PF-ODE marginals is the inner product, averaged under $p_t$, between
(i) the velocity mismatch $v^\star - v_\phi$ and
(ii) the gradient of the log-density ratio
$\log\frac{p_t}{q_t}$.

\begin{lemma}[Cauchy--Schwarz bound on $\dot D_t$]
\label{lem:dt-kl-cs}
Under the assumptions above,
\begin{equation}
  \bigl|\dot D_t\bigr|
  \;\le\;
  \left(
    \int_{\R^d}
      p_t(x)\,\bigl\|v^\star(x,t) - v_\phi(x,t)\bigr\|_2^2\,
      \dd x
  \right)^{\!1/2}
  \sqrt{I_t},
  \label{eq:dt-kl-cs}
\end{equation}
where $I_t$ is the relative Fisher information from
\eqref{eq:It-def}.
\end{lemma}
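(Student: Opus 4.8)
The plan is to derive \eqref{eq:dt-kl-cs} directly from the exact identity \eqref{eq:dt-kl-exact} established in Lemma~\ref{lem:dt-kl-exact}, using two successive applications of the Cauchy--Schwarz inequality: once pointwise in $\R^d$ and once in the weighted space $L^2(p_t\,\dd x)$. No additional structure of the PF-ODE is required beyond the identity that Lemma~\ref{lem:dt-kl-exact} already supplies, so the argument is short.

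First I would introduce shorthand for the two vector fields appearing in the integrand of \eqref{eq:dt-kl-exact}: set $a(x) := v^\star(x,t) - v_\phi(x,t)$ for the velocity mismatch and $b(x) := \nabla_x \log p_t(x) - \nabla_x \log q_t(x)$ for the gradient of the log-density ratio. With this notation, \eqref{eq:dt-kl-exact} reads $\dot D_t = \int_{\R^d} p_t(x)\,a(x)^\top b(x)\,\dd x$, and the relative Fisher information \eqref{eq:It-def} is exactly $I_t = \int_{\R^d} p_t(x)\,\bigl\|b(x)\bigr\|_2^2\,\dd x$.

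Second I would take absolute values and bound the pointwise inner product by the product of Euclidean norms, $\bigl|a(x)^\top b(x)\bigr| \le \|a(x)\|_2\,\|b(x)\|_2$ (pointwise Cauchy--Schwarz). Since $p_t \ge 0$, this gives $\bigl|\dot D_t\bigr| \le \int_{\R^d} p_t\,\|a\|_2\,\|b\|_2\,\dd x$. I would then apply Cauchy--Schwarz in $L^2(p_t\,\dd x)$ to the scalar functions $\|a\|_2$ and $\|b\|_2$, writing $\sqrt{p_t}\,\|a\|_2$ and $\sqrt{p_t}\,\|b\|_2$ as the two factors, to obtain
\begin{equation*}
\int_{\R^d} p_t\,\|a\|_2\,\|b\|_2\,\dd x
\;\le\;
\left(\int_{\R^d} p_t\,\|a\|_2^2\,\dd x\right)^{1/2}
\left(\int_{\R^d} p_t\,\|b\|_2^2\,\dd x\right)^{1/2}.
\end{equation*}
Recognizing the second factor as $\sqrt{I_t}$ and the first as the square root of the mean-squared velocity mismatch yields \eqref{eq:dt-kl-cs}.

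The only point requiring care --- and the closest thing to an obstacle --- is ensuring that both weighted $L^2$ norms are finite, so that Cauchy--Schwarz applies and the right-hand side is well defined. Finiteness of $I_t$ is part of the standing relative-Fisher-information setup in \eqref{eq:It-def}, while finiteness of $\int_{\R^d} p_t\,\|v^\star - v_\phi\|_2^2\,\dd x$ is guaranteed by the smoothness and decay conditions of Assumption~\ref{ass:regularity}, the same conditions that already justified the integration by parts in Lemma~\ref{lem:dt-kl-exact}. Given these, the estimate is entirely routine, and no sign or cancellation subtleties arise because the bound is taken on $\bigl|\dot D_t\bigr|$ after moving the absolute value inside the integral.
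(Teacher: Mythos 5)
Your proposal is correct and follows essentially the same route as the paper, whose proof of Lemma~\ref{lem:dt-kl-cs} consists precisely of applying Cauchy--Schwarz to the identity \eqref{eq:dt-kl-exact}; you have simply made explicit the two standard steps (pointwise Cauchy--Schwarz on the inner product, then Cauchy--Schwarz in $L^2(p_t\,\dd x)$) that the paper leaves implicit. The additional remark on finiteness of the weighted $L^2$ norms is a reasonable piece of bookkeeping but does not change the argument.
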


\begin{proof}
Apply the Cauchy--Schwarz inequality to
\eqref{eq:dt-kl-exact}.
\end{proof}

\emph{Interpretation.}
The instantaneous rate of change of KL is bounded by the product of two
factors:
(i) the root-mean-squared velocity error between the true and learned
PF-ODEs, and (ii) the square root of the Fisher information of the
log-density ratio.

\paragraph{Relation to score error}
In our VE setting, the PF-ODE drifts and scores are linked by
\begin{equation}
  v^\star(x,t) = -\tfrac{1}{2} g(t)^2\,s^\star(x,t),
  \qquad
  v_\phi(x,t) = -\tfrac{1}{2} g(t)^2\,s_\phi(x,t),
\end{equation}
with scalar diffusion coefficient $g(t)$.
Hence
\begin{equation}
  v^\star(x,t) - v_\phi(x,t)
  \;=\;
  -\tfrac{1}{2} g(t)^2\,
  \bigl(s^\star(x,t) - s_\phi(x,t)\bigr).
\end{equation}
Define
\begin{equation}
  A_t
  \;:=\;
  \int_{\R^d}
    p_t(x)\,\bigl\|v^\star(x,t) - v_\phi(x,t)\bigr\|_2^2\,
    \dd x.
\end{equation}
Then
\begin{equation}
  A_t
  =
  \frac{g(t)^4}{4}
  \int_{\R^d}
    p_t(x)\,\bigl\|s^\star(x,t) - s_\phi(x,t)\bigr\|_2^2\,
    \dd x.
\end{equation}

We quantify the time-integrated score error as
\begin{equation}
  \mathcal{L}_{\mathrm{score}}(\phi)
  \;:=\;
  \int_0^1
    g(t)^4
    \int_{\R^d}
      p_t(x)\,\bigl\|s^\star(x,t) - s_\phi(x,t)\bigr\|_2^2\,
      \dd x\,\dd t,
  \label{eq:l-score-def}
\end{equation}
which differs from the practical training loss only by a known
time-dependent weight (Section~\ref{sec:diffusion_model}).
With this definition,
\begin{equation}
  \int_0^1 A_t\,\dd t
  \;=\;
  \frac{1}{4}\,\mathcal{L}_{\mathrm{score}}(\phi).
\end{equation}

Integrating \eqref{eq:dt-kl-cs} in time from $t=0$ to $t=1$ and using
that $D_1 = \mathrm{KL}(p_1\|q_1) = 0$ (common terminal Gaussian
under COM constraints), we obtain
we obtain
\begin{equation}
  D_0
  \;=\;
  -\int_0^1 \dot D_t\,\dd t
  \;\le\;
  \int_0^1 \bigl|\dot D_t\bigr|\,\dd t
  \;\le\;
  \int_0^1 \sqrt{A_t I_t}\,\dd t.
\end{equation}

Another application of Cauchy--Schwarz in time yields
\begin{equation}
  D_0
  \;\le\;
  \left(\int_0^1 A_t\,\dd t \right)^{1/2}
  \left(\int_0^1 I_t\,\dd t \right)^{1/2}
  \;=\;
  \frac{1}{2}\,
  \sqrt{\mathcal{L}_{\mathrm{score}}(\phi)}
  \left(\int_0^1 I_t\,\dd t \right)^{1/2}.
  \label{eq:kl-score-fisher}
\end{equation}

\begin{assumption}[Bounded integrated relative Fisher]
\label{ass:bounded-fisher}
There exists a constant $C_{\mathrm{FI}}>0$ such that
\begin{equation}
  \int_0^1 I_t\,\dd t
  \;\le\;
  C_{\mathrm{FI}}.
\end{equation}
\end{assumption}

\emph{Interpretation.}
Assumption~\ref{ass:bounded-fisher} states that the time-integrated
relative Fisher information between the true and learned marginals
remains bounded.  In other words, along the entire PF-ODE trajectory
the score fields $\nabla_x \log p_t$ and $\nabla_x \log q_t$ do not
develop arbitrarily large discrepancies in gradient norm.
When $q_t=p_t$ for all $t$ (perfect modelling), we have $I_t\equiv 0$,
so the assumption trivially holds.

Combining \eqref{eq:kl-score-fisher} with
Assumption~\ref{ass:bounded-fisher} yields:

\begin{proposition}[Score error controls final-time KL up to capacity]
\label{prop:score-kl}
Under Assumptions~\ref{ass:regularity} and~\ref{ass:bounded-fisher},
\begin{equation}
  \mathrm{KL}\bigl(p_0\|q_0\bigr)
  \;=\;
  D_0
  \;\le\;
  \frac{1}{2}\,\sqrt{C_{\mathrm{FI}}}\,
  \sqrt{\mathcal{L}_{\mathrm{score}}(\phi)}.
\end{equation}
\end{proposition}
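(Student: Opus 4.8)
The plan is to treat Proposition~\ref{prop:score-kl} as the capstone of the information-balance calculation already assembled above, rather than as a fresh estimate: almost all of the analytic work lives in Lemmas~\ref{lem:dt-kl-exact} and~\ref{lem:dt-kl-cs} and the displayed chain culminating in~\eqref{eq:kl-score-fisher}. My strategy is therefore to integrate the pointwise KL-rate bound in time, convert the terminal value $D_1=0$ into the desired quantity $D_0=\mathrm{KL}(p_0\|q_0)$, and then peel off the Fisher factor using Assumption~\ref{ass:bounded-fisher}. In essence the proposition is what remains after feeding the single inequality $\int_0^1 I_t\,\dd t\le C_{\mathrm{FI}}$ into the already-derived bound~\eqref{eq:kl-score-fisher}.

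Concretely, I would proceed in the following order. First, apply Lemma~\ref{lem:dt-kl-cs} at each fixed $t$ to obtain $|\dot D_t|\le \sqrt{A_t}\,\sqrt{I_t}$, where $A_t=\int p_t\,\|v^\star-v_\phi\|_2^2\,\dd x$ and $I_t$ is the relative Fisher information~\eqref{eq:It-def}. Second, use the common terminal Gaussian (Section~\ref{subsec:pf-ode}), which gives $D_1=\mathrm{KL}(p_1\|q_1)=0$, to write $D_0=-\int_0^1\dot D_t\,\dd t\le\int_0^1|\dot D_t|\,\dd t\le\int_0^1\sqrt{A_t I_t}\,\dd t$. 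Third, apply the Cauchy--Schwarz inequality in the time variable to split this into $\bigl(\int_0^1 A_t\,\dd t\bigr)^{1/2}\bigl(\int_0^1 I_t\,\dd t\bigr)^{1/2}$. Fourth, substitute the VE score--drift relation, which yields $\int_0^1 A_t\,\dd t=\tfrac14\mathcal{L}_{\mathrm{score}}(\phi)$ from~\eqref{eq:l-score-def}, and bound the remaining factor by $\int_0^1 I_t\,\dd t\le C_{\mathrm{FI}}$ via Assumption~\ref{ass:bounded-fisher}. Collecting constants produces $D_0\le\tfrac12\sqrt{C_{\mathrm{FI}}}\,\sqrt{\mathcal{L}_{\mathrm{score}}(\phi)}$, which is exactly the claim.

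Because the two Cauchy--Schwarz steps and the score--drift substitution are mechanical, the only genuine content I expect to have to defend is bookkeeping rather than estimation. The delicate points are (i) the terminal identity $D_1=0$, which I would justify by appealing to the construction in Section~\ref{subsec:pf-ode}, where both the forward noising process and the learned PF-ODE are arranged to land on the identical COM-constrained Gaussian at $t=1$; and (ii) the measurability and integrability needed to pass from the pointwise-in-$t$ bound to the time-integrated one and to invoke Cauchy--Schwarz over $[0,1]$. The latter follows once $t\mapsto A_t$ and $t\mapsto I_t$ are integrable, which is guaranteed by finiteness of $\mathcal{L}_{\mathrm{score}}(\phi)$ (Assumption~\ref{ass:regularity} together with~\eqref{eq:l-score-def}) and by Assumption~\ref{ass:bounded-fisher}, respectively; these also license Tonelli/Fubini when exchanging the spatial and temporal integrals. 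The real analytic difficulty---the integration by parts with vanishing boundary terms that turns $\dot D_t$ into an inner product of velocity mismatch and score difference---has already been discharged in Lemma~\ref{lem:dt-kl-exact} under Assumption~\ref{ass:regularity}, so no new decay or off-manifold estimates are required at this stage.
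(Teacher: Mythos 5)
Your proposal is correct and follows exactly the paper's own derivation: Lemma~\ref{lem:dt-kl-cs} pointwise in $t$, the terminal identity $D_1=0$ from the shared COM-constrained Gaussian, Cauchy--Schwarz in time, the substitution $\int_0^1 A_t\,\dd t=\tfrac14\mathcal{L}_{\mathrm{score}}(\phi)$, and finally Assumption~\ref{ass:bounded-fisher} to bound the Fisher factor. The bookkeeping points you flag (terminal KL vanishing and time-integrability licensing Cauchy--Schwarz) are exactly the ones the paper relies on, so nothing is missing.
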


\emph{Interpretation.}
Proposition~\ref{prop:score-kl} states that if the PF-ODE velocity
field learned by the diffusion model tracks the true probability-flow
velocity field in mean-squared sense (small $\mathcal{L}_{\mathrm{score}}$),
and if the relative Fisher information along the flow stays bounded,
then the resulting density $q_0$ is close to the true data distribution
$p_0$ in KL.  In our context, this means that a well-trained molecular
diffusion model assigns a globally reasonable density to the ensemble
of protein-ligand complexes seen during training.

\subsubsection{PF-ODE negative log-likelihood as a random variable}

We now treat the PF-ODE negative log-likelihood
\begin{equation}
  L(x) \;:=\; -\log q_0(x)
\end{equation}
as a random variable under the data distribution $p_0$.

The Shannon entropy of $p_0$ is
\begin{equation}
  H(p_0)
  \;:=\;
  -\int_{\R^d} p_0(x)\,\log p_0(x)\,\dd x.
\end{equation}
By the definition of KL divergence,
\begin{align}
  \mathrm{KL}(p_0\|q_0)
  &=
  \int_{\R^d}
    p_0(x)\,\log\!\frac{p_0(x)}{q_0(x)}\,\dd x
  \\
  &=
  \int_{\R^d}
    p_0(x)\,\log p_0(x)\,\dd x
  - \int_{\R^d}
    p_0(x)\,\log q_0(x)\,\dd x
  \\
  &=
  -H(p_0)
  + \int_{\R^d}
      p_0(x)\,L(x)\,\dd x.
\end{align}
Rearranging gives the exact identity
\begin{equation}
  \int_{\R^d}
    p_0(x)\,L(x)\,\dd x
  \;=\;
  H(p_0) + \mathrm{KL}(p_0\|q_0).
\end{equation}
We denote this typical (mean) NLL by
\begin{equation}
  L_{\mathrm{typ}}
  \;:=\;
  \int_{\R^d}
    p_0(x)\,L(x)\,\dd x
  \;=\;
  H(p_0) + \mathrm{KL}(p_0\|q_0).
\end{equation}

\emph{Interpretation.}
The mean PF-ODE NLL under the true data distribution decomposes into
an intrinsic term (the entropy $H(p_0)$) and a mismatch term
($\mathrm{KL}(p_0\|q_0)$).
Proposition~\ref{prop:score-kl} implies that
$\mathrm{KL}(p_0\|q_0)$ is small when the diffusion model is well-trained,
so $L_{\mathrm{typ}}$ is close to the intrinsic complexity of $p_0$.

\begin{assumption}[Finite NLL variance]
\label{ass:nll-variance}
There exists a constant $\sigma^2<\infty$ such that
\begin{equation}
  \mathrm{Var}_{x\sim p_0}\bigl(L(x)\bigr)
  \;=\;
  \int_{\R^d}
    p_0(x)\,\bigl(L(x) - L_{\mathrm{typ}}\bigr)^2\,\dd x
  \;\le\;
  \sigma^2.
\end{equation}
\end{assumption}

\emph{Interpretation.}
Assumption~\ref{ass:nll-variance} requires that the PF-ODE NLL does
not exhibit infinite variance under the training distribution.
For any sane trained density model this holds in practice and imposes
a minimal regularity condition on the tails of $q_0$ with respect to
$p_0$.

Under this assumption, a basic Chebyshev bound yields tail control:

\begin{lemma}[NLL concentration under $p_0$]
\label{lem:nll-tail-chebyshev}
Under Assumption~\ref{ass:nll-variance}, for any $\alpha>0$,
\begin{equation}
  \int_{\{x:\,L(x)\,\ge\,L_{\mathrm{typ}}+\alpha\}}
    p_0(x)\,\dd x
  \;\le\;
  \frac{\sigma^2}{\alpha^2}.
  \label{eq:nll-tail}
\end{equation}
\end{lemma}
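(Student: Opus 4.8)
The plan is to recognize this as a direct instance of Chebyshev's inequality for the scalar random variable $L(x) = -\log q_0(x)$ under the data law $p_0$. Both ingredients are already in hand: the identity defining $L_{\mathrm{typ}}$ just above shows that the mean of $L$ under $p_0$ equals $L_{\mathrm{typ}} = H(p_0) + \KL(p_0\|q_0)$, while Assumption~\ref{ass:nll-variance} bounds its variance by $\sigma^2$. Hence the tail probability of $L$ exceeding its mean by a margin $\alpha$ should be controllable by a pure second-moment argument, with no further appeal to the PF-ODE machinery developed earlier in the section.

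First I would rewrite the left-hand side as the $p_0$-probability of the one-sided event $\{x : L(x) \ge L_{\mathrm{typ}} + \alpha\}$; since $p_0$ is a Lebesgue density, the set integral equals $\mathbb{P}_{x\sim p_0}(L(x) \ge L_{\mathrm{typ}} + \alpha)$. Next I would enlarge this to a symmetric event: because $\alpha > 0$, on $\{L(x) \ge L_{\mathrm{typ}} + \alpha\}$ we have $L(x) - L_{\mathrm{typ}} \ge \alpha > 0$, so squaring preserves the inequality and this one-sided tail is contained in $\{(L(x) - L_{\mathrm{typ}})^2 \ge \alpha^2\}$. The inclusion only adds mass on the right, so the left-hand side is bounded by $\int_{\{(L - L_{\mathrm{typ}})^2 \ge \alpha^2\}} p_0(x)\,\dd x$.

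Finally I would apply Markov's inequality to the non-negative random variable $(L(x) - L_{\mathrm{typ}})^2$:
\[
\int_{\{(L - L_{\mathrm{typ}})^2 \ge \alpha^2\}} p_0(x)\,\dd x
\;\le\;
\frac{1}{\alpha^2}\int_{\R^d} p_0(x)\,\bigl(L(x) - L_{\mathrm{typ}}\bigr)^2\,\dd x
\;=\;
\frac{\Var_{x\sim p_0}\bigl(L(x)\bigr)}{\alpha^2}.
\]
Identifying the numerator with the variance and invoking Assumption~\ref{ass:nll-variance} to bound it by $\sigma^2$ then delivers the claimed $\sigma^2/\alpha^2$.

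There is essentially no analytical obstacle here; the content is entirely standard, and the only two points that merit a line of care are bookkeeping. First, one must confirm that the centering constant $L_{\mathrm{typ}}$ is genuinely the mean of $L$ under $p_0$ --- this is precisely the identity established just before the assumption, so $L - L_{\mathrm{typ}}$ is a true mean-deviation and Chebyshev applies verbatim. Second, the identification of the tail-set integral with a probability is immediate because $p_0$ is a density. I would note in passing that the one-sided estimate could be sharpened to the Cantelli bound $\sigma^2/(\sigma^2 + \alpha^2)$, but the symmetric form $\sigma^2/\alpha^2$ is simpler and is exactly what the downstream high-probability error guarantee (Proposition~\ref{prop:likelihood-error-informal}) consumes.
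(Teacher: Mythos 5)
Your proposal is correct and follows essentially the same route as the paper's proof: both reduce the one-sided tail $\{L(x)\ge L_{\mathrm{typ}}+\alpha\}$ to the two-sided event $\{|L(x)-L_{\mathrm{typ}}|\ge\alpha\}$ and then invoke the second-moment bound with Assumption~\ref{ass:nll-variance}. The only cosmetic difference is that you unpack Chebyshev into its Markov-inequality proof applied to $(L-L_{\mathrm{typ}})^2$, whereas the paper cites Chebyshev's inequality directly; the mathematical content is identical.
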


\begin{proof}
Apply Chebyshev's inequality to the centered random variable
$Z(x) := L(x) - L_{\mathrm{typ}}$:
\begin{equation}
  \int_{\{x:\,|Z(x)|\,\ge\,\alpha\}} p_0(x)\,\dd x
  \;\le\;
  \frac{\mathrm{Var}_{p_0}(Z)}{\alpha^2}
  \;\le\;
  \frac{\sigma^2}{\alpha^2}.
\end{equation}
Since the event $\{L(x)\ge L_{\mathrm{typ}}+\alpha\}$ is contained in
$\{|Z(x)|\ge\alpha\}$, the same bound holds for
\eqref{eq:nll-tail}.
\end{proof}

Lemma~\ref{lem:nll-tail-chebyshev} provides a minimal concentration
statement: most training complexes have PF-ODE NLLs within a band of
width $O(\alpha)$ around the typical value $L_{\mathrm{typ}}$, with a
tail probability decaying at least as $1/\alpha^2$.

\subsubsection{Coupling PF-ODE likelihoods to GEMS errors}

We now link the PF-ODE NLL $L(x)$ of the diffusion model to the
prediction error $e_\theta(x)$ of an independent model such as GEMS.
In our experiments (Section~\ref{subsec:gems_correlations}), we observe
a strong empirical relationship between PF-ODE log-likelihoods and
GEMS binding-affinity errors: complexes with low likelihood
(high $L(x)$) tend to exhibit large prediction errors, whereas
high-likelihood complexes have much smaller errors on average.

We capture this relationship by a monotone error envelope:

\begin{assumption}[Monotone NLL-error envelope]
\label{ass:monotone-envelope}
There exists a non-decreasing function
$\phi:\R\to[0,\infty)$ such that, for $p_0$-almost every
complex $x$,
\begin{equation}
  e_\theta(x)
  \;\le\;
  \phi\bigl(L(x)\bigr).
\end{equation}
\end{assumption}

Assumption~\ref{ass:monotone-envelope} states that for any two
complexes $x_1,x_2$ with $L(x_1)\le L(x_2)$ (i.e.\ $x_1$ is at least as
likely as $x_2$ under the diffusion model), the worst-case prediction
error at $x_1$ is no larger than the worst-case error at $x_2$.
The function $\phi$ acts as a \emph{calibration curve} mapping PF-ODE
NLL values to an upper bound on the predictive error.
In practice, $\phi$ can be obtained empirically by fitting a
non-decreasing upper envelope (e.g.\ a scaled and shifted exponential)
to the scatter plot of $(L(x), e_\theta(x))$ on a held-out calibration
set (Figure~\ref{fig:lkhd_error_exponential}). This is in accordance with the further empirical findings explained in Section~\ref{subsec:gems_correlations}; for visualizations, refer to Figures~\ref{fig:lkhd_R2_boxplot} and~\ref{fig:lkhd_error_heatmaps} in the main text.

\begin{theorem}[High-probability error bound from PF-ODE NLL]
\label{thm:likelihood-error}
Suppose Assumptions~\ref{ass:regularity},
\ref{ass:bounded-fisher}, \ref{ass:nll-variance},
and~\ref{ass:monotone-envelope} hold, and that the VE relation
between velocities and scores in Section~\ref{sec:diffusion_model}
is satisfied.
Let $L_{\mathrm{typ}}$ denote the typical NLL under $p_0$, and fix any
margin $\alpha>0$. Then
\begin{equation}
  \int_{\{x:\,e_\theta(x)\,>\,\phi(L_{\mathrm{typ}}+\alpha)\}}
    p_0(x)\,\dd x
  \;\le\;
  \frac{\sigma^2}{\alpha^2}.
  \label{eq:likelihood-error-bound}
\end{equation}
Equivalently,
\begin{equation}
  \mathbb{P}_{x\sim p_0}\Bigl(
    e_\theta(x)
    \;\le\;
    \phi\bigl(L_{\mathrm{typ}}+\alpha\bigr)
  \Bigr)
  \;\ge\;
  1 - \frac{\sigma^2}{\alpha^2}.
\end{equation}
\end{theorem}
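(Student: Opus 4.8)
The plan is to reduce the error-tail event directly to the NLL-tail event that Lemma~\ref{lem:nll-tail-chebyshev} already controls, using the monotone envelope of Assumption~\ref{ass:monotone-envelope} as the only genuinely new ingredient. All of the analytic machinery—the KL/Fisher identities culminating in Proposition~\ref{prop:score-kl}, and the Chebyshev concentration of $L(x)$ around $L_{\mathrm{typ}}$—has already been established, so the theorem collapses to an elementary event-inclusion argument combined with one invocation of the concentration lemma.

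First I would fix $\alpha>0$ and introduce the two events
\[
\mathcal{E}_{\mathrm{err}} := \{x : e_\theta(x) > \phi(L_{\mathrm{typ}}+\alpha)\},
\qquad
\mathcal{E}_{\mathrm{nll}} := \{x : L(x) \ge L_{\mathrm{typ}}+\alpha\}.
\]
The core step is to show the inclusion $\mathcal{E}_{\mathrm{err}} \subseteq \mathcal{E}_{\mathrm{nll}}$ up to a $p_0$-null set. By Assumption~\ref{ass:monotone-envelope}, $e_\theta(x) \le \phi(L(x))$ for $p_0$-almost every $x$; on that full-measure set, any $x \in \mathcal{E}_{\mathrm{err}}$ satisfies $\phi(L(x)) \ge e_\theta(x) > \phi(L_{\mathrm{typ}}+\alpha)$, hence $\phi(L(x)) > \phi(L_{\mathrm{typ}}+\alpha)$.

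Next I would apply monotonicity of $\phi$ in contrapositive form: since $\phi$ is non-decreasing, $L(x) \le L_{\mathrm{typ}}+\alpha$ would force $\phi(L(x)) \le \phi(L_{\mathrm{typ}}+\alpha)$, contradicting the strict inequality just obtained; therefore $L(x) > L_{\mathrm{typ}}+\alpha$, and in particular $x \in \mathcal{E}_{\mathrm{nll}}$. Integrating the indicator of the larger set and invoking Lemma~\ref{lem:nll-tail-chebyshev} then gives
\[
\int_{\mathcal{E}_{\mathrm{err}}} p_0(x)\,\dd x
\;\le\;
\int_{\mathcal{E}_{\mathrm{nll}}} p_0(x)\,\dd x
\;\le\;
\frac{\sigma^2}{\alpha^2},
\]
which is exactly \eqref{eq:likelihood-error-bound}; the equivalent complementary statement follows by passing to complements.

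The remaining hypotheses enter only indirectly: Assumptions~\ref{ass:regularity} and~\ref{ass:bounded-fisher} together with the VE velocity–score relation guarantee, through Proposition~\ref{prop:score-kl}, that $L_{\mathrm{typ}} = H(p_0) + \mathrm{KL}(p_0\|q_0)$ is finite and close to the intrinsic entropy, so the bound is non-vacuous for a well-trained model—but they are not needed for the inclusion itself. I do not expect a substantive obstacle here; the only points requiring mild care are the bookkeeping of the $p_0$-null set on which the envelope may fail, and the passage from the strict inequality $\phi(L(x)) > \phi(L_{\mathrm{typ}}+\alpha)$ to $L(x) \ge L_{\mathrm{typ}}+\alpha$, which relies on monotonicity rather than injectivity of $\phi$ and is therefore robust to flat segments of the calibration curve.
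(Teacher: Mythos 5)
Your proposal is correct and follows essentially the same route as the paper's proof: both reduce the error-tail event to the NLL-tail event via the monotone envelope of Assumption~\ref{ass:monotone-envelope} (you argue the inclusion directly by contraposition, the paper argues on the complement $A_\alpha$ and passes to complements, which is the same logic) and then invoke Lemma~\ref{lem:nll-tail-chebyshev}. Your added care about the $p_0$-null set and the observation that only monotonicity, not injectivity, of $\phi$ is needed are correct refinements of the same argument.
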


\begin{proof}
Consider the event
\begin{equation}
  A_\alpha
  \;:=\;
  \{x:\,L(x) \le L_{\mathrm{typ}} + \alpha\}.
\end{equation}
On $A_\alpha$, Assumption~\ref{ass:monotone-envelope} and monotonicity 
of $\phi$ yield
\begin{equation}
  e_\theta(x)
  \;\le\;
  \phi\bigl(L(x)\bigr)
  \;\le\;
  \phi\bigl(L_{\mathrm{typ}}+\alpha\bigr)
  \qquad\text{for all }x\in A_\alpha.
\end{equation}
Therefore the event
$\{x:\,e_\theta(x) > \phi(L_{\mathrm{typ}}+\alpha)\}$ is contained in
the complement $A_\alpha^c$, and
\begin{equation}
  \int_{\{x:\,e_\theta(x)\,>\,\phi(L_{\mathrm{typ}}+\alpha)\}}
    p_0(x)\,\dd x
  \;\le\;
  \int_{A_\alpha^c} p_0(x)\,\dd x
  =
  \int_{\{x:\,L(x)\,>\,L_{\mathrm{typ}}+\alpha\}} p_0(x)\,\dd x.
\end{equation}
Applying Lemma~\ref{lem:nll-tail-chebyshev} to the right-hand side,
\begin{equation}
  \int_{\{x:\,L(x)\,>\,L_{\mathrm{typ}}+\alpha\}} p_0(x)\,\dd x
  \;\le\;
  \frac{\sigma^2}{\alpha^2},
\end{equation}
which establishes \eqref{eq:likelihood-error-bound}.
\end{proof}

Theorem~\ref{thm:likelihood-error} makes the following statement in
our concrete setting:
\begin{itemize}
  \item The diffusion model is trained on the same structural
  distribution of protein-ligand complexes as GEMS, but using only
  3D geometry and discrete types (no affinities).
  \item If the PF-ODE is well trained (small score loss
  $\mathcal{L}_{\mathrm{score}}(\phi)$) and its density $q_0$ does not
  have pathological NLL variance, then there exists a typical NLL level
  $L_{\mathrm{typ}}$ such that:
  \begin{quote}
    On almost all in-distribution complexes
    (all but a fraction $O(\sigma^2/\alpha^2)$),
    a near-typical PF-ODE NLL
    $L(x)\le L_{\mathrm{typ}}+\alpha$ implies a uniform upper bound
    on the GEMS prediction error:
    $e_\theta(x)\le\phi(L_{\mathrm{typ}}+\alpha)$.
  \end{quote}
  \item Conversely, complexes with unusually high PF-ODE NLL
  $L(x)\gg L_{\mathrm{typ}}$ fall outside this guarantee.  For such
  complexes the theorem does not enforce any small error bound, which is
  consistent with our empirical observation that low-likelihood
  complexes exhibit heavy-tailed GEMS errors
  (Figures~\ref{fig:lkhd_error_heatmaps} and
   \ref{fig:lkhd_error_exponential}).
\end{itemize}
From a practical perspective, PF-ODE log-likelihoods provide a
task-agnostic, generative notion of ``typicality'' for protein-ligand
complexes.  Theorem~\ref{thm:likelihood-error} explains why these
likelihoods can serve as a useful proxy for the reliability of a
separate predictive model such as GEMS: up to smoothness and
concentration assumptions, being high-likelihood under the diffusion
model is a sufficient condition for having small prediction error with
high probability under the training distribution.

\subsection{Comprehensive Error Tables}
To facilitate a rigorous comparison across our diffusion trajectory-based OOD detection and the implemented baselines Rate-In and Embedding Space, we present detailed performance metrics for each approach. These comprehensive error tables provide an overview over the performance on the OOD datasets employed in this work, reporting the mean and standard deviation for AUROC, accuracy, F1-score, specificity, precision, and recall. By evaluating our diffusion trajectory approach alongside the Embedding space and Rate-In baselines, we can assess not only the raw discriminative power (AUROC) but also the practical reliability and classification balance of each method. All results are derived using the balanced bootstrap evaluation strategy to ensure that the reported statistics are resilient to the inherent class imbalances present in the protein-family-based OOD splits.
\begin{samepage}

\begin{table}[h]
\centering
\footnotesize
\begin{tabular}{lcccccc}
\toprule
Dataset & AUROC & Acc & F1 & Spec & Prec & Recall \\
\midrule
\texttt{casf2016}  & $0.554\,{\pm}\,0.019$ & $0.558\,{\pm}\,0.016$ & $0.559\,{\pm}\,0.009$ & $0.557\,{\pm}\,0.032$ & $0.559\,{\pm}\,0.018$ & $0.559\,{\pm}\,0.000$\\
\texttt{1nvq}      & $0.804\,{\pm}\,0.007$ & $0.785\,{\pm}\,0.005$ & $0.803\,{\pm}\,0.004$ & $0.694\,{\pm}\,0.009$ & $0.741\,{\pm}\,0.006$ & $0.876\,{\pm}\,0.000$\\
\texttt{1sqa}      & $0.928\,{\pm}\,0.008$ & $0.830\,{\pm}\,0.007$ & $0.818\,{\pm}\,0.006$ & $0.899\,{\pm}\,0.013$ & $0.883\,{\pm}\,0.013$ & $0.762\,{\pm}\,0.000$\\
\texttt{2p15}      & $0.721\,{\pm}\,0.017$ & $0.585\,{\pm}\,0.010$ & $0.473\,{\pm}\,0.006$ & $0.797\,{\pm}\,0.020$ & $0.649\,{\pm}\,0.022$ & $0.373\,{\pm}\,0.000$\\
\texttt{2vw5}      & $0.659\,{\pm}\,0.033$ & $0.715\,{\pm}\,0.020$ & $0.753\,{\pm}\,0.013$ & $0.564\,{\pm}\,0.039$ & $0.666\,{\pm}\,0.021$ & $0.866\,{\pm}\,0.000$\\
\texttt{3dd0}      & $0.728\,{\pm}\,0.019$ & $0.740\,{\pm}\,0.012$ & $0.764\,{\pm}\,0.008$ & $0.636\,{\pm}\,0.024$ & $0.699\,{\pm}\,0.014$ & $0.844\,{\pm}\,0.000$\\
\texttt{3f3e}      & $0.919\,{\pm}\,0.009$ & $0.824\,{\pm}\,0.007$ & $0.804\,{\pm}\,0.007$ & $0.926\,{\pm}\,0.015$ & $0.907\,{\pm}\,0.017$ & $0.722\,{\pm}\,0.000$\\
\texttt{3o9i}      & $0.922\,{\pm}\,0.010$ & $0.866\,{\pm}\,0.008$ & $0.862\,{\pm}\,0.007$ & $0.895\,{\pm}\,0.016$ & $0.888\,{\pm}\,0.015$ & $0.837\,{\pm}\,0.000$\\
\bottomrule
\end{tabular}
\scriptsize
\caption{\textbf{Embedding space GEMS baseline OOD detection performance across held-out splits:} For each dataset, we report the mean $\pm$ standard deviation of AUROC, accuracy (Acc), F1, and specificity (Spec) over balanced bootstrap test subsets, treating OOD as the positive class. For each split we take the \emph{best} detector by AUROC among Mahalanobis, $k$-NN distance, Local Outlier Factor, Isolation Forest and one-class SVM. The final rows report the mean and standard deviation across datasets for these summary metrics. Per-dataset precision (Prec) and recall of the embedding-space GEMS baseline, using the best detector per dataset selected by AUROC. We report mean $\pm$ standard deviation over balanced bootstrap test subsets, with OOD as the positive class.
}
\label{tab:gems-ood}
\end{table}

\begin{table}[h]
\centering
\footnotesize
\begin{tabular}{lcccccc}
\toprule
Dataset & AUROC & Acc & F1 & Spec & Prec & Recall \\
\midrule
\texttt{casf2016  } & $0.444\,{\pm}\,0.041$   & $0.492\,{\pm}\,0.027$   & $0.605\,{\pm}\,0.100$   & $0.169\,{\pm}\,0.154$   & $0.484\,{\pm}\,0.072$        & $0.815\,{\pm}\,0.172$\\
\texttt{1nvq      } & $0.581\,{\pm}\,0.013$   & $0.584\,{\pm}\,0.009$   & $0.645\,{\pm}\,0.016$   & $0.411\,{\pm}\,0.045$   & $0.562\,{\pm}\,0.008$        & $0.757\,{\pm}\,0.045$\\
\texttt{1sqa      } & $0.664\,{\pm}\,0.024$   & $0.642\,{\pm}\,0.021$   & $0.667\,{\pm}\,0.021$   & $0.565\,{\pm}\,0.050$   & $0.624\,{\pm}\,0.021$        & $0.718\,{\pm}\,0.042$\\
\texttt{2p15      } & $0.541\,{\pm}\,0.039$   & $0.545\,{\pm}\,0.030$   & $0.616\,{\pm}\,0.042$   & $0.350\,{\pm}\,0.116$   & $0.534\,{\pm}\,0.024$        & $0.740\,{\pm}\,0.111$\\
\texttt{2vw5      } & $0.439\,{\pm}\,0.047$   & $0.487\,{\pm}\,0.024$   & $0.586\,{\pm}\,0.135$   & $0.180\,{\pm}\,0.221$   & $0.470\,{\pm}\,0.098$        & $0.795\,{\pm}\,0.233$\\
\texttt{3dd0      } & $0.337\,{\pm}\,0.029$   & $0.498\,{\pm}\,0.005$   & $0.238\,{\pm}\,0.317$   & $0.641\,{\pm}\,0.478$   & $0.179\,{\pm}\,0.239$        & $0.354\,{\pm}\,0.472$\\
\texttt{3f3e      } & $0.741\,{\pm}\,0.032$   & $0.690\,{\pm}\,0.030$   & $0.649\,{\pm}\,0.034$   & $0.806\,{\pm}\,0.054$   & $0.750\,{\pm}\,0.047$        & $0.574\,{\pm}\,0.046$\\
\texttt{3o9i      } & $0.859\,{\pm}\,0.024$   & $0.828\,{\pm}\,0.022$   & $0.838\,{\pm}\,0.021$   & $0.764\,{\pm}\,0.038$   & $0.791\,{\pm}\,0.027$        & $0.892\,{\pm}\,0.029$\\
\texttt{validation} & $0.503\,{\pm}\,0.022$   & $0.501\,{\pm}\,0.015$   & $0.533\,{\pm}\,0.131$   & $0.365\,{\pm}\,0.250$   & $0.495\,{\pm}\,0.053$        & $0.638\,{\pm}\,0.254$\\

\bottomrule
\end{tabular}
\scriptsize
\caption{\textbf{Rate-In OOD detection performance across held-out splits:} For each OOD dataset, we report the mean $\pm$ standard deviation of AUROC, accuracy (Acc), F1, and specificity (Spec) over 100 bootstrap iterations with 70/30 train/test splits, treating OOD as the positive class. The final rows report the mean and standard deviation across datasets for these summary metrics. For each OOD dataset, we report precision and recall over 100 bootstrap iterations.}
\label{tab:ratein-ood}
\end{table}

\begin{table}[h]
\centering
\footnotesize
\begin{tabular}{lcccccc}
\toprule
Dataset & AUROC & Acc & F1 & Spec & Prec & Recall \\
\midrule
\texttt{casf2016}  & $0.708\,{\pm}\,0.042$ & $0.604\,{\pm}\,0.029$ & $0.449\,{\pm}\,0.018$ & $0.887\,{\pm}\,0.058$ & $0.753\,{\pm}\,0.100$ & $0.321\,{\pm}\,0.000$\\
\texttt{1nvq}      & $0.842\,{\pm}\,0.013$ & $0.744\,{\pm}\,0.011$ & $0.736\,{\pm}\,0.009$ & $0.776\,{\pm}\,0.023$ & $0.761\,{\pm}\,0.018$ & $0.712\,{\pm}\,0.000$\\
\texttt{1sqa}      & $0.888\,{\pm}\,0.013$ & $0.787\,{\pm}\,0.014$ & $0.758\,{\pm}\,0.012$ & $0.906\,{\pm}\,0.027$ & $0.878\,{\pm}\,0.031$ & $0.667\,{\pm}\,0.000$\\
\texttt{2p15}      & $0.754\,{\pm}\,0.034$ & $0.693\,{\pm}\,0.027$ & $0.680\,{\pm}\,0.019$ & $0.734\,{\pm}\,0.054$ & $0.713\,{\pm}\,0.043$ & $0.651\,{\pm}\,0.000$\\
\texttt{2vw5}      & $0.939\,{\pm}\,0.033$ & $0.888\,{\pm}\,0.028$ & $0.886\,{\pm}\,0.025$ & $0.912\,{\pm}\,0.057$ & $0.911\,{\pm}\,0.053$ & $0.864\,{\pm}\,0.000$\\
\texttt{3dd0}      & $0.951\,{\pm}\,0.009$ & $0.956\,{\pm}\,0.009$ & $0.955\,{\pm}\,0.009$ & $0.976\,{\pm}\,0.019$ & $0.976\,{\pm}\,0.019$ & $0.936\,{\pm}\,0.000$\\
\texttt{3f3e}      & $0.639\,{\pm}\,0.061$ & $0.598\,{\pm}\,0.035$ & $0.429\,{\pm}\,0.021$ & $0.897\,{\pm}\,0.070$ & $0.765\,{\pm}\,0.125$ & $0.300\,{\pm}\,0.000$\\
\texttt{3o9i}      & $0.963\,{\pm}\,0.006$ & $0.949\,{\pm}\,0.011$ & $0.948\,{\pm}\,0.011$ & $0.986\,{\pm}\,0.022$ & $0.985\,{\pm}\,0.023$ & $0.913\,{\pm}\,0.000$\\

\bottomrule
\end{tabular}
\scriptsize
\caption{
\textbf{Performance of the diffusion trajectory LDR OOD detector across held-out splits:} For each OOD dataset, we report the mean $\pm$ standard deviation of AUROC, accuracy (Acc), F1, specificity (Spec), Precision (Prec) and recall over balanced bootstrap test subsets, treating OOD as the positive class.}
\label{tab:traj-ldr-ood}
\end{table}
\end{samepage}


\end{document}